\PassOptionsToPackage{numbers, compress}{natbib}
\documentclass{article}

\usepackage[main, final]{neurips_2026}

\usepackage[utf8]{inputenc} 
\usepackage[T1]{fontenc}    
\usepackage{hyperref}       
\usepackage{url}            
\usepackage{booktabs}       
\usepackage{amsfonts}       
\usepackage{nicefrac}       
\usepackage{microtype}      
\usepackage{xcolor}

\usepackage{tikz}
\usetikzlibrary{fit,shapes.geometric} 
\usetikzlibrary{automata, positioning, arrows.meta}
\usepackage{nicematrix}
\usepackage{graphicx}
\usepackage{multirow}
\usepackage{amsmath,amssymb,amsfonts}
\usepackage{amsthm}
\usepackage{mathrsfs}
\usepackage{xcolor}
\usepackage{textcomp}
\usepackage{manyfoot}
\usepackage{booktabs}
\usepackage[ruled,vlined,linesnumbered]{algorithm2e}
\usepackage{algpseudocode}
\usepackage{listings}
\usepackage{bbm}
\usepackage{enumitem}
\usepackage{caption}
\usepackage{tcolorbox}
\newcommand{\bulletbullet}{\begin{array}{c}
\bullet\\
\bullet
\end{array}}
\newcommand{\bulletcirc}{\begin{array}{c}
\bullet\\
\circ
\end{array}}
\newcommand{\circcirc}{\begin{array}{c}
\circ\\
\circ
\end{array}}
\newcommand{\circbullet}{\begin{array}{c}
\circ\\
\bullet
\end{array}}
\usepackage{hyperref}

\newcommand{\indic}[1]{\mathbbm{1}_{\left\{#1\right\}}}
\newcommand{\indicS}[1]{\mathbbm{1}_{{#1}}}

\newcommand{\Prob}[1]{\mathbb{P}\left(#1\right)}
\newcommand{\Expt}[1]{\mathbb{E}\left(#1\right)}
\newcommand{\ExptWR}[2]{\mathbb{E}_{#1}\left(#2\right)}
\newcommand{\Var}[1]{\mathrm{Var}\left(#1\right)}
\newcommand{\VarWR}[2]{\mathrm{Var}_{#1}\left(#2\right)}
\newcommand{\CovWR}[3]{\mathrm{Cov}_{#1}\left(#2, #3\right)}

\newcommand{\aSet}[1]{\mathcal{#1}}

\newcommand{\Card}[1]{\left| #1 \right|}

\newcommand{\aij}[2]{a_{#1#2}}
\newcommand{\Jij}[2]{J_{#1#2}}

\newtheorem{definition}{Definition}
\newtheorem{theorem}{Theorem}
\newtheorem{lemma}{Lemma}
\newtheorem{corollary}{Corollary}
\newtheorem{proposition}{Proposition}

\newcommand{\locvecO}{\widetilde{O}}
\newcommand{\locvecx}{\widetilde{\times}}
\newcommand{\locvecm}{\widetilde{-}}

\newcommand{\MHEst}{\theta^{(\mathrm{MH})}}
\newcommand{\CMHEst}{\theta^{(\mathrm{CMH})}}

\newcommand{\nck}[2]{\left(\begin{array}{c}
#1 \\
#2
\end{array}\right)}

\title{Understanding Circulant Permutation to Extend the CMinHash Estimator}

\author{Keegan Kang  \\
  Department of Mathematics and Statistics\\
  Bucknell University\\
  Lewisburg, PA 17837 \\
  \texttt{kk054@bucknell.edu} \\
   \And
  Avery Hood \\
  Department of Mathematics and Statistics\\
  Bucknell University\\
  Lewisburg, PA 17837 \\
  \texttt{AveryHood1013@gmail.com} \\
   \AND
  Benedict H.W. Wong \\
  Institute for Infocomm Research (I$^2$R)\\
  A*STAR\\
  Singapore, 138632 \\
  \texttt{Benedict\_Wong@a-star.edu.sg}
}

\begin{document}

\maketitle

\begin{abstract}
CMinHash promises a new direction for designing MinHash algorithms with reduced variance and storage space via circulant permutations. However, the original variance analysis is difficult to extend to other MinHash estimators. We present a new approach that gives explicit and easily comparable approximate variance expressions, and use it to analyze the Minner estimator. Numerical experiments show that our variance expression matches the empirical mean square error (MSE) of estimates. Our work thereby demonstrates how to apply circulant permutation to other MinHash estimators and compute their approximate variance.
\end{abstract}

\section{Introduction}\label{sec1}

Consider two unweighted sets $\aSet{S}_1, \aSet{S}_2 \subseteq  \Omega$, where $\Omega = \{1,2, \hdots, D\}$. Let $\Card{\aSet{S}_1} = f_1$, $\Card{\aSet{S}_2} = f_2$, and $a = \Card{\aSet{S}_1 \cap \aSet{S}_2}$, and $f = \Card{\aSet{S}_1 \cup \aSet{S}_2}$. A common measure of similarity between these two sets is the Jaccard similarity, given by $J = \frac{\Card{\aSet{S}_1\cap \aSet{S}_2}}{\Card{\aSet{S}_1\cup \aSet{S}_2}} = \frac{a}{f_1+f_2-a}$. As $D$ is usually large, directly computing the pairwise similarity between these sets is not computationally feasible, and the task is to instead estimate the Jaccard similarity of these sets via hashing algorithms. Measuring this similarity is useful in many applications, even in non statistical (or computer science) disciplines \citep{ondov2016mash,rahman2025estimating,tseng2021parallel}. The traditional MinHash algorithm \citep{broder1997resemblance} uses random permutations to estimate $J$ between any two sets, and since then there has been a large body of work on estimating $J$, which we divide into three categories. 

{\bf New MinHash Approaches:} These are examples of MinHash algorithms that describe a novel (at the time) approach to MinHash. For example, {\it k-min sketches / Conditional Random Sampling} records more information with one random permutation  \citep{cohen2007summarizing,Church06conditionalrandom}, {\it b-bit minwise hashing} which stores the last $b$ bits of information \citep{li2010b, li2011theory}, {\it MinMaxHash} uses the ``other end" of the random permutation \citep{ji2013min}, {\it One Permutation Hashing} uses just one permutation \citep{li2012one,shrivastava2017optimal,shrivastava2014densifying}, the {\it Minner estimator} and records more data for asymmetric hashing \citep{ahleminner}. These contributions usually involve some kind of trade-off (e.g. data is required to be sparse for a better estimate, or greater computational time for reduced storage space), i.e. these methods do not work uniformly well for {\it all} datasets.

{\bf ``Framework" that complements existing MinHash methods:} These are techniques that can be applied to an existing algorithm to always improve its performance. The technique of {\it control variates and maximum likelihood estimation} can be applied to MinHash (and other similar algorithms) to always reduce their variance \citep{kang2021improving,li2011accurate}, while {\it CMinHash} shows that ``circulantly" moving rightwards across a permutation always reduces variance of estimates \citep{li2021cc,li2021c}.

{\bf Theoretical work which gives insight on the structure of MinHash:} For example, \cite{ahle2020subsets} gives a concise theorem on query time, space usage, and update time of similarity search algorithms, \cite{li2022minhash} explains how to construct a MinHash type estimator across continuous probability distributions.

Our work belongs to the last two categories, where we re-examine the approach taken in the CMinHash paper \citep{li2021c}, and propose a way to extend circulant permutations to other MinHash estimators. 

\begin{figure*}[h]
\begin{center}
\begin{align*}
    \begin{array}{c | c c c c c c c c c c c c c} 
    \text{\bf Permutation} & 6 & 1 & 7 & 4 & 5 & 8 & 3  & 2 \\ \hline
      \text{Set 1} & \bullet & \circ & \circ & \bullet &  \circ & {\bullet} & \circ & \boxed{\bullet} \\
      \text{Set 2} & \circ & \boxed{\bullet} & \bullet & \bullet  & \circ & \circ & {\bullet}  & \bullet\\
    \end{array}~~
    \begin{array}{c | c c c c c c c c c c c c c} 
    \text{\bf Draw Order} & 1 & 2 & 3 & 4 & 5 & 6 & 7 & 8  \\ \hline
      \text{Set 1} & \circ & \boxed{\bullet} & \circ & \bullet & \circ & \bullet & \circ & \bullet  \\
      \text{Set 2} &  \boxed{\bullet} & \bullet & \bullet & \bullet & \circ & \circ & \bullet & \circ \\
    \end{array}
\end{align*}
\end{center}
\caption{We have $\Card{\aSet{S}_1} = 4$, $\Card{\aSet{S}_2} = 5$. By treating a {\it random permutation} (left) as the indices where we {\it sample without replacement} (right), $\min\{\pi(\aSet{S}_1)\} = 2$, $\min\{\pi(\aSet{S}_2)\} = 1$, which is just the index where the first $\bullet$ appears.}\label{fig1eg}
\end{figure*}

\subsection{Related Work}

Without loss of generality, let $\aSet{S}_1, \aSet{S}_2 \subseteq  \Omega$, and we want to estimate the {\it Jaccard similarity} $J$ between these sets. These sets can be represented as $D$ sequential objects consisting of one of four types: $\left\{\bulletbullet, \bulletcirc, \circbullet, \circcirc   \right\}$, where the $\bullet$ in the corresponding row denotes an element of that set. 
\begin{theorem}\label{OGMH}
(Broder's MinHash) Suppose we have $K$ independent random permutations $\pi_1, \hdots, \pi_K$ on $[D]$. Let $\MHEst = \sum_{t=1}^K\indic{\min\{\pi_t(\aSet{S}_1)\} = \min\{\pi_t(\aSet{S}_2)\}}/K$. Then $\Expt{\MHEst} = J$, with $\Var{\MHEst} = J(1-J)/K$.
\end{theorem}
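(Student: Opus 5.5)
The plan is to reduce everything to a single permutation and then use independence across the $K$ permutations. Fix one uniformly random permutation $\pi$ of $[D]$, and let $X = \indic{\min\{\pi(\aSet{S}_1)\} = \min\{\pi(\aSet{S}_2)\}}$. Since $\MHEst$ is the average of $K$ i.i.d.\ copies of $X$, linearity of expectation gives $\Expt{\MHEst} = \Expt{X}$. Independence of the permutations gives $\Var{\MHEst} = \Var{X}/K$. So it suffices to show that $X$ is a Bernoulli variable with success probability $J$. Then $\Expt{X} = J$ and $\Var{X} = \Expt{X^2} - \Expt{X}^2 = J - J^2 = J(1-J)$, using $X^2 = X$ for an indicator.

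The key step is computing $\Prob{X=1}$. As in Figure~\ref{fig1eg}, I would view $\pi$ as a uniformly random draw order without replacement of the $D$ columns. Columns of type $\circcirc$ never affect either minimum, so only the $f = \Card{\aSet{S}_1 \cup \aSet{S}_2}$ columns of types $\bulletbullet$, $\bulletcirc$ and $\circbullet$ matter.

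Let $x^\ast \in \aSet{S}_1 \cup \aSet{S}_2$ be the element with the smallest value of $\pi$. The event $X=1$ occurs exactly when $x^\ast \in \aSet{S}_1 \cap \aSet{S}_2$, i.e.\ the first non-$\circcirc$ column is of type $\bulletbullet$:
\begin{itemize}
\item if $x^\ast$ lies in both sets, it attains both minima, so they coincide;
\item if $x^\ast$ lies in only one set, that set's minimum is $\pi(x^\ast)$, which is strictly smaller than every value of $\pi$ on the other set, since $\pi$ is injective.
\end{itemize}

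A uniform random permutation restricted to any fixed subset induces a uniform random ordering of that subset. Hence $x^\ast$ is uniformly distributed over $\aSet{S}_1 \cup \aSet{S}_2$. This gives $\Prob{X=1} = a/f = a/(f_1+f_2-a) = J$.

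The only step needing any care is this symmetry claim, that the minimizer over the union is uniform. I would justify it by noting that for any two elements $u, v$ of the union, composing $\pi$ with the transposition swapping $u$ and $v$ is a measure-preserving bijection on permutations. It maps the event ``$u$ is the minimizer'' onto the event ``$v$ is the minimizer'', so all $f$ elements are equally likely. Everything else is routine Bernoulli arithmetic. Independence across $t$ is used only to make the covariance terms in $\Var{\sum_t X_t}$ vanish.
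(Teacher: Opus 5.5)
Your proof is correct and takes essentially the paper's approach. The paper cites this result from Broder rather than proving it, but the surrounding text uses the same reasoning: view $\pi$ as a draw order without replacement, so each permutation is a Bernoulli trial whose success is drawing a $\bulletbullet$ (an element of $\aSet{F}_{+}$) before any $\bulletcirc$ or $\circbullet$ (an element of $\aSet{F}_{-}$), which happens with probability $a/f = J$ regardless of the $\circcirc$ columns, and independence across the $K$ trials gives $J(1-J)/K$; your transposition argument simply makes the paper's marble analogy rigorous.
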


A {\it random permutation} $\pi$ on $[D]$ can be thought of as a sequence of draws under {\it sampling without replacement}, where $\min\{\pi(\aSet{S}_1)\}$ is the draw where the first $\bullet$ appears in the first row, and $\min\{\pi(\aSet{S}_2)\}$  as the draw where the first $\bullet$ appears in the second row.  Figure~\ref{fig1eg} shows an example where $D = 8$. Let the set $\aSet{F}_{+} = \left\{\bulletbullet\right\}$, and $\aSet{F}_{-} = \left\{\circbullet, \bulletcirc \right\}$. Each permutation in Broder's MinHash is equivalent to a trial where a ``success" is drawing an element in $\aSet{F}_{+}$ before an element in $\aSet{F}_{-}$, and an estimator for $J$ is found by finding the probability that an element from  $\aSet{F}_{+}$ {\bf is seen before an element in} $\aSet{F}_{-}$. Hence by Theorem~\ref{OGMH}, we get an estimate for $J$ by recording the proportion of ``successes" for $K$ independent random permutations (trials), $1 \leq t \leq K$.

More generally, other MinHash algorithms can be represented by different $D$ sequential objects, $\aSet{F}_{+}$ and $\aSet{F}_{-}$, and an estimator for $J$ can be found by finding the probability that $n_1$ elements from $\aSet{F}_{+}$ are seen before $n_2$ elements in $\aSet{F}_{-}$ (we give an example in Section~\ref{sec4}).

\begin{figure*}[h]
\begin{center}
\begin{align*}
    \begin{array}{c | c c c c c c c c c c c c c} 
\text{\bf Permutation} & 3 & 2 & 6 & 1 & 7 & 4 & 5 & 8  \\ \hline
      \text{Set 1} & \bullet & \circ & \circ & \boxed{\bullet} &  \circ & {\bullet} & \circ & {\bullet} \\
      \text{Set 2} & \circ & {\bullet} & \bullet & \boxed{\bullet}  & \circ & \circ & {\bullet}  & \bullet\\
    \end{array}~~~~~
    \begin{array}{c | c c c c c c c c c c c c c} 
    \text{\bf Draw Order} & 1 & 2 & 3 & 4 & 5 & 6 & 7 & 8  \\ \hline
      \text{Set 1} & \boxed{\bullet} & \circ & \bullet & \bullet & \circ & \circ & \circ & \bullet  \\
      \text{Set 2} & \boxed{\bullet} & \bullet & \circ & \circ & \bullet & \bullet & \circ & \bullet \\
    \end{array}
\end{align*}
\end{center}
\caption{The permutation $\pi$ in Figure~\ref{fig2eg} is shifted rightward from Figure~\ref{fig1eg} by 2 units.  Then $\min\{\pi_{\rightarrow 2}(\aSet{S}_1)\} = \min\{\pi_{\rightarrow 2}(\aSet{S}_2)\} = 1$.}\label{fig2eg}
\end{figure*}

CMinHash extends Broder's MinHash by reducing the $K$ independent permutations needed to two independent permutations, given by $\sigma, \pi$, and we write this as CMinHash$(\sigma,\pi)$ following the notation in \cite{li2021c}. First, the indices in the sets $\aSet{S}_i$ are permuted  by the random permutation $\sigma$. Second, the random permutation $\pi$ is generated, and is re-used $K$ times by shifting $\pi$ circulantly rightwards by 1 unit $K$ times. More formally, let $\pi_{\rightarrow t}$ denote shifting the permutation $t$ units to the right. Figure~\ref{fig2eg} gives an example of shifting the permutation in Figure~\ref{fig1eg} rightwards by 2 units. By linearity of expectations, $\ExptWR{\sigma,\pi}{\sum_{t=1}^K\indic{\min\{\pi_{\rightarrow t}(\aSet{S}_1)\} = \min\{\pi_{\rightarrow t}(\aSet{S}_2)\}}/K} = J$, hence recording the proportion of ``successes" under $K$ shifts gives an estimate of $J$. A less compact version is on Page~\pageref{CMHAlgApp} in Appendix~\ref{appendixA1ALGO}. However, the variance of this estimate is slightly complicated, although lower than Broder's MinHash (see Page~\pageref{lithm31long} in Appendix~\ref{appendixA2SPAM} for  expression). 

\subsection{Our Contributions And Paper Organization}

CMinHash($\sigma, \pi$) promises a new direction to design hashing algorithms using the 2 permutations ($\sigma, \pi$) and $K$ circulant shifts of $\pi$ (saving on storage space), but having a smaller variance than $K$ independent permutations under traditional MinHash (and similar algorithms). 

While the algorithm behind CMinHash($\sigma, \pi$) is straightforward to implement (see Algorithm~\ref{CMHAlgApp} in Appendix~\ref{appendixA1ALGO}), the theoretical analysis of the variance forms a substantial bulk of the original paper. 

Moreover, the approach the original paper adopts in terms of random permutations makes the proofs of their Lemma 2.4 (used to compute expectations under $\sigma$) and Theorem 3.1 (used to compute the variance of the estimates under $\sigma$ and $\pi$) difficult to replicate in other settings due to the random permutation paradigm and combinatorial calculations. Moreover, the result of Theorem 3.1 (variance of the CMinHash estimator) in \cite{li2021c} gives the variance as a combinatorial sum under constraints. Appendix~\ref{appendixA2SPAM} notes that one equality used in their proof does not hold in general. This makes it difficult to theoretically compare the CMinHash variance against variance of other algorithms, since Theorem 3.4 and Proposition 3.6 show that the variance (given by the combinatorial sum) is only lower than Broder's MinHash estimate, and says nothing about other MinHash algorithms. The full statements of relevant lemmas, theorems, and propositions in \cite{li2021c} can be found in Appendix~\ref{appendixA2SPAM}.

Hence, researchers using a circulant permutation to improve other MinHash estimators would need to independently show that using a circulant permutation theoretically gives a lower variance compared to different MinHash algorithms, {\it which is non-trivial if following similar approaches as in the original paper}. Secondly, researchers may also want to compare the performance of two algorithms both using a  circulant permutation framework. This is difficult to do theoretically, as the circulant permutation variance is given as a combinatorial sum under constraints if using the approach in \cite{li2021c}.

While simulations can be run to compare the empirical performances of a proposed circulant permutation estimator against baselines and sidestep theory, there is no guarantee that a proposed estimator always has reduced variance in all simulated cases, since estimators can have different performance when datasets vary (e.g. sparsity of dataset).  This leads us to give the following contributions.

{\bf Contribution One:} We give a new proof of Lemma 2.4 in \cite{li2021c}, using a different paradigm in Section~\ref{sec2}. This allows {\bf simpler computation of expectations under the permutation $\sigma$}. This approach can be applied to other MinHash estimators, and we give an example in Section~\ref{sec4}.

{\bf Contribution Two:} We derive a simpler expression for the variance in Theorem 3.1 in \cite{li2021c} under the assumption that $D \rightarrow \infty$, which is common  (e.g. see \cite{li2011theory,li2012one,shrivastava2014densifying}). Our result does not require the equality discussed in Appendix~\ref{appendixA2SPAM}, and is slightly stronger in this respect. This allows {\bf easier analysis and comparison of the variance under the permutations $\sigma$ and $\pi$}. More specifically, this gives a {\bf simpler variance expression which is easier to analyze (and compare)}
with a remainder of order $f/D^2$ when $f/D\rightarrow0$, and of order $D^{-2}$ when $f/D$ is bounded away from zero, which we explain in Section~\ref{sec2}. We remark that this gives approximately accurate variances even for small $D = 128$. This approach can also be applied to other MinHash estimators to get variance expressions for the corresponding circulant permutation estimator, with an example in Section~\ref{sec4}.

{\bf Contribution Three:} We show how our approach can be used to {\bf analyze MinHash estimators based on circulant permutation}, which we demonstrate on an asymmetric hashing setup with the Minner estimator. We pick the Minner estimator for two reasons: a) to show how to do variance analysis when the sets $\mathcal{F}_+$ and $\mathcal{F}_-$ are different, and b) how to extend the circulant permutation setup with counts, i.e.  finding the expected number of elements seen in  $\aSet{F}_{-}$ before an element in $\aSet{F}_{+}$.

Overall, our paper {\bf provides a direct way to compute expectations and variance under circulant permutation, making it easier to analyze and extend MinHash estimators.}

Section~\ref{sec2} (and \ref{appendixA1ALGO}-\ref{appendixA2SPAM}) give a broader overview of CMinHash. Section~\ref{sec4} applies our approach to the Minner estimator involving counts of events. Section~\ref{sec5} discusses the importance of our results.

\section{CMinHash in Detail}\label{sec2}

The sets $\aSet{S}_1, \aSet{S}_2 \subseteq \{1,2,\hdots,D\}$ described on Page \pageref{sec1} can be visualized as $D$ objects in sequential order (see Figure~\ref{fig1eg}). Let $a = \#$ of $\bulletbullet$, $f-a$ be the $\#$ of $\circbullet$ and $\bulletcirc$, and $D-f$ be the $\#$ of $\circcirc$. 

\begin{definition}
Definition 2.1 in \cite{li2021c} (modified). Define the location vector $\vec{x}$, where  $x_i \in \{\locvecO, \locvecx, \locvecm\}$. $x_i$ corresponds to the type of the $i^{\text{th}}$ object, which is a $\locvecO$ if $\bulletbullet$, $\locvecx$ if $\circbullet$ or $\bulletcirc$, and $\locvecm$ if $\circcirc$.
\end{definition}

Let $\CMHEst := \sum_{t=1}^K\indic{\min\{\pi_{\rightarrow t}(\aSet{S}_1)\} = \min\{\pi_{\rightarrow t}(\aSet{S}_2)\}}/K$ which is the CMinHash estimator. We write $\indicS{t} \equiv \indic{\min\{\pi_{\rightarrow t}(\aSet{S}_1)\} = \min\{\pi_{\rightarrow t}(\aSet{S}_2)\}}$ for compactness. $\Expt{\CMHEst}$ is $J$ due to linearity of expectations (equal probability of $J$ regardless of starting point after applying $\pi$), but computing $\Var{\CMHEst}$ requires knowing $\CovWR{\sigma,\pi}{\indicS{s}}{\indicS{t}}$, $1 \leq s, t \leq K$, as $\indicS{s}, \indicS{t}$ are correlated. 

\begin{definition}\label{defn22}
Definition 2.2 in \cite{li2021c}. For $A,B \in \{\locvecO, \locvecx, \locvecm\}$, let $\{(A,B)~|~\Delta\}$ be the set $\{(i,j): (x_i,x_j) = (A,B), j-i = \Delta\}$. For $1 \leq \Delta \leq K-1$, define
\begin{align*}
\hspace*{-0.2cm}
\begin{array}{c c c}
\mathcal{L}_0(\Delta) = \{(\locvecO,\locvecO)|\Delta\}, & \hspace*{-0.1cm} \mathcal{L}_1(\Delta) = \{(\locvecO,\locvecx)|\Delta\}, &  \hspace*{-0.1cm} \mathcal{L}_2(\Delta) = \{(\locvecO,\locvecm)|\Delta\}, \\
\mathcal{G}_0(\Delta) = \{(\locvecm,\locvecO)|\Delta\}, &  \hspace*{-0.1cm} \mathcal{G}_1(\Delta) = \{(\locvecm,\locvecx)|\Delta\}, &  \hspace*{-0.1cm} \mathcal{G}_2(\Delta) = \{(\locvecm,\locvecm)|\Delta\}, \\
\mathcal{H}_0(\Delta) = \{(\locvecx,\locvecO)|\Delta\}, &  \hspace*{-0.1cm} \mathcal{H}_1(\Delta) = \{(\locvecx,\locvecx)|\Delta\}, &  \hspace*{-0.1cm} \mathcal{H}_2(\Delta) = \{(\locvecx,\locvecm)|\Delta\},
\end{array}
\end{align*}
with the following constraints for all $1 \leq \Delta \leq K-1$,
\begin{align*}
|\mathcal{L}_0| + |\mathcal{L}_1| + |\mathcal{L}_2| = |\mathcal{L}_0| + |\mathcal{G}_0| + |\mathcal{H}_0| &= a,  \\
|\mathcal{G}_0| + |\mathcal{G}_1| + |\mathcal{G}_2| = |\mathcal{L}_2| + |\mathcal{G}_2| + |\mathcal{H}_2| &= D - f, \\
|\mathcal{H}_0| + |\mathcal{H}_1| + |\mathcal{H}_2| = |\mathcal{L}_1| + |\mathcal{G}_1| + |\mathcal{H}_1| &= f-a. 
\end{align*}
\end{definition}

The nine tuples $(A,B)$ in Definition~\ref{defn22} correspond to symbols at $x_i, x_{i+\Delta}$ in the location vector $\vec{x}$.

\newcommand{\lilemma}{Lemma 2.4 in \cite{li2021c}. For any $1 \leq s < t \leq K$, with $t - s = \Delta$, then $\mathbb{E}_\pi\left(\indicS{s}\indicS{s+\Delta}\right) = \frac{\Card{\mathcal{L}_0(\Delta)} + (\Card{\mathcal{G}_0(\Delta)} + \Card{\mathcal{L}_2(\Delta)}    )J  }{f + \Card{\mathcal{G}_0(\Delta)} + \Card{\mathcal{G}_1(\Delta)}}$.}
\begin{lemma}\label{lemma24}
\lilemma
\end{lemma}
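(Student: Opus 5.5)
We fix the outer permutation $\sigma$, so that the location vector $\vec{x}$ is deterministic, and take the expectation over $\pi$ alone. We use the sampling-without-replacement view of a random permutation from Figure~\ref{fig1eg}. The plan is to recast the joint event $\indicS{s}\indicS{s+\Delta}=1$ as a statement about a single uniformly random ordering of $D$ ``slots''. Each slot carries a \emph{pair} of types.

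\textbf{Step 1: reduce two shifts to one ordering of pairs.} Under $\pi_{\rightarrow s}$, the draw order visits the objects in some uniformly random order. Under $\pi_{\rightarrow s+\Delta}$, the same draw order is used, except that every draw lands on the object $\Delta$ positions further along, taken circularly. So the draw that hits object $i$ under shift $s$ hits object $i+\Delta$ under shift $s+\Delta$. We therefore attach to each index $i$ the pair $(x_i,x_{i+\Delta})$, which belongs to exactly one of the nine classes of Definition~\ref{defn22}. Scanning the pairs in a uniformly random order, we need two conditions:
\begin{itemize}
\item $\indicS{s}=1$ iff the first pair whose first coordinate is not $\locvecm$ has first coordinate $\locvecO$;
\item $\indicS{s+\Delta}=1$ iff the first pair whose second coordinate is not $\locvecm$ has second coordinate $\locvecO$.
\end{itemize}
The step needing the most care is this reduction: we must check that the shift direction and the circular wrap-around make $(i,i+\Delta)$ the correct pairing, consistent with how Definition~\ref{defn22} counts $j-i=\Delta$. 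Once this is verified, the rest is elementary.

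\textbf{Step 2: condition on the first relevant pair.} Pairs of type $(\locvecm,\locvecm)$, i.e.\ those in $\mathcal{G}_2$, never affect either event, so we discard them. By the constraints in Definition~\ref{defn22}, the remaining pairs number
\[
D-|\mathcal{G}_2| = a+(f-a)+|\mathcal{G}_0|+|\mathcal{G}_1| = f+|\mathcal{G}_0|+|\mathcal{G}_1|.
\]
This is the denominator in Lemma~\ref{lemma24}. The first of these remaining pairs in the random order is uniformly distributed among them. We split on its class:
\begin{itemize}
\item $\mathcal{L}_0$: both indicators equal $1$.
\item Any class with a $\locvecx$ in some coordinate ($\mathcal{L}_1,\mathcal{H}_0,\mathcal{H}_1,\mathcal{H}_2,\mathcal{G}_1$): at least one indicator is $0$.
\item $\mathcal{L}_2$, i.e.\ $(\locvecO,\locvecm)$: $\indicS{s}=1$, and $\indicS{s+\Delta}$ is still undecided.
\item $\mathcal{G}_0$, i.e.\ $(\locvecm,\locvecO)$: $\indicS{s+\Delta}=1$, and $\indicS{s}$ is still undecided.
\end{itemize}

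\textbf{Step 3: the undecided cases give $J$.} Suppose the first pair is in $\mathcal{L}_2$. Conditional on this, the order of the remaining $D-1$ pairs is still uniformly random. Since the chosen pair has second coordinate $\locvecm$, every pair with second coordinate not equal to $\locvecm$ remains. There are $f$ such pairs, and $a$ of them have second coordinate $\locvecO$. Hence the conditional probability that $\indicS{s+\Delta}=1$ is $a/f=J$. The $\mathcal{G}_0$ case is symmetric, using first coordinates. Summing over the cases gives
\[
\frac{|\mathcal{L}_0|+(|\mathcal{G}_0|+|\mathcal{L}_2|)J}{f+|\mathcal{G}_0|+|\mathcal{G}_1|},
\]
which is the claimed expression.
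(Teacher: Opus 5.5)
Your proof is correct, but it uses a different decomposition from the paper's. The paper follows Li's argument and conditions on $\indicS{s}$. It splits on which class, $\mathcal{L}_1$, $\mathcal{L}_0$ or $\mathcal{L}_2$, supplies the first $\locvecO$ in the \emph{first} coordinate, giving five subcases $p_1,\dots,p_5$. Each subcase is a product of two or three sequential ``drawn before'' probabilities, for example $p_3=\frac{|\mathcal{G}_0||\mathcal{G}_1||\mathcal{L}_0|}{(f+|\mathcal{G}_0|+|\mathcal{G}_1|)(f+|\mathcal{G}_1|)f}$. Some algebra is then needed before the stated formula appears, e.g.\ $p_2+p_3=\frac{|\mathcal{L}_0|(f+|\mathcal{G}_0|)}{(f+|\mathcal{G}_0|+|\mathcal{G}_1|)f}$.

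You instead condition on the first non-$(\locvecm,\locvecm)$ pair overall, which is a coarser split. The paper's Cases 1, 2b and 3a, plus part of 2a, all fall inside your single ``$\mathcal{G}_0$ first'' case. You settle that case with one exchangeability step: the first coordinate is then decided by the first of the $f$ pairs whose first coordinate is not $\locvecm$. So you never need to track intervening $\mathcal{G}_1$ draws or which $\mathcal{L}$-class eventually gives the first $\locvecO$. Your route needs no simplification and explains the shape of the formula. The denominator counts the relevant pairs, $|\mathcal{L}_0|$ is an immediate joint success, and $(|\mathcal{G}_0|+|\mathcal{L}_2|)J$ means ``one coordinate already succeeded, the other gets a fresh chance $J$''. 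The paper's route keeps the structure of the original proof, which helps side-by-side comparison with it, but it needs more bookkeeping.

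Two small remarks. First, in Step 3, ``the order of the remaining $D-1$ pairs is still uniformly random'' is not literally true. Conditioning on a given pair being first among the relevant ones biases how the $(\locvecm,\locvecm)$ pairs interleave with the rest. What you actually use does hold: the relative order of the other relevant pairs stays uniform. This is immediate once you pass to the induced order on the $f+|\mathcal{G}_0|+|\mathcal{G}_1|$ relevant pairs, as you already do in Step 2.

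Second, the direction check you flag in Step 1 cannot affect the answer. Reversing the pairing transposes every class. This fixes $|\mathcal{L}_0|$ and $|\mathcal{G}_0|+|\mathcal{L}_2|$, and sends $|\mathcal{G}_0|+|\mathcal{G}_1|$ to $|\mathcal{L}_2|+|\mathcal{H}_2|$, which is equal to it by the $D-f$ constraint in Definition~\ref{defn22}.
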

We give a simpler proof in Appendix~\ref{appendixa3_SIMPLE24} with a pictorial intuition in Table~\ref{proof_by_picture}, and explain where our approach differs to \cite{li2021c}. Following the sampling without replacement paradigm where a random permutation can be thought of as the {\it draw order}, then for a fixed $\Delta$, then applying $\pi$ means we are drawing the indices $(x_s, x_{s+\Delta})$ from the location vector $\vec{x}$ in some random order, and we know how many of each $(A,B) = (x_s, x_{s+\Delta})$ indices there are using Definition~\ref{defn22}. The first entry in the tuple $(A,B)$ then corresponds to what is seen when $\pi$ is shifted $s$ times to the right, and the second entry corresponds to what is seen when $\pi$ is shifted $s + \Delta$ times to the right. Since a $\locvecO$ in the location vector means that the hashed values collide, then $\ExptWR{\pi}{\indicS{s}\indicS{s+\Delta}} > 0$ can be thought of as the {\bf probability of a sequence of draws} of the nine different types of $D$ tuples where the first $\locvecO$ appears before the first $\locvecx$ for both entries of the tuple, as the random permutation $\pi$ is a sequence of draws without replacement. We call such a sequence of draws {\bf a success}, with two examples below.

\begin{itemize}
\item $(\locvecm,\locvecm), (\locvecm,\mathbf{\locvecO}), (\mathbf{\locvecO},\locvecm)$  
\item $(\locvecm,\locvecm), (\locvecm,\mathbf{\locvecO}), (\locvecm, \locvecx), (\mathbf{\locvecO},\locvecO)$. The first $\locvecO$ for the second entry of the tuples appeared before the first $\locvecx$ for the second entry of the tuples.
\end{itemize} 
More examples are given by Table~\ref{examplethingya} and Table~\ref{examplethingyb} in Appendix~\ref{appendixa3_SIMPLE24}. Hence, we just need to compute the probabilities of seeing the tuple we want before other invalid tuples, which is much simpler. The intuition behind this simplification can be seen behind this following analogy: Consider a bag with one blue marble, two red marbles, and $N$ green marbles. The probability of drawing a blue marble before a red marble is always $\frac{1}{3}$, regardless of $N$. We can do the same with $\locvecO$ and $\locvecx$, regardless of $\locvecm$.

Next, we compute the variance of the CMinHash$(\sigma,\pi)$ estimator under $K$ shifts adopting a different approach, which we outline here. Suppose $D$ is large. Under the random permutation $\sigma$, both elements in the tuple $(x_s,x_{s+\Delta})$ can be thought of drawing entries $x_s, x_{s+\Delta}$ without replacement from a population of size $D$ with three different objects $\locvecO, \locvecx, \locvecm$. More specifically, each set $\{(A,B)~|~\Delta\}, 1 \leq \Delta \leq K-1$ can be expressed as
\begin{align}
\Card{|\{(A,B)~|~\Delta\}} = \sum_{s=1}^D \indic{(x_s,x_{s+\Delta}) = (A,B)}. \label{expec_set}
\end{align}
Therefore, we can take expectations across (functions of) \eqref{expec_set}. For example, 
\begin{align*}
\ExptWR{\sigma,\pi}{|\mathcal{L}_0(\Delta)|} & = \sum_{s=1}^D \ExptWR{\sigma,\pi}{\indic{(x_s,x_{s+\Delta}) = (\locvecO,\locvecO)}} = \frac{a(a-1)}{D-1}.  
\end{align*}
For the sparse case, we first condition on the positions of the $f$ entries which are not $\locvecm$. This gives an exact expression whose first term can be computed by considering when two such entries are next to each other. For the non sparse case, we use the delta method on the same exact expression. The details are in Appendix~\ref{appendix_a4_LI}.

\begin{lemma}\label{taylorLemma}
Suppose we pre-process the location vector with the random permutation $\sigma$. Let $B_D = 2D-f-1$, and suppose $0 < J <1$. For all $1 \leq \Delta \leq K-1$, if $f/D \rightarrow 0$, then
\begin{align*}
\ExptWR{\sigma,\pi}{\indicS{s}\indicS{s+\Delta}} & = \left\{\begin{array}{r l}
J^2 - \dfrac{J(1-J)f}{2(f-1)(D-1)}+O\left(\dfrac{f}{D^2}\right), & D\text{ even and }\Delta=D/2,\\
J^2-\dfrac{J(1-J)f}{(2f-1)(D-1)}+O\left(\dfrac{f}{D^2}\right), & \text{otherwise},
\end{array}\right. \\
\CovWR{\sigma,\pi}{\indicS{s}}{\indicS{s+\Delta}}
 & = \left\{\begin{array}{r l}
-\dfrac{J(1-J)f}{2(f-1)(D-1)} + O\left(\dfrac{f}{D^2}\right), & D\text{ even and }\Delta=D/2,\\
-\dfrac{J(1-J)f}{(2f-1) (D-1)} +O\left(\dfrac{f}{D^2}\right), & \text{otherwise}.
\end{array}\right.
\end{align*}
If $f/D$ is bounded away from zero, then
\begin{align*}
\ExptWR{\sigma,\pi}{\indicS{s}\indicS{s+\Delta}}
&=J^2-\frac{J(1-J)}{B_D}+O\left(D^{-2}\right),\\
\CovWR{\sigma,\pi}{\indicS{s}}{\indicS{s+\Delta}}
&=-\frac{J(1-J)}{B_D}+O\left(D^{-2}\right).
\end{align*}
\end{lemma}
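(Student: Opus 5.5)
We start from Lemma~\ref{lemma24}, which gives $\mathbb{E}_\pi(\indicS{s}\indicS{s+\Delta}) = \frac{L_0 + (G_0+L_2)J}{f+G_0+G_1}$, where we abbreviate $L_0=\Card{\mathcal{L}_0(\Delta)}$, and so on. Each of these counts is a random variable under $\sigma$, of the form \eqref{expec_set}. The constraints of Definition~\ref{defn22} reduce everything to a few variables. Since $L_0+L_1+L_2=a$ and $G_0+G_1 = (D-f)-G_2$, the denominator is $f+G_0+G_1 = D - G_2$. Writing $N := G_0+G_1 = D-f-G_2$, which counts the $\locvecm$ positions followed (at lag $\Delta$) by a non-$\locvecm$, we have $f + N = D-G_2$.

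The exact step conditions on the set $P$ of the $f$ non-$\locvecm$ positions on the cycle $\mathbb{Z}_D$. Given $P$, the labels $\locvecO/\locvecx$ are a uniformly random assignment of $a$ $\locvecO$'s among the $f$ positions. Hence
\[
\mathbb{E}(L_0\mid P)=M\,\frac{a(a-1)}{f(f-1)}, \qquad
\mathbb{E}(G_0+L_2\mid P)=N\,\frac{a}{f}+(f-M)\,\frac{a}{f},
\]
where $M$ is the number of pairs $(i,i+\Delta)$ with both endpoints in $P$. The count of pairs with exactly one endpoint in $P$ is $f-M=N$ (each such pair has its first or second point in $P$). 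Also $a/f$ is not $J$: $J=a/f$ only if we read $f$ as the union size, which it is, so indeed $a/f=J$. Using $(a-1)/(f-1) = J - (1-J)/(f-1)$, the conditional expectation becomes
\[
\frac{M J\bigl(J-\tfrac{1-J}{f-1}\bigr) + 2(f-M)J^2}{2f-M}
= J^2 - \frac{J(1-J)}{f-1}\cdot\frac{M}{2f-M}.
\]
This is an exact formula in the single random variable $M$, the number of lag-$\Delta$ adjacencies among $f$ uniformly placed points on a $D$-cycle. So
\[
\mathbb{E}_{\sigma,\pi}(\indicS{s}\indicS{s+\Delta}) = J^2 - \frac{J(1-J)}{f-1}\,\mathbb{E}\!\left[\frac{M}{2f-M}\right].
\]
The covariance statements then follow immediately by subtracting $J^2$.

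It remains to expand $\mathbb{E}[M/(2f-M)]$, which splits into two cases.

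\textbf{Sparse case.} We have $\mathbb{E}M = f(f-1)/(D-1)$ for $\Delta\neq D/2$, since each of the $f$ points has its $\Delta$-successor in $P$ with probability $(f-1)/(D-1)$. For $\Delta=D/2$ each adjacent pair is counted twice, since $i+\Delta$ and $i-\Delta$ coincide. Moreover $M$ is small: $\Pr(M\ge 2)=O(f^4/D^2)$. We therefore write $M/(2f-M) = M/(2f-1) + \text{(terms supported on } M\ge2)$ and bound the correction by $O(\mathbb{E}[M(M-1)]/f)=O(f^3/D^2)$. Multiplying by $1/(f-1)$ gives $\frac{f}{(2f-1)(D-1)}+O(f^2/D^2)$. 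Getting the remainder down to the stated $O(f/D^2)$ needs care: we compute $\mathbb{E}[M(M-1)]$ exactly via pair-of-pairs probabilities, or track $f$-dependence more sharply. In the $\Delta=D/2$ case, $M$ takes only even values $2k$, and the first-order term is $\frac{2}{2f-2}\Pr(M=2)$-type, giving $\frac{f}{2(f-1)(D-1)}$.

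\textbf{Dense case.} Here $M/f$ concentrates around $\mu=(f-1)/(D-1)$ with variance $O(1/D)$ (dependent indicators on the cycle, with covariances $O(1/D)$). We apply the delta method to $g(m)=m/(2f-m)$ around $\mathbb{E}M = f\mu$. This gives $g(f\mu)=\frac{f-1}{2D-f-1}=\frac{f-1}{B_D}$, and the second-order term is $O(\mathrm{Var}(M)/f^2)=O(D^{-2})$ after division by $f-1$. The result is $J^2 - J(1-J)/B_D + O(D^{-2})$.

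The main obstacle is the remainder control. Specifically, we need the exact second factorial moment of $M$ on the cycle, in the sparse case to achieve $O(f/D^2)$ rather than a cruder bound, and a variance bound for $M$ valid uniformly in $\Delta$ for the delta method. We would first compute $\mathbb{E}[M(M-1)]$ by splitting pairs of adjacencies into overlapping chains ($i,i+\Delta,i+2\Delta$) and disjoint pairs, using hypergeometric probabilities of 3 and 4 specified points in $P$.
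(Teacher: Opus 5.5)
Your route is the paper's route. You condition on the set $P$ of non-$\locvecm$ positions and use $G_0+G_1=f-M$, so that the denominator becomes $2f-M$. This gives the exact identity $\mathbb{E}_{\sigma,\pi}(\indicS{s}\indicS{s+\Delta})=J^2-\frac{J(1-J)}{f-1}\mathbb{E}[M/(2f-M)]$. You then expand it with the first two factorial moments of $M$ (sparse case) and a second-order Taylor bound (dense case). There is one small slip: $2(f-M)$ pairs have exactly one endpoint in $P$, with $f-M$ of each orientation. Your formula $\mathbb{E}(G_0+L_2\mid P)=2(f-M)J$ is nonetheless correct.

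The one step that does not deliver the stated rate is the sparse remainder, and your proposed remedy targets the wrong quantity. The correction is exactly
\[
\frac{M}{2f-M}-\frac{M}{2f-1}=\frac{M(M-1)}{(2f-1)(2f-M)} .
\]
Since $M\le f$, its denominator is at least $(2f-1)f$, which is of order $f^2$, not $f$. Your bound $O(\mathbb{E}[M(M-1)]/f)$ discards one factor of $f$, which is why you land on $O(f^2/D^2)$.

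Computing $\mathbb{E}[M(M-1)]$ exactly cannot recover that factor, because the exact value $f(f-1)^2(f-2)/((D-1)(D-2))$ is still of order $f^4/D^2$. The gap also matters in practice: for $f\asymp D^{0.6}$, an $O(f^2/D^2)$ error would swamp the $\approx J(1-J)/(2D)$ leading correction. Using the $f^2$ denominator bound instead gives, after dividing by $f-1$, a remainder in $[0,\frac{(f-1)(f-2)}{(2f-1)(D-1)(D-2)}]$, which is $O(f/D^2)$.

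For $\Delta=D/2$, the same device makes your ``$\Pr(M=2)$-type'' heuristic rigorous. Use the exact identity $\frac{x}{2f-x}=\frac{x}{2(f-1)}+\frac{x(x-2)}{2(f-1)(2f-x)}$, whose correction is nonnegative on even $x$. Combine it with $\mathbb{E}[M(M-2)]=4\mathbb{E}[Q(Q-1)]$, where $Q=M/2$, computed from pairs of disjoint position-pairs. In the dense case, use the global bound $0<g''\le 4/f^2$ on all of $[0,f]$. Then $0\le\mathbb{E}\,g(M)-g(\mathbb{E}M)\le 2\,\mathrm{Var}(M)/f^2$ holds exactly rather than only as a delta-method heuristic, and together with $\mathrm{Var}(M)=O(D)$ this yields the $O(D^{-2})$ remainder.
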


We prove this in Appendix~\ref{appendix_a4_LI}.

\begin{definition}\label{lambda_defn}
For $0 < K \leq D$, let $\lambda_{D,K} = \indic{D~ \text{even}} \frac{(2K-D)_{+}}{K}$, where $(x)_{+} : = \max\{x,0\}$.
\end{definition}
Definition~\ref{lambda_defn} accounts for the proportion of terms in the covariance where $\Delta = D/2$.

\begin{theorem}\label{ourownMAINtheorem}
Let $0 < a < f \leq D$. Suppose we pre-process the location vector with the random permutation $\sigma$, and generate an independent permutation $\pi$. Let $B_D = 2D-f-1$. If $f/D \rightarrow 0$, 
\begin{align*}
\VarWR{\sigma,\pi}{\CMHEst}
=\frac{J(1-J)}{K}\left[1-\frac{f}{D-1}\left\{\frac{K-1-\lambda_{D,K}}{2f-1}+\frac{\lambda_{D,K}}{2(f-1)}\right\}\right]
+O\left(\frac{f}{D^2}\right).
\end{align*}
If $f/D$ is bounded away from zero, then
\begin{align*}
\VarWR{\sigma,\pi}{\CMHEst}
=\frac{J(1-J)}{K}\left(1-\frac{K-1}{B_D}\right)+O\left(D^{-2}\right).
\end{align*}
\end{theorem}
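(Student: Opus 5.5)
The plan is to expand the variance of the average into diagonal and off-diagonal terms. We write
\begin{align*}
\VarWR{\sigma,\pi}{\CMHEst} = \frac{1}{K^2}\Big[\sum_{t=1}^K \VarWR{\sigma,\pi}{\indicS{t}} + 2\sum_{1\le s<t\le K}\CovWR{\sigma,\pi}{\indicS{s}}{\indicS{t}}\Big].
\end{align*}
Each $\indicS{t}$ is Bernoulli with mean $J$, since the marginal collision probability is $J$ for every shift. Hence the diagonal part is $KJ(1-J)$. By circulant symmetry under $\sigma$ and $\pi$, the covariance depends only on the lag $\Delta=t-s$. The off-diagonal part is therefore $2\sum_{\Delta=1}^{K-1}(K-\Delta)\,C(\Delta)$, where $C(\Delta)$ is given by Lemma~\ref{taylorLemma}.

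In the regime where $f/D$ is bounded away from zero, Lemma~\ref{taylorLemma} gives $C(\Delta)=-J(1-J)/B_D+O(D^{-2})$ uniformly in $\Delta$. Summing with $\sum_{\Delta=1}^{K-1}(K-\Delta)=K(K-1)/2$ yields
\begin{align*}
\frac{J(1-J)}{K}\Big(1-\frac{K-1}{B_D}\Big)+\frac{1}{K^2}\cdot K(K-1)\,O(D^{-2}).
\end{align*}
The remainder is $O(D^{-2})$ because $(K-1)/K\le 1$. This gives the second claim directly.

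In the sparse regime $f/D\to0$, the covariance takes the generic value $-J(1-J)f/\{(2f-1)(D-1)\}$ except when $D$ is even and the lag corresponds to an antipodal shift. I will read the lag in the circulant sense: the pair $(s,t)$ is antipodal when $t-s=D/2$, and with $K\le D$ this can only happen for $\Delta=D/2\le K-1$. Counting ordered pairs $(s,t)$ in $\{1,\dots,K\}$ with $t-s=D/2$ gives $K-D/2$ such pairs when $2K>D$, and none otherwise. Counting both orders, this is $2(K-D/2)_+=(2K-D)_+ = K\lambda_{D,K}$, which is exactly why $\lambda_{D,K}$ is defined as in Definition~\ref{lambda_defn}.

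The number of ordered off-diagonal pairs is $K(K-1)$, so the generic ones number $K(K-1-\lambda_{D,K})$. Substituting gives
\begin{align*}
\frac{1}{K^2}\Big[KJ(1-J) - K(K-1-\lambda_{D,K})\frac{J(1-J)f}{(2f-1)(D-1)} - K\lambda_{D,K}\frac{J(1-J)f}{2(f-1)(D-1)}\Big].
\end{align*}
Factoring out $J(1-J)/K$ produces the stated bracket.

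The main obstacle is controlling the remainders uniformly. There are $K(K-1)$ error terms, each $O(f/D^2)$, and dividing by $K^2$ leaves $O(f/D^2)$. This requires the $O(\cdot)$ constants in Lemma~\ref{taylorLemma} to be uniform in $\Delta$; I will check this by revisiting the appendix proof, where the constants depend only on $J$ and $f/D$, not on $\Delta$. A secondary point is that the antipodal count must be done carefully in the circulant sense, since for $K\le D$ no other lag coincides with $D/2$ modulo $D$.
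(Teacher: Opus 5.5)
Your proposal is correct and follows essentially the same route as the paper. The paper also writes the variance as $\frac{1}{K^2}\sum_{s,t}\CovWR{\sigma,\pi}{\indicS{s}}{\indicS{t}}$, substitutes the covariances from Lemma~\ref{taylorLemma}, and uses $\lambda_{D,K}$ to count the $(2K-D)_+$ ordered pairs at lag $D/2$. The uniformity you plan to check is immediate: the remainder bounds in the appendix proof of Lemma~\ref{taylorLemma} (for example $0\le R_\Delta\le (f-1)(f-2)/\{(2f-1)(D-1)(D-2)\}$) do not depend on $\Delta$.
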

\begin{proof}
We give the proof in Appendix~\ref{appendix_a4_LI}.
\end{proof}

The proof of Lemma~\ref{taylorLemma} also gives an exact negative expression for every covariance. Hence, for $K>1$, the variance under CMinHash$(\sigma,\pi)$ is smaller than the variance $J(1-J)/K$ under $K$ independent permutations. If $f/D$ is bounded away from zero, the first-order variance reduction can be thought of as approximately scaled by $\left(1-(K-1)/(2D-f-1)\right)$. The leading variance ratio relative to MinHash is independent of $a$ when $D,f,K$ are fixed, which also verifies the Uniform Superiority (Theorem 3.4) and Consistent Improvement (Proposition 3.6) statements in \cite{li2021c}.

\begin{figure*}[h]
\includegraphics[width=1.0\linewidth]{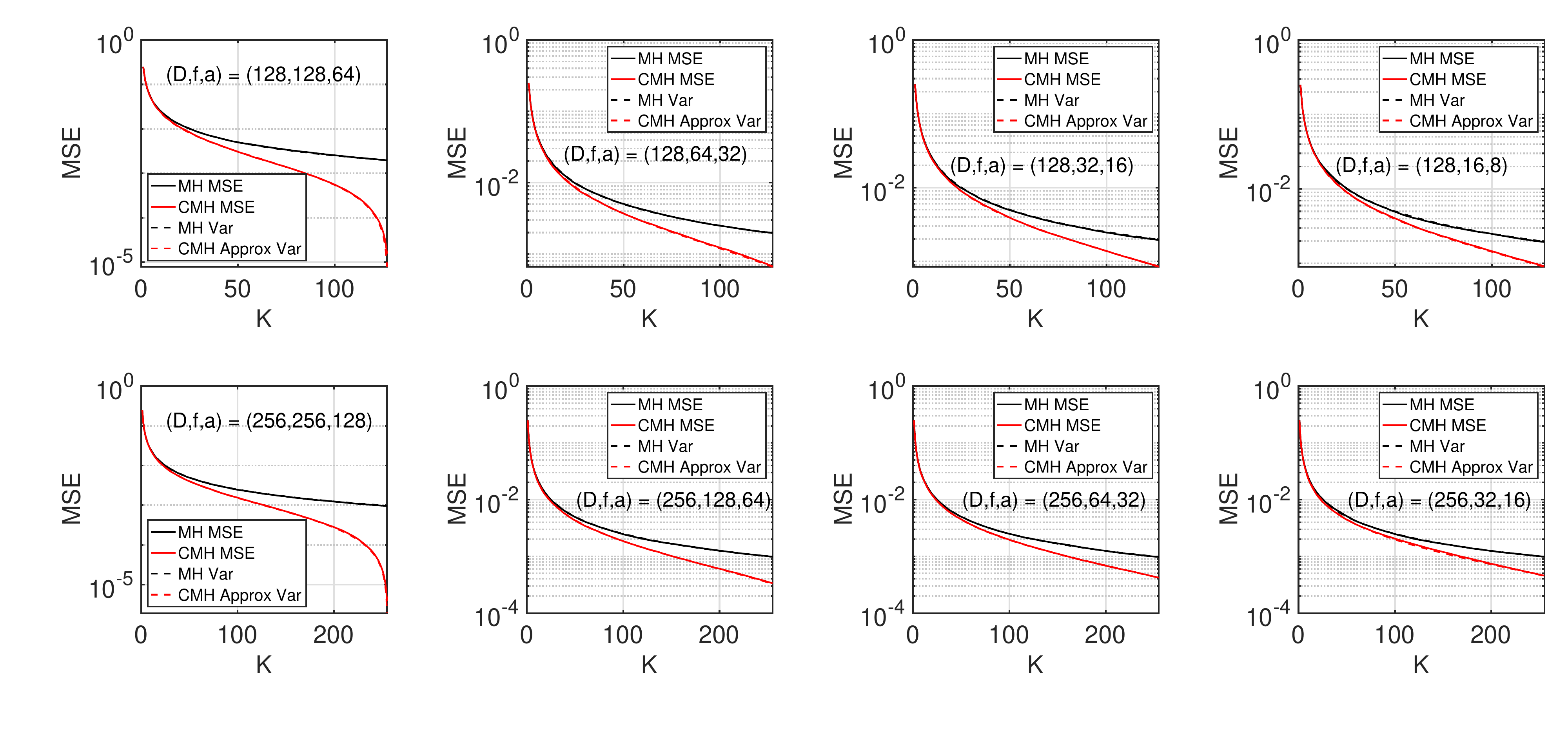}
\caption{Empirical MSE vs theoretical approximate variance in Theorem~\ref{ourownMAINtheorem} of MinHash and CMinHash $(\sigma,\pi)$. The dotted lines overlap almost perfectly (up to Monte Carlo error), which shows our theoretical approximate variance matches the empirical variance. \label{fig1CMH}}
\end{figure*}

We first construct a similar experiment to \cite{li2021c}, generating synthetic vectors with varying $D,f,a$ in Figure~\ref{fig1CMH}, averaging over 10000 iterations, and overlaying the  theoretical approximation with the empirical mean square error (MSE) of the estimates. The dotted lines overlap almost perfectly with the solid lines at $D=128$ and $D=256$.

We do more experiments in Appendix~\ref{appendix_a5_EXPT} (Figure~\ref{fig2CMH}) to investigate if the approximation is accurate in edge cases when $f/D\rightarrow 0$. At $D=128$, for $f=2,4$, the variance approximation is slightly higher than the empirical MSE. But at $D=1024$ for $f=2,4$, both lines overlap each other almost perfectly. We do more experiments on the Words dataset \citep{li2005using} (Figure~\ref{words_data_CMH} in Appendix~\ref{appendix_a5_EXPT}), where the theoretical variance approximation and empirical MSE lines also overlap each other almost perfectly.

Hence, we believe our approximation of the theoretical variance is practical enough to do comparisons.

We argue that our approach in Lemma~\ref{taylorLemma} should be how MinHash algorithms are analyzed, especially with large $D$, {\it as large $D$ necessitated the need of MinHash in the first place}.

\section{Applying the Circulant Permutation Framework To Asymmetric Hashing} \label{sec4}

A natural question to ask is if our work is generalizable to other MinHash algorithms. We show our approach can be applied to the Minner estimators \citep{ahleminner} (asymmetric hashing), and theoretically demonstrate a circulant permutation framework with reduced variance.

The asymmetric hashing setup considers a baseline set $\aSet{S}_1$ relative to $\aSet{S}_2, \hdots, \aSet{S}_n$, and estimates the set overlap $\aij{1}{i}$ (which can then be used to estimate the Jaccard similarity $\Jij{1}{i}$, since $\aij{1}{i} = \frac{(f_1+f_i)\Jij{1}{i}}{1 + \Jij{1}{i}}$), under the assumption that the set cardinalities $f_1, \hdots, f_n$ are all known. We give more details of the Minner estimators in Appendix~\ref{appendixB_understandingMinner}.  

\begin{figure}[h]
\begin{center}
\begin{align*}
    \begin{array}{c | c c c c c c c c c c c c c} 
    \text{\bf Draw Order} & 1 & 2 & 3 & 4 & 5 & 6 & 7 & 8  \\ \hline
      \text{Set 1} & \circ & \bullet & \circ & \bullet & \bullet & \bullet & \circ & \bullet  \\
      \text{Set 2} &  \circ & \circ & \circ & \circ & \circ & \circ & \circ & \boxed{\bullet} \\
    \end{array}
\end{align*}
\end{center}
\caption{We have $\min\{\pi_t(\aSet{S}_2)\} = 8$, with $y_t = 1$, which can be thought of as the presence (or absence) of a $\bullet$ at draw 8 for $\aSet{S}_1$, and $m_t = 4$, which can be thought of as counting how many $\bullet$ were seen in previous draws 1-7 for $\aSet{S}_1$.}\label{fig4eg}
\end{figure}

\begin{theorem}
Minner estimators (heavily modified from \citep{ahleminner})\label{minner_haha}
Suppose we have $K$ independent random permutations $\pi_1,\hdots,\pi_K$ on $[D]$, and we want to estimate the set overlap $a$ between $\aSet{S}_1$ and $\aSet{S}_2$, where $\aSet{S}_1$ is the baseline set. Let $y_t = \indic{ \min(\pi_t(\aSet{S}_2)) \in\pi_t(\aSet{S}_1) }$, and let $m_t$ be the count of elements in $\pi_t(\aSet{S}_1)$ strictly less than $\min(\pi_t(\aSet{S}_2))$. Then the two estimators $f_2\sum_{t=1}^K\frac{y_t}{K}$ and $f_1-(f_2+1)\sum_{t=1}^K\frac{m_t}{K}$ both have expectation $a$. Their variances are, respectively, $\frac{a(f_2-a)}{K}$ and $\frac{f_2(f_1-a)(f_1+f_2-a+1)}{K(f_2+2)}$.
\end{theorem}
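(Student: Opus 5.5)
Since the $K$ permutations are independent and identically distributed, both estimators are averages of $K$ i.i.d.\ terms. It therefore suffices to work with a single permutation $\pi$ and compute $\Expt{y}$, $\Var{y}$, $\Expt{m}$ and $\Var{m}$; dividing the variances by $K$ and using linearity of expectation then gives the statement. Following the sampling-without-replacement paradigm of Section~\ref{sec2}, we view $\pi$ as a uniformly random draw order on $[D]$. Only the $f_1+f_2-a$ elements of $\aSet{S}_1\cup\aSet{S}_2$ matter, and the elements of $\circcirc$ type can be ignored, by the marble analogy used for Lemma~\ref{lemma24}.

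For $y$: the first element of $\aSet{S}_2$ that is drawn is uniformly distributed among the $f_2$ elements of $\aSet{S}_2$. It lies in $\aSet{S}_1$ with probability $a/f_2$. Hence $y\sim\mathrm{Bernoulli}(a/f_2)$, so $\Expt{f_2 y}=a$ and $\Var{f_2y}=f_2^2\cdot\frac{a}{f_2}\left(1-\frac{a}{f_2}\right)=a(f_2-a)$.

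For $m$: only elements of $\aSet{S}_1\setminus\aSet{S}_2$ (type $\bulletcirc$, there are $f_1-a$ of them) can be drawn before the first element of $\aSet{S}_2$. An element of $\aSet{S}_1\cap\aSet{S}_2$ drawn first would itself be the minimum of $\aSet{S}_2$, and so is not counted. Write $m=\sum_{i\in \aSet{S}_1\setminus\aSet{S}_2} I_i$, where $I_i$ indicates that $i$ precedes all $f_2$ elements of $\aSet{S}_2$.
\begin{itemize}
\item Among $i$ together with the $f_2$ elements of $\aSet{S}_2$, each is equally likely to come first, so $\Expt{I_i}=1/(f_2+1)$. This gives $\Expt{m}=(f_1-a)/(f_2+1)$, and hence $\Expt{f_1-(f_2+1)m}=a$.
\item For $i\neq j$, both precede $\aSet{S}_2$ exactly when the last of the first two positions among these $f_2+2$ elements is still not in $\aSet{S}_2$, i.e.\ $i,j$ occupy the first two places. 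This has probability $\frac{2}{(f_2+2)(f_2+1)}$.
\end{itemize}

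With $n=f_1-a$ and $p=1/(f_2+1)$, we then have
\[\Var{m}=np(1-p)+n(n-1)\left(\frac{2}{(f_2+1)(f_2+2)}-p^2\right).\]
Multiplying by $(f_2+1)^2$ gives
\[\Var{(f_2+1)m}=nf_2+n(n-1)\frac{2(f_2+1)-(f_2+2)}{f_2+2}=nf_2+\frac{n(n-1)f_2}{f_2+2}.\]
This factors as $\frac{nf_2(f_2+2+n-1)}{f_2+2}=\frac{f_2(f_1-a)(f_1+f_2-a+1)}{f_2+2}$, which matches the claim.

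The only delicate step is the pairwise probability for $m$: one must identify correctly that only $\bulletcirc$ elements contribute, and count the orderings of $f_2+2$ items. Everything after that is routine algebra.
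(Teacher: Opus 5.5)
Your proposal is correct and follows essentially the same route as the paper's proof (Lemma~5 in Appendix~B). There, $y$ is Bernoulli$(a/f_2)$, and $m$ is written as a sum of indicators over the $f_1-a$ elements of $\aSet{S}_1\setminus\aSet{S}_2$; by exchangeability these have single and pairwise probabilities $1/(f_2+1)$ and $2/((f_2+1)(f_2+2))$, and the same variance algebra follows. You also spell out the Bernoulli variance of $f_2 y$ and the reduction from $K$ i.i.d.\ permutations to a single one, which the paper leaves implicit.
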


We remark that thinking of the random permutation as the draw order in sampling without replacement (see Figure~\ref{fig4eg}) makes understanding the Minner estimators straightforward.

\subsection{Applying the Circulant Permutation framework to a different setup of the location vector}

Without loss of generality, consider estimating the set overlap between $\aSet{S}_1$ and $\aSet{S}_2$, where we look at the random variable $y_t$. The question we aim to resolve is: {\it How can the circulant permutation framework be applied when we are not recording both $\min\{\pi_{\rightarrow t}(\aSet{S}_i)\}$}?

In Section~\ref{sec2}, we introduced $\aSet{F}_+$ and  $\aSet{F}_{-}$, where we find the probability that an element from $\aSet{F}_{+}$ is seen before an element from $\aSet{F}_{-}$. In the context of $y_t$, we set $\aSet{F}_{+} = \left\{\bulletbullet\right\}$, and $\aSet{F}_{-} = \left\{\circbullet\right\}$, as we continue drawing if we see a $\bulletcirc$ or $\circcirc$. Let $C_{s} = \#$ of elements of $\aSet{F}_+$ in $D$ (success), $C_f = \#$ of elements of $\aSet{F}_-$ in $D$ (failure), and $C_n = D-C_s-C_f$ (neutral). There are $C_s = a$ $\locvecO$s, $C_f = f_2-a$ $\locvecx$s, and $C_n = D-f_2$ $\locvecm$s. To generalize this, the constraints of the sets are now 
\begin{align*}
|\mathcal{L}_0| + |\mathcal{L}_1| + |\mathcal{L}_2| = |\mathcal{L}_0| + |\mathcal{G}_0| + |\mathcal{H}_0| &= C_s,  \\
|\mathcal{G}_0| + |\mathcal{G}_1| + |\mathcal{G}_2| = |\mathcal{L}_2| + |\mathcal{G}_2| + |\mathcal{H}_2| &= C_n, \\
|\mathcal{H}_0| + |\mathcal{H}_1| + |\mathcal{H}_2| = |\mathcal{L}_1| + |\mathcal{G}_1| + |\mathcal{H}_1| &= C_f.
\end{align*}
It is clear that under one permutation, the variance of drawing an element from $\aSet{F}_{+}$ before $\aSet{F}_{-}$ is $\frac{C_sC_f}{(C_s+C_f)^2}$ regardless of the size of $C_n$. Let $V_{\mathrm{ind}}$ be the variance of 1 trial, and $V_{K}$ the variance of $K$ independent trials (i.e. under $K$ independent permutations). This leads to the following corollary.
\begin{corollary}\label{CMHCorollaryGen}
Suppose we pre-process the location vector with a random permutation $\sigma$, and let $\indicS{s} \equiv \indic{\text{Element in $\aSet{F}_{+}$ is seen before element in $\aSet{F}_{-}$ at $\pi_{\rightarrow s}$}}$.
Then for any $1 \leq s < t \leq K$, with $t - s = \Delta$,   
\begin{align*}
\ExptWR{\pi}{\indicS{s}\indicS{s+\Delta}}  = \frac{\Card{\mathcal{L}_0(\Delta)} + (\Card{\mathcal{G}_0(\Delta)} + \Card{\mathcal{L}_2(\Delta)}) \frac{C_s}{C_s+C_f}  }{C_s + C_f + \Card{\mathcal{G}_0(\Delta)} + \Card{\mathcal{G}_1(\Delta)}}.
\end{align*}
Let $\lambda_{D,K}$  be defined as in Definition~\ref{lambda_defn}, and let $B_D = D+C_n-1$. 
If  $(C_s+C_f)/D \rightarrow 0$, then 
{\small
\begin{align*}
\VarWR{\sigma,\pi}{\sum_{t=1}^K\frac{\indicS{t}}{K}}  
&=V_K\left[1-\frac{C_s+C_f}{D-1}\left\{
\frac{K-1-\lambda_{D,K}}{2(C_s+C_f)-1}+\frac{\lambda_{D,K}}{2(C_s+C_f-1)}\right\}\right]
+O\left(\frac{C_s+C_f}{D^2}\right).
\end{align*} 
}

If $(C_s+C_f)/D$ is bounded away from zero, then 
\begin{align*}
\VarWR{\sigma,\pi}{\sum_{t=1}^K\frac{\indicS{t}}{K}}  
=V_K\left(1-\frac{K-1}{B_D}\right)+O(D^{-2}).
\end{align*}
\end{corollary}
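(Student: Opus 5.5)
The plan is a reduction: show that the general setting is CMinHash$(\sigma,\pi)$ with relabelled parameters, then reuse the proofs of Lemma~\ref{lemma24}, Lemma~\ref{taylorLemma} and Theorem~\ref{ourownMAINtheorem} unchanged. Let $p = C_s/(C_s+C_f)$. Then $V_{\mathrm{ind}} = p(1-p)$ and $V_K = p(1-p)/K$. The dictionary is: $\locvecO$ with multiplicity $a \leftrightarrow C_s$, $\locvecx$ with multiplicity $f-a \leftrightarrow C_f$, $\locvecm$ with multiplicity $D-f \leftrightarrow C_n$. Hence $f \leftrightarrow C_s+C_f$ and $J \leftrightarrow p$. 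Under this dictionary, $B_D = 2D-f-1$ becomes $2D-(C_s+C_f)-1 = D+C_n-1$, which matches the statement.

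\textbf{Step 1: the expectation under $\pi$.} I would repeat the draw-order argument given for Lemma~\ref{lemma24}. Fix $\Delta$ and view $\pi$ as drawing the $D$ tuples $(x_i,x_{i+\Delta})$ without replacement.
\begin{itemize}
\item Tuples of type $(\locvecm,\locvecm)$, counted by $\Card{\mathcal{G}_2}$, are neutral for both coordinates, so by the marble analogy they can be discarded. The process therefore stops at the first tuple among the remaining $D-\Card{\mathcal{G}_2}$ tuples.
\item By the constraint rows, $D-\Card{\mathcal{G}_2} = C_s+C_f+\Card{\mathcal{G}_0}+\Card{\mathcal{G}_1}$.
\item If that first tuple is $(\locvecO,\locvecO)$, both indicators succeed. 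This happens with probability $\Card{\mathcal{L}_0}/(C_s+C_f+\Card{\mathcal{G}_0}+\Card{\mathcal{G}_1})$.
\item If it is $(\locvecO,\locvecm)$ or $(\locvecm,\locvecO)$, one coordinate has succeeded and the other is still open. Future draws for the open coordinate come from a uniformly random order of its remaining non-neutral symbols, so it succeeds with probability $p$. Making this conditional-probability claim exact needs care, and I would argue it as in Appendix~\ref{appendixa3_SIMPLE24}.
\item Every other first tuple contains a $\locvecx$ and therefore fails.
\end{itemize}
Summing these cases gives the displayed formula.

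\textbf{Step 2: averaging over $\sigma$ and the variance.} After $\sigma$, the location vector is a uniformly random arrangement of $C_s$ $\locvecO$'s, $C_f$ $\locvecx$'s and $C_n$ $\locvecm$'s. This has exactly the same law as in Section~\ref{sec2}, with the symbols renamed. The exact expression from Step 1 is the same function of $(\Card{\mathcal{L}_0},\Card{\mathcal{L}_2},\Card{\mathcal{G}_0},\Card{\mathcal{G}_1})$ as in Lemma~\ref{lemma24}, with $J$ replaced by $p$ and $f$ by $C_s+C_f$.

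The proof of Lemma~\ref{taylorLemma} uses only this joint law and this functional form. In particular, it never uses that $J = a/f$ as a Jaccard index: it needs only $p = C_s/(C_s+C_f)$, where $C_s$ is the count of the success symbol. It therefore yields
\[
\CovWR{\sigma,\pi}{\indicS{s}}{\indicS{s+\Delta}} = -\frac{p(1-p)(C_s+C_f)}{(2(C_s+C_f)-1)(D-1)} + O\left(\frac{C_s+C_f}{D^2}\right),
\]
with the $2(C_s+C_f-1)$ denominator when $\Delta=D/2$, and $-p(1-p)/B_D+O(D^{-2})$ in the dense regime. Assembling these exactly as in Theorem~\ref{ourownMAINtheorem} then gives the result:
\[
\VarWR{\sigma,\pi}{\sum_{t=1}^K\frac{\indicS{t}}{K}} = \frac{V_{\mathrm{ind}}}{K} + \frac{2}{K^2}\sum_{s<t}\CovWR{\sigma,\pi}{\indicS{s}}{\indicS{t}}.
\]
The sum over pairs contains $K-1-\lambda_{D,K}$ generic terms and $\lambda_{D,K}$ terms with $\Delta = D/2$, per unit weight $K$, and factoring out $V_K$ gives the stated expression.

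\textbf{Main obstacle.} The hardest step is checking that the proof of Lemma~\ref{taylorLemma} in Appendix~\ref{appendix_a4_LI} truly depends only on the three multiplicities. It must not use, for example, that $0<a<f$ forces $\locvecx$'s to exist. Here the analogous hypothesis is $0<C_s$ and $0<C_f$, which I would impose, noting that the result is trivial otherwise. The edge case where $C_s+C_f$ is small, so that the $f-1$ denominators appear, should transfer verbatim.
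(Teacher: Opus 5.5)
Your proposal is correct and is essentially the paper's proof: the paper substitutes $a=C_s$ and $f=C_s+C_f$ (so $J\mapsto C_s/(C_s+C_f)$ and $2D-f-1\mapsto D+C_n-1$) into Lemma~\ref{lemma24}, Lemma~\ref{taylorLemma} and Theorem~\ref{ourownMAINtheorem}, which is exactly your dictionary. Your Step 1 conditions on the first tuple other than $(\locvecm,\locvecm)$, giving a tidier three-case version of the paper's five-subcase argument, and the conditional step you flagged is exact: given the prefix, the remaining tuples are in uniformly random order and still contain all $C_s+C_f$ non-neutral symbols of the open coordinate.
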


\begin{proof}
We substitute $a=C_s$ and $f=C_s+C_f$ in Lemma~\ref{lemma24}, Lemma~\ref{taylorLemma}, and Theorem~\ref{ourownMAINtheorem}. Since $D-f=C_n$, the quantity $2D-f-1$ becomes $D+C_n-1=B_D$.
\end{proof}

By the exact covariance identity \eqref{exactCMHcovariance2}, the variance under $K > 1$ circulant shifts is always reduced compared to $K$ independent permutations. The size of the reduction depends on $(C_s+C_f)$. In particular, MinHash algorithms where $\aSet{F}_{+}$ and $\aSet{F}_{-}$ are small relative to $D$ have a small variance reduction when CMinHash$(\sigma,\pi)$ is applied. 

With respect to the Minner estimator $y_t$, we can apply Corollary~\ref{CMHCorollaryGen} and immediately get the theoretical approximation of the variance of $y_t$ by letting $C_s = a$, $C_f = f_2-a$. 
\subsection{Applying the Circulant Permutation framework to counts}

Now consider estimating the set overlap between $\aSet{S}_1$ and $\aSet{S}_2$, where we look at the random variable $m_t$. The question we aim to resolve is: {\it How can the circulant permutation framework be applied when we are recording {\bf counts} of elements in $\mathcal{F}_{-}$ before the first element in $\mathcal{F}_{+}$ is seen?}

We still construct the two sets $\aSet{F}_{+}$, $\aSet{F}_{-}$, but give a different meaning for elements in $\aSet{F}_{-}$. Let $\aSet{F}_{+} = \left\{ \bulletbullet, \circbullet \right\}$, and $\aSet{F}_{-} = \left\{\bulletcirc \right\}$ with $C_s = f_2$ and $C_f = f_1-a$. Then the setup becomes: ``count how many elements in $\aSet{F}_{-}$ are seen before the first element in $\aSet{F}_{+}$". Equivalently this would be ``count how many $\locvecx$ are seen before the first $\locvecO$ is drawn" for the location vector $\vec{x}$.

From Lemma~\ref{emptylemma} in Appendix~\ref{appendixB_proofLemma4}, the variance of observing $\theta$ elements from $\aSet{F}_{-}$ before the first $\aSet{F}_{+}$ is $\frac{C_sC_f(C_s+C_f+1)}{(C_s+1)^2(C_s+2)}$. Let $V_{\mathrm{ind}}$ be the variance of 1 trial, and $V_{K}$ the variance of $K$ independent trials.

\newcommand{\lastlemmaA}{Suppose we pre-process the location vector with a random permutation $\sigma$. Let the random variable $\theta_s \equiv $ number of $\locvecx$ seen before the first $\locvecO$ under the random permutation $\pi_{\rightarrow s}$. Then for any $1 \leq s < t \leq K$, with $t - s = \Delta$,
}

\newcommand{\lastlemmaexpra}{\frac{\Card{\aSet{H}_1}}{2C_s - \Card{\aSet{L}_0} +1}}

\newcommand{\lastlemmaexprb}{\frac{2C_f^2 - C_f(\Card{\aSet{L}_1} + \Card{\aSet{H}_0}) - 2 \Card{\aSet{H}_1}}{(C_s+1)(2C_s - \Card{\aSet{L}_0} +2)} }

\newcommand{\lastlemmaexprc}{\frac{C_f(C_s^2 + 2C_sD - 6D^2) + C_s^2(C_s-2D)}{C_s^3(2D-C_s)^2}}

\newcommand{\lastlemmaexpr}{\begin{align*}
\ExptWR{\pi}{\theta_s \theta_{s+\Delta}} & = \lastlemmaexprb + \lastlemmaexpra. \\
\end{align*}}

\begin{lemma}
\label{lastlemA}
Suppose we pre-process the location vector with a random permutation $\sigma$. Let the random variable $\theta_s \equiv $ number of $\locvecx$ seen before the first $\locvecO$ under the random permutation $\pi_{\rightarrow s}$. Let $B_{\Delta} = 2C_s - \Card{\mathcal{L}_0} + 1$. Then for any $1 \leq s < t \leq K$, with $t - s = \Delta$,
\begin{align*}
\ExptWR{\pi}{\theta_s \theta_{s+\Delta}}
&=
\frac{\Card{\mathcal{H}_1}}{B_{\Delta}}
+
\frac{2C_f^2-C_f(\Card{\mathcal{L}_1}+\Card{\mathcal{H}_0})-2\Card{\mathcal{H}_1}}
{(C_s+1)(B_{\Delta}+1)} +
\frac{C_f(\Card{\mathcal{L}_1}+\Card{\mathcal{H}_0})-2\Card{\mathcal{L}_1}\Card{\mathcal{H}_0}}
{(C_s+1)B_{\Delta}(B_{\Delta}+1)}.
\end{align*}
If $C_s$ and $C_f$ are fixed as $D\rightarrow\infty$, then, when $D$ is even and $\Delta=D/2$,
\begin{align*}
\CovWR{\sigma,\pi}{\theta_s}{\theta_{s+\Delta}}
&=-V_{\mathrm{ind}}
\frac{(C_s+1)(C_s+2)}{2C_s(2C_s+1)}
\frac{2C_s+C_f}{(C_s+C_f+1)(D-1)}
+O(D^{-2}),
\end{align*}
and otherwise,
\begin{align*}
\CovWR{\sigma,\pi}{\theta_s}{\theta_{s+\Delta}}
&=-V_{\mathrm{ind}}
\frac{(C_s+2)(C_s+C_f)}
{(2C_s+1)(C_s+C_f+1)(D-1)}
+O(D^{-2}).
\end{align*}
If both $C_s/D$ and $C_f/D$ are bounded away from zero, then
\begin{align*}
\CovWR{\sigma,\pi}{\theta_s}{\theta_{s+\Delta}}
=
\left\{
\begin{array}{r l}
-V_{\mathrm{ind}}\left(
\frac{C_s(2D-C_s)+C_f(D-C_s)}
{(C_s+C_f)(2D-C_s)^2}\right)
+
O\left(\frac{1}{D^2}\right),
& D\text{ even and }\Delta = D/2, \\
-V_{\mathrm{ind}}
\hspace*{1.15cm}\left(\frac{1}{2D-C_s}\right)
\hspace*{1.0cm}+
O\left(\frac{1}{D^2}\right),
& \text{otherwise.}
\end{array}
\right.
\end{align*}
\end{lemma}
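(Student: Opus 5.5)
Fix $\sigma$, so the location vector $\vec{x}$ and the nine counts of Definition~\ref{defn22} are determined. As in the proof of Lemma~\ref{lemma24}, view $\pi$ as the draw order of the $D$ tuples $(x_i,x_{i+\Delta})$. Here $\theta_s$ counts the $\locvecx$'s in the first coordinate before the first $\locvecO$ there, and $\theta_{s+\Delta}$ does the same for the second coordinate. The $\locvecm$'s are irrelevant in both coordinates, and by the marble analogy we may discard the type $(\locvecm,\locvecm)$.

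For the first display, write $\theta_s\theta_{s+\Delta}=\sum_{u,v}\indicS{u\text{ before first }\locvecO\text{ in coord }1}\indicS{v\text{ before first }\locvecO\text{ in coord }2}$. Here $u$ ranges over tuples with first entry $\locvecx$ (types $\mathcal{H}_0,\mathcal{H}_1,\mathcal{H}_2$) and $v$ over tuples with second entry $\locvecx$ (types $\mathcal{L}_1,\mathcal{G}_1,\mathcal{H}_1$). Each summand is the probability that $u$ (resp.\ $v$) precedes every tuple containing a $\locvecO$ in the relevant coordinate. There are $C_s$ tuples with first entry $\locvecO$ and $C_s$ with second entry $\locvecO$, sharing $\Card{\mathcal{L}_0}$, so the union of ``blocking'' tuples has size $2C_s-\Card{\mathcal{L}_0}=B_\Delta-1$. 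I would split the pairs into three cases.
\begin{itemize}
\item $u=v\in\mathcal{H}_1$: the probability is $1/B_\Delta$, which gives the first term.
\item $u\neq v$ with neither blocking the other's coordinate: $u$ and $v$ must both precede all $B_\Delta-1$ blockers. Among $B_\Delta+1$ items, this has probability $2/((B_\Delta)(B_\Delta+1))$.
\item $u\in\mathcal{H}_0$ or $v\in\mathcal{L}_1$, where one tuple itself carries a $\locvecO$ in the other coordinate: now $u$ must precede $v$, which in turn precedes the remaining blockers. These pairs are counted separately.
\end{itemize}
I would then count the pairs in each case. There are $C_f^2-\Card{\mathcal{H}_1}$ ordered pairs of distinct tuples, with corrections involving $\Card{\mathcal{L}_1}$, $\Card{\mathcal{H}_0}$ and the pairs $(\mathcal{H}_0,\mathcal{L}_1)$, which are impossible since they block each other. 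Summing and regrouping via partial fractions $\frac{1}{B(B+1)}=\frac1B-\frac1{B+1}$ should yield the stated three-term formula. The factor $(C_s+1)$ in the denominators suggests that the grouping is chosen so that $C_s+1$ emerges from combining with $\ExptWR{\pi}{\theta_s}=C_f/(C_s+1)$. I expect matching the exact algebraic form to be fiddly, and I would check it on small examples.

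For the asymptotics, the covariance is $\ExptWR{\sigma}{\ExptWR{\pi}{\theta_s\theta_{s+\Delta}}}-C_f^2/(C_s+1)^2$. Under $\sigma$, the counts $\Card{\mathcal{L}_0},\Card{\mathcal{L}_1},\Card{\mathcal{H}_0},\Card{\mathcal{H}_1}$ are sums of indicators as in \eqref{expec_set}.

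In the sparse regime ($C_s,C_f$ fixed), proceed as in Lemma~\ref{taylorLemma}. Condition on the positions of the $C_s+C_f$ non-$\locvecm$ entries. With probability $1-O(1/D)$ no two of them are at distance $\Delta$, in which case all the counts except the $\mathcal{L}_2,\mathcal{G}$-type ones vanish. The $1/D$ correction comes from exactly one coincident pair. For $\Delta=D/2$ with $D$ even, the map $i\mapsto i+\Delta$ is an involution, so a coincidence is symmetric, which doubles pairs and changes the constant. Computing the probability of each single-coincidence configuration, which is proportional to $(\text{number of pairs})/(D-1)$, and substituting into the exact formula gives the stated leading terms. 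In the dense regime, apply the delta method: each count concentrates at its mean, e.g.\ $\ExptWR{\sigma}{\Card{\mathcal{L}_0}}=C_s(C_s-1)/(D-1)$, with fluctuations of order $\sqrt D$. Expanding the rational function to second order then gives an $O(D^{-2})$ remainder, with the $\Delta=D/2$ case differing through the variances of the counts. The main obstacle is the exact first identity together with bookkeeping of the second-order delta-method terms. I would verify the leading constants by symbolic algebra.
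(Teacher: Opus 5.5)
Your decomposition of $\theta_s\theta_{s+\Delta}$ into pairs $(u,v)$ matches the paper. So does your diagonal term $\Card{\mathcal{H}_1}/B_\Delta$, and your observation that $\mathcal{H}_0\times\mathcal{L}_1$ pairs contribute nothing. However, the probabilities you assign to the off-diagonal pairs are wrong, so the first display cannot come out of your computation.

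Write $\mathcal{O}_1$ and $\mathcal{O}_2$ for the $C_s$ tuples whose first, respectively second, entry is $\locvecO$, so $\Card{\mathcal{O}_1\cup\mathcal{O}_2}=B_\Delta-1$. The summand for $(u,v)$ is the probability that $u$ precedes all of $\mathcal{O}_1$ \emph{and} $v$ precedes all of $\mathcal{O}_2$. Here $u$ may come after tuples in $\mathcal{O}_2\setminus\mathcal{O}_1$, and $v$ after tuples in $\mathcal{O}_1\setminus\mathcal{O}_2$.

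When neither $u\in\mathcal{H}_0$ nor $v\in\mathcal{L}_1$, the earliest tuple among $\mathcal{O}_1\cup\mathcal{O}_2\cup\{u,v\}$ must be $u$ or $v$, which has probability $2/(B_\Delta+1)$. Given that, the other one only has to beat the $C_s$ blockers of its own coordinate, with probability $1/(C_s+1)$. So each such ordered pair contributes $2/((C_s+1)(B_\Delta+1))$, not $2/(B_\Delta(B_\Delta+1))$. Requiring both to precede every blocker is a strictly smaller event.

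A minimal check: take $C_s=C_f=1$, $\mathcal{O}_1=\{(\locvecO,\locvecm)\}$, $\mathcal{O}_2=\{(\locvecm,\locvecO)\}$, $u=(\locvecx,\locvecm)$, $v=(\locvecm,\locvecx)$. The two comparisons involve disjoint pairs of tuples, so the probability is $1/4$, whereas your formula gives $1/6$.

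Your mixed case has the same flaw plus a direction slip.
\begin{itemize}
\item If $v\in\mathcal{L}_1$, then $u$ must be first among the $B_\Delta$ tuples of $\mathcal{O}_1\cup\mathcal{O}_2\cup\{u\}$. After that, $v$ need only precede $\mathcal{O}_2$. This gives $1/((C_s+1)B_\Delta)$, not $1/(B_\Delta(B_\Delta-1))$.
\item If $u\in\mathcal{H}_0$, then $u$ is itself a blocker for $v$, so it is $v$ that must come first, not $u$.
\end{itemize}
The factor $C_s+1$ in the target formula is exactly this conditional probability. It is not a by-product of regrouping with $\ExptWR{\pi}{\theta_s}$.

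The pair counts are:
\begin{itemize}
\item $(C_f-\Card{\mathcal{H}_0})(C_f-\Card{\mathcal{L}_1})-\Card{\mathcal{H}_1}$ ordered pairs in the first case;
\item $\Card{\mathcal{H}_0}(C_f-\Card{\mathcal{L}_1})$ and $(C_f-\Card{\mathcal{H}_0})\Card{\mathcal{L}_1}$ in the two mixed cases.
\end{itemize}
With the corrected probabilities, these give
\[
\ExptWR{\pi}{\theta_s\theta_{s+\Delta}}=\frac{\Card{\mathcal{H}_1}}{B_\Delta}+\frac{2\{(C_f-\Card{\mathcal{H}_0})(C_f-\Card{\mathcal{L}_1})-\Card{\mathcal{H}_1}\}}{(C_s+1)(B_\Delta+1)}+\frac{C_f(\Card{\mathcal{L}_1}+\Card{\mathcal{H}_0})-2\Card{\mathcal{L}_1}\Card{\mathcal{H}_0}}{(C_s+1)B_\Delta}.
\]
Writing $1/B_\Delta=1/(B_\Delta+1)+1/(B_\Delta(B_\Delta+1))$ in the last term yields the stated three-term form.

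For the asymptotics your plan is the paper's. In the sparse regime, one coincident pair drives the $1/D$ term, and the probability of two or more tuples with both entries non-$\locvecm$ is $O(D^{-2})$. With the correct exact formula, the sparse constants do come out as stated.

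In the dense regime, be careful with the remainder. Let $Z=(\Card{\mathcal{L}_0},\Card{\mathcal{L}_1},\Card{\mathcal{H}_0},\Card{\mathcal{H}_1})$, $\mu=\ExptWR{\sigma}{Z}$ and $W=Z-\mu$. A second-order expansion with a third-order Lagrange remainder only gives $O(D^{-3/2})$: third derivatives are $O(D^{-3})$, while $\ExptWR{\sigma}{\|W\|^3}$ is of order $D^{3/2}$. To reach $O(D^{-2})$, the paper keeps the third-order term using $\ExptWR{\sigma}{W_rW_qW_v}=O(D)$. It then bounds a fourth-order remainder via $\ExptWR{\sigma}{\|W\|^4}=O(D^2)$.
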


\begin{theorem}
\label{countThm}
Suppose we pre-process the location vector with a random permutation $\sigma$. Let the random variable $\theta_t \equiv $ number of $\locvecx$ seen before the first $\locvecO$ under the random permutation $\pi_{\rightarrow t}$. Let $K>1$, and let $\lambda_{D,K}$ be defined as in Definition~\ref{lambda_defn}. If $C_s$ and $C_f$ are fixed as $D\rightarrow\infty$, then
\begin{align*}
\VarWR{\sigma,\pi}{\sum_{t=1}^K \frac{\theta_t}{K}}
&=V_K\left[1-\frac{1}{D-1}\left\{
(K-1-\lambda_{D,K})
\frac{(C_s+2)(C_s+C_f)}{(2C_s+1)(C_s+C_f+1)}\right.\right.\\
&\hspace*{2.2cm}\left.\left.
+\lambda_{D,K}
\frac{(C_s+1)(C_s+2)}{2C_s(2C_s+1)}
\frac{2C_s+C_f}{C_s+C_f+1}
\right\}\right] +O(D^{-2}).\\
\end{align*}
If both $C_s/D$ and $C_f/D$ are bounded away from zero, then
\begin{align*}
\VarWR{\sigma,\pi}{\sum_{t=1}^K \frac{\theta_t}{K}}
&=V_K\left[1-\frac{K-1}{2D-C_s}\left\{
1-\frac{\lambda_{D,K}C_fD}
{(K-1)(C_s+C_f)(2D-C_s)}\right\}\right]+O(D^{-2}).
\end{align*}
\end{theorem}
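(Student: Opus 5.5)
The plan is to expand the variance of the average into diagonal and off-diagonal parts, as in Theorem~\ref{ourownMAINtheorem}, and feed in the covariances from Lemma~\ref{lastlemA}. Write
\begin{align*}
\VarWR{\sigma,\pi}{\sum_{t=1}^K \frac{\theta_t}{K}} = \frac{1}{K^2}\left[\sum_{t=1}^K \VarWR{\sigma,\pi}{\theta_t} + 2\sum_{1\le s<t\le K}\CovWR{\sigma,\pi}{\theta_s}{\theta_t}\right].
\end{align*}

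\textbf{Diagonal terms.} Each $\theta_t$, taken marginally under $\sigma$ and $\pi$, is the count of $\locvecx$ seen before the first $\locvecO$ under a single uniformly random draw order. Its variance is therefore $V_{\mathrm{ind}}$ (Lemma~\ref{emptylemma}), and the diagonal sum contributes $V_{\mathrm{ind}}/K = V_K$.

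\textbf{Off-diagonal terms.} The covariance depends on $(s,t)$ only through $\Delta=t-s$, and more precisely only through whether $\Delta\equiv D/2$ modulo $D$ when $D$ is even. Indeed $\sigma$ makes the location vector exchangeable, so the joint law of the nine counts $\Card{\mathcal{L}_i(\Delta)},\dots$ depends only on whether $x_i$ and $x_{i+\Delta}$ pair positions into disjoint pairs (the case $\Delta=D/2$) or into cycles. I would take as given the bookkeeping from the proof of Theorem~\ref{ourownMAINtheorem}. Among the $\binom{K}{2}$ ordered pairs with $1\le\Delta\le K-1$, the number with $\Delta=D/2$ is $\frac{K}{2}\lambda_{D,K}$, since there are $K-D/2$ such pairs when $2K>D$ and none otherwise. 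This gives
\begin{align*}
\frac{2}{K^2}\sum_{s<t}\mathrm{Cov} = \frac{1}{K^2}\left[(K(K-1)-K\lambda_{D,K})\,c_{\mathrm{oth}} + K\lambda_{D,K}\, c_{D/2}\right] = \frac{1}{K}\left[(K-1-\lambda_{D,K})c_{\mathrm{oth}}+\lambda_{D,K}c_{D/2}\right],
\end{align*}
where $c_{\mathrm{oth}}$ and $c_{D/2}$ are the two covariance values in Lemma~\ref{lastlemA}. The $O(D^{-2})$ remainders sum to at most $K(K-1)/K^2\cdot O(D^{-2})=O(D^{-2})$, since there are at most $K(K-1)$ ordered pairs and they carry the factor $1/K^2$.

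\textbf{Fixed $C_s, C_f$.} Factor out $V_K=V_{\mathrm{ind}}/K$. Substituting the two expressions for $c_{\mathrm{oth}}$ and $c_{D/2}$ from Lemma~\ref{lastlemA}, each carrying the factor $1/(D-1)$, gives exactly the bracketed expression in the statement.

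\textbf{$C_s/D$ and $C_f/D$ bounded away from zero.} Substitute $c_{\mathrm{oth}}=-V_{\mathrm{ind}}/(2D-C_s)$ and
\begin{align*}
c_{D/2} = -V_{\mathrm{ind}}\left[\frac{1}{2D-C_s} - \frac{C_fD}{(C_s+C_f)(2D-C_s)^2}\right].
\end{align*}
This decomposition holds because
\begin{align*}
C_s(2D-C_s)+C_f(D-C_s) = (C_s+C_f)(2D-C_s) - C_fD.
\end{align*}
Collecting terms, $-(K-1)/(2D-C_s)$ is the common part, and the $\lambda_{D,K}$-weighted correction is
\begin{align*}
+\frac{\lambda_{D,K}C_fD}{(C_s+C_f)(2D-C_s)^2}.
\end{align*}
Factoring $(K-1)/(2D-C_s)$ out of both gives the claimed form.

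\textbf{Main obstacle.} The main difficulty is not the theorem itself, which is bookkeeping, but relying on Lemma~\ref{lastlemA}: its covariances require the $\sigma$-expectations of the nonlinear ratios in $\Card{\mathcal{L}_0},\Card{\mathcal{L}_1},\Card{\mathcal{H}_0},\Card{\mathcal{H}_1}$, obtained via conditioning or the delta method. For this theorem, I would only double-check two points. First, the count $\frac{K}{2}\lambda_{D,K}$ of $D/2$-pairs, including wraparound when $K>D/2$. Second, that the remainder stays $O(D^{-2})$ after multiplying by the $\lambda_{D,K}$ and $K$ weights, which holds because $\lambda_{D,K}\le 1$ and the pair count is normalized by $K^2$.
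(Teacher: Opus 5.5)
Your proposal is correct and follows the paper's route: you split the variance into the diagonal part $V_{\mathrm{ind}}/K=V_K$ and the off-diagonal covariances from Lemma~\ref{lastlemA}, with $K\lambda_{D,K}$ ordered pairs at $\Delta=D/2$, and your algebra in both regimes (including the identity $C_s(2D-C_s)+C_f(D-C_s)=(C_s+C_f)(2D-C_s)-C_fD$) checks out. One small correction to your justification: the exact joint law of the pair counts depends on more than whether $\Delta=D/2$, since it also depends on $\gcd(\Delta,D)$ (e.g.\ $D=6$ with $\Delta=1$ versus $\Delta=2$), but this is harmless because you only need the two leading-order covariances of Lemma~\ref{lastlemA}, with remainders uniform in $\Delta$.
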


\begin{proof}
The proof is similar to Theorem~\ref{ourownMAINtheorem}, where we add up the covariance terms.
\end{proof}

Each term within the square brackets in Theorem~\ref{countThm} is $< 1$. Hence the circulant permutation estimate gives a lower variance asymptotically. Substituting $C_s = f_2$ and $C_f = f_1-a$ gives the variance approximation of $m_t$.

To verify the approximations, we run experiments on synthetic data at $D = 128$ with varying $f_1,f_2,a$ values over 10000 iterations, where we similarly overlay the approximate theoretical variance when $\frac{C_s +C_f}{D} = \frac{f_1+f_2-a}{D}$ do not go to zero with the empirical MSEs in Figure~\ref{fig3CMH}. Unlike Figure~\ref{fig1CMH}, where the approximate theoretical variance overlaps with the empirical MSE, we see that at $f_1 = 18, f_2 = 32$, we see the theoretical variance curves do not overlap the empirical MSEs. However, how good is our variance approximation for $\frac{C_s+C_f}{D} \rightarrow 0$ if $D$ is large, and $f_1, f_2$ are small? 

\begin{figure*}[h]
\includegraphics[width=1.0\linewidth]{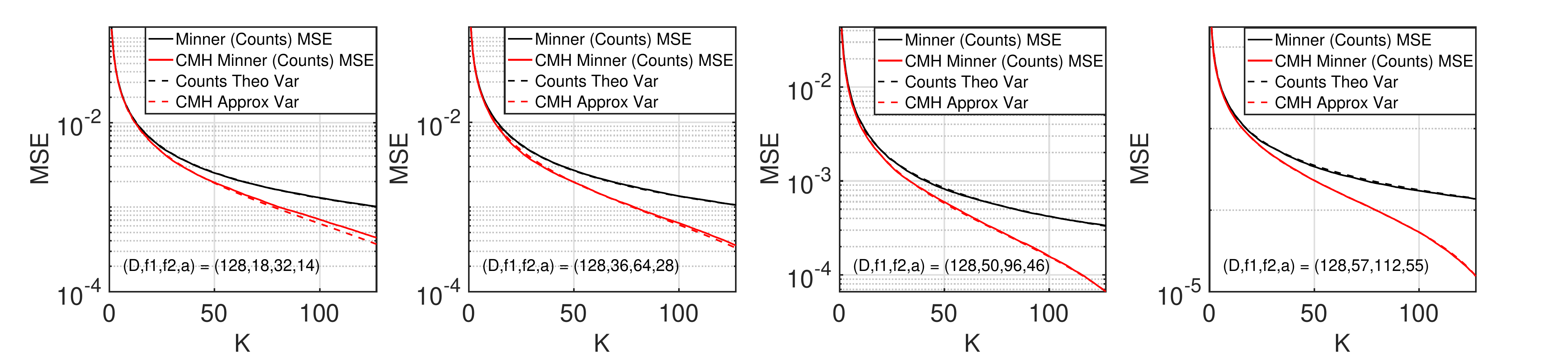}
\caption{Empirical MSE vs theoretical approximate variance in Theorem~\ref{countThm} for count data. The dotted lines overlap almost perfectly  for the last three plots. \label{fig3CMH}}
\end{figure*}

In Appendix~\ref{appendixBExpts}, we run another experiment averaged over 10000 iterations where we fix $(f_1,f_2,a) = (6,24,6)$, and increase this tuple up to $(f_1,f_2,a) = (96,128,132)$, with $D \in\{ 512, 1024, 2048, 4096\}$. In general, as $D$ increases, both our approximation variance ratios match the empirical MSEs. Finally, we  do experiments with the Words dataset as in Section~\ref{sec2} (Figure~\ref{words_data_M_CMH} in Appendix B). For the word pairs we choose, both theoretical approximate variance curves overlap the empirical MSE curves perfectly, even for the {\tt HONG-KONG} word pair, where $\frac{f_1-a}{D} = \frac{41}{65536}$ is sparse.

\section{Discussion and Conclusion} \label{sec5}

Each observation $\theta_s$ recorded by the circulant permutation estimator is correlated with another observation $\theta_{t}$, due to the circulant shift. Thus in order to theoretically compute (or compare) the variance of such an estimate, the corresponding covariance terms $\CovWR{\pi}{\theta_s}{\theta_t} = \ExptWR{\pi}{\theta_s\theta_t} - \ExptWR{\pi}{\theta_s}\ExptWR{\pi}{\theta_t}$ need to be computed. Computing $\ExptWR{\pi}{\theta_s\theta_t}$ is the first hurdle in circulant permutation algorithms, yet is needed to compute the variances for such estimators.

But by viewing MinHash under the paradigm of the draw order under sampling without replacement, we simplify the proof to compute $\ExptWR{\pi}{\theta_s\theta_t}$ in \cite{li2021c} via our approach in Lemma~\ref{lemma24}. Moreover, by using indicator functions and considering exchangeability, we can further compute $\ExptWR{\pi}{\theta_s\theta_t}$ for counts as well (our Lemma~\ref{lastlemA}), showing our approach applies beyond the original CMinHash setting.

The second hurdle with circulant permutations is finding the variance of the corresponding estimate. While Theorem 3.1 in \cite{li2021c} states a combinatorial expression for the variance, one equality in its proof does not hold in general (see Appendix~\ref{appendixA2SPAM}). Even apart from this point, the expression is not easily tractable, nor easy to analyze or compare against other MinHash algorithms, e.g. \cite{li2021c} had Theorem 3.4, Proposition 3.6, and many, many experiments in their paper just to compare CMinHash with MinHash (without even comparing against other MinHash algorithms). Researchers building on the circulant permutation framework would face similar technical difficulties when comparing variances.

Our approach suggests that exact accuracy is not needed, particularly because MinHash algorithms are used when $D$ is large in the first place. Lemma~\ref{taylorLemma} shows that by letting $D$ be large, we can do a straightforward computation to compute approximations of the variance. We demonstrate how to compute an approximation when set sizes are sparse relative to $D$, and when they are non sparse. Both approximations are accurate for practical purposes, allowing theoretical comparison of variances with other MinHash algorithms in the large $D$ scenario. 

Furthermore, our approach is not limited to CMinHash and applies to other MinHash estimators based on circulant permutation. The location vector setup of $\locvecO, \locvecx, \locvecm$ can be characterized as: ``{\bf stop} at the $s^{\text{th}}$ $\locvecO$, {\it do something} at a $\locvecx$, and continue at a $\locvecm$". Our proof approach in Lemma~\ref{lemma24} can be modified to a ``{\bf stop} at the $s^{\text{th}}$ $\locvecO$" scenario, and our approach with indicator random variables in Lemma~\ref{lastlemA} can account for a wide range of {\it do somethings}, by {\it conditioning on the element $\locvecx$ itself}, providing a direct way to compute expectations and variance for such estimators.

Thus, we believe our work makes circulant permutation estimators easier to analyze and extend. While state of the art algorithms are important, it is equally important to be able to understand and extend these algorithms. We provide a clear method to compute expectations and variance, and our proof techniques can be used directly to extend circulant permutations for other MinHash algorithms.

{\bf Limitations:} Generally, $D$ needs to be large for accurate approximations. Our experiments show that even $D = 65536$ (words data) gives good approximations. In practice, $D$ is much larger (e.g. $2^{32}$).

\begin{ack}
We would like to thank Bucknell's Presidential Fellow program, which paid a stipend to Avery Hood working on this paper. 
\end{ack}

\newpage

\bibliographystyle{plainnat}
\bibliography{reference_file.bib}

\newpage
\appendix

\section{Appendix: CMinHash and Alternative Proofs}
\label{appendixA}

\subsection{The CMinHash$(\sigma,\pi)$ algorithm}
\label{appendixA1ALGO}

CMinHash($\sigma,\pi$) is simple to implement, as it just requires a small tweak from Broder's original MinHash. We place Algorithm~\ref{broderAlgApp} and Algorithm~\ref{CMHAlgApp} side by side for comparison. The pre-processing stage for CMinHash (and by extension, algorithms using circulant permutation) consists of lines 1-4 in Algorithm~\ref{CMHAlgApp}, which replaces the independent permutation within the loop (e.g. Line 6 in Algorithm~\ref{broderAlgApp}).

\DontPrintSemicolon 
\SetKwInput{KwIn}{Input}
\SetKwInput{KwOut}{Output}

\begin{minipage}[t]{0.48\textwidth}
\begin{algorithm}[H]
\caption{Broder's MinHash \label{broderAlgApp}}

\KwIn{$\aSet{S}_1, \aSet{S}_2 \subseteq \{1,2,\hdots,D\}$\\
\phantom{{\bf Input:}} $\MHEst = 0$}
\KwOut{$\MHEst$}

\phantom{a}

\phantom{b}

\phantom{c}

\phantom{d}

\For{$t=1$ \KwTo $K$}{
  \text{Generate permutation } $\pi_t: [D]\rightarrow[D]$ \\
  $\theta_t \leftarrow \indic{\min\{\pi_t(\aSet{S}_1)\} = \min\{\pi_t(\aSet{S}_2)\}}$ \\
  $\MHEst \mathrel{+\!=} \theta_t$\;
} 
$\MHEst \leftarrow \MHEst/K$
\end{algorithm}
\end{minipage}
\hfill
\begin{minipage}[t]{0.48\textwidth}
\begin{algorithm}[H]
\caption{CMinHash($\sigma,\pi$) \label{CMHAlgApp}}

\KwIn{$\aSet{S}_1, \aSet{S}_2 \subseteq \{1,2,\hdots,D\}$\\
\phantom{{\bf Input:}} $\CMHEst = 0$}
\KwOut{$\CMHEst$}

\text{Generate permutation } $\sigma: [D]\rightarrow[D]$

$\aSet{S}_1 \leftarrow \sigma(\aSet{S}_1)$

$\aSet{S}_2 \leftarrow \sigma(\aSet{S}_2)$

\text{Generate permutation } $\pi: [D]\rightarrow[D]$

\For{$t=1$ \KwTo $K$}{
\phantom{d}

$\theta_t \leftarrow \indic{\min\{\pi_{\rightarrow t}(\aSet{S}_1)\} = \min\{\pi_{\rightarrow t}(\aSet{S}_2)\}}$ \\
  $\CMHEst \mathrel{+\!=} \theta_t$\;
} 
$\CMHEst \leftarrow \CMHEst/K$
\end{algorithm}
\end{minipage}

As both $\Expt{\MHEst}$ and $\Expt{\CMHEst}$ are unbiased estimators of $J$ due to linearity of expectation, the main result of \cite{li2021c} is to show that $\Var{\CMHEst} < \Var{\MHEst}$ for $K < D$ (always satisfied, since if $K>D$, computing $J$ directly is more efficient), provided $0 < J < 1$.

This is a surprising and impactful result, since $K$ in Broder's MinHash is the number of independent permutations, while $K$ in CMinHash$(\sigma,\pi)$ is the number of circular shifts of {\bf one} permutation. This means that the two independent permutations $\sigma,\pi$ can lead to a lower variance compared to $K$ independent permutations, and storing these two independent permutations in memory is cheaper than storing $K$ independent permutations. Conventional thought assumes that because the circulant shift is used for one permutation $\pi$, the variance necessarily needs to be higher.

However, we have not seen many papers building on the work of  \cite{li2021c} (e.g. using circulant permutations for {\it other} MinHash type algorithms), and we hypothesize this is due to the technical difficulty of the approaches in the paper, e.g. deriving a variance theorem equivalent to Theorem 3.1 in \cite{li2021c}, and then showing that the theoretical variance is indeed lower than other benchmark estimates. 

\subsection{Lemmas, Theorems and Propositions in the CMinHash$(\sigma,\pi)$ paper}
\label{appendixA2SPAM}

We give the original lemmas, theorems and propositions used in the seminal CMinHash paper, since we refer to them in the main text, and only modify them slightly to fit our notation.

We start with the following definition.

\begin{definition} Definition 2.2 in \cite{li2021c} \label{lidef222}\\
For $A,B \in \{\locvecO, \locvecx, \locvecm\}$, let $\{(A,B)~|~\Delta\}$ be the set $\{(i,j): (x_i,x_j) = (A,B), j-i = \Delta\}$. For $1 \leq \Delta \leq K-1$, define
\begin{align*}
\begin{array}{c c c}
\mathcal{L}_0(\Delta) = \{(\locvecO,\locvecO)|\Delta\},~~ & \mathcal{L}_1(\Delta) = \{(\locvecO,\locvecx)|\Delta\}, ~~& \mathcal{L}_2(\Delta) = \{(\locvecO,\locvecm)|\Delta\}, \\
\mathcal{G}_0(\Delta) = \{(\locvecm,\locvecO)|\Delta\},~~ & \mathcal{G}_1(\Delta) = \{(\locvecm,\locvecx)|\Delta\}, ~~& \mathcal{G}_2(\Delta) = \{(\locvecm,\locvecm)|\Delta\}, \\
\mathcal{H}_0(\Delta) = \{(\locvecx,\locvecO)|\Delta\},~~ & \mathcal{H}_1(\Delta) = \{(\locvecx,\locvecx)|\Delta\}, ~~& \mathcal{H}_2(\Delta) = \{(\locvecx,\locvecm)|\Delta\},
\end{array}
\end{align*}
with the following constraints for all $1 \leq \Delta \leq K-1$,
\begin{align*}
|\mathcal{L}_0| + |\mathcal{L}_1| + |\mathcal{L}_2| = |\mathcal{L}_0| + |\mathcal{G}_0| + |\mathcal{H}_0| &= a,  \\
|\mathcal{G}_0| + |\mathcal{G}_1| + |\mathcal{G}_2| = |\mathcal{L}_2| + |\mathcal{G}_2| + |\mathcal{H}_2| &= D - f, \\
|\mathcal{H}_0| + |\mathcal{H}_1| + |\mathcal{H}_2| = |\mathcal{L}_1| + |\mathcal{G}_1| + |\mathcal{H}_1| &= f-a. 
\end{align*}
\end{definition}

With this definition, \cite{li2021c} gave an expression for $\Var{\CMHEst}$.

\begin{theorem}
\label{lithm31long}
Theorem 3.1 in \cite{li2021c} (as stated) \\
Suppose $0 < a < f \leq D$ and $J \not\in\{0,1\}$. Then
\begin{align*}
\Var{\CMHEst} = \frac{J}{K} + \frac{(K-1)\widetilde{\mathcal{E}}}{K}-J^2,
\end{align*}
where $l = \max(0,D-2f+a)$ and
\begin{align*}
\widetilde{\mathcal{E}} & = \sum_{\Xi} \left( \frac{l_0}{f+g_0+g_1} + \frac{a(g_0+l_2)}{(f+g_0+g_1)f}  \right)\left( \sum_{s=l}^{D-f-1}\frac{\nck{D-f}{s}\nck{f-a-1}{D-f-s-1}}{\nck{D-a-1}{D-f-1}} \right. \\
& \left.  \times \frac{\nck{s}{n_1}\nck{D-f-s}{n_2}\nck{D-f-s}{n_3}\nck{f-a-(D-f-s)}{n_4}\nck{a-1}{a-l_1-l_2}}{\nck{D-1}{a}}    \right),
\end{align*}

with the feasible set $\Xi = \{l_0,l_2,g_0,g_1\}$ satisfies the constraints in Definition~\ref{lidef222} and 
\begin{align*}
n_1 & = g_0 - (D-f-s-g_1), \\
n_2 & = D-f-s-g_1, \\
n_3 & = l_2 - g_0 + (D-f-s-g_1), \\
n_4 & = l_1 - (D-f-s-g_1).
\end{align*}
\end{theorem}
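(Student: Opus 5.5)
The natural route is to expand the variance into pairwise products and reduce it to a single $\sigma$-expectation of the expression in Lemma~\ref{lemma24}. Since each $\indicS{t}$ is Bernoulli with mean $J$,
\begin{align*}
\Var{\CMHEst}=\frac{1}{K^2}\Big(KJ+\sum_{s\neq t}\ExptWR{\sigma,\pi}{\indicS{s}\indicS{t}}\Big)-J^2 .
\end{align*}
Conditioning on $\sigma$ and applying Lemma~\ref{lemma24}, each cross term equals $\ExptWR{\sigma}{\cdot}$ of
\begin{align*}
\frac{l_0}{f+g_0+g_1}+\frac{a(g_0+l_2)}{f(f+g_0+g_1)},
\end{align*}
evaluated at the counts $(l_0,l_1,l_2,g_0,g_1)$ of $\vec{x}$ at lag $\Delta=|t-s|$.

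If this $\sigma$-expectation does not depend on $\Delta$ (call it $\widetilde{\mathcal{E}}$), then there are $K(K-1)$ ordered cross pairs, and the stated formula $J/K+(K-1)\widetilde{\mathcal{E}}/K-J^2$ follows at once. I would justify this by a relabelling argument. Since $\sigma$ is uniform, $\vec{x}$ is a uniformly random circular arrangement of $a$ $\locvecO$'s, $f-a$ $\locvecx$'s and $D-f$ $\locvecm$'s. One then hopes that the map $i\mapsto i\Delta \pmod D$ carries lag-$\Delta$ pairs to lag-$1$ pairs. This needs $\gcd(\Delta,D)=1$.

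I expect this to be the main obstacle, and in fact a gap. When $\gcd(\Delta,D)=g>1$, the lag-$\Delta$ pairs form $g$ disjoint cycles of length $D/g$ rather than a single $D$-cycle. The joint law of the nine counts then genuinely changes; the extreme case is $\Delta=D/2$, where pairs are $2$-cycles and $\mathcal{L}_1(\Delta)$ and $\mathcal{H}_0(\Delta)$ are forced to have equal size. So the cleanest honest version proves the formula with $\widetilde{\mathcal{E}}$ defined at $\Delta=1$ under the hypothesis that every lag behaves like a single cycle (e.g.\ $D$ prime, or $K$ small relative to $D$ with coprimality). Otherwise $\widetilde{\mathcal{E}}$ must be replaced by an average of lag-dependent expectations. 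This is consistent with the remark in the main text that one equality in the original proof does not hold in general.

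Granting the reduction to $\Delta=1$, the remaining step is to count circular words with prescribed adjacency counts, and this is what produces the double sum in $\widetilde{\mathcal{E}}$. I would proceed in two stages.
\begin{itemize}
\item First arrange the $f-a$ $\locvecx$'s and $D-f$ $\locvecm$'s cyclically, ignoring the $\locvecO$'s. Let $s$ be the number of $\locvecm$'s followed by a $\locvecm$, so that $D-f-s$ of them are followed by an $\locvecx$. Counting blocks of $\locvecm$'s separated by $\locvecx$'s gives the weight $\nck{D-f}{s}\nck{f-a-1}{D-f-s-1}/\nck{D-a-1}{D-f-1}$, with $s$ ranging from $l=\max(0,D-2f+a)$ to $D-f-1$.
\item Then insert the $a$ $\locvecO$'s into the $D-a$ gaps, uniformly over $\nck{D-1}{a}$ placements after fixing one anchor. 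Inserting into a $(\locvecm,\locvecm)$ gap converts it to $(\locvecm,\locvecO),(\locvecO,\locvecm)$; other gap types convert analogously, and consecutive $\locvecO$'s contribute to $l_0$. Choosing how many $\locvecO$-runs go into each of the four gap types gives the binomials $\nck{s}{n_1}$, $\nck{D-f-s}{n_2}$, $\nck{D-f-s}{n_3}$, $\nck{f-a-(D-f-s)}{n_4}$, and $\nck{a-1}{a-l_1-l_2}$ distributes the $\locvecO$'s among the runs.
\end{itemize}
Solving the constraints of Definition~\ref{defn22} for the run counts in terms of $(l_0,l_2,g_0,g_1,s)$ yields the stated $n_1,\dots,n_4$. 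Summing the Lemma~\ref{lemma24} expression against these probabilities over the feasible set $\Xi$ then gives $\widetilde{\mathcal{E}}$. The bookkeeping here is routine but error-prone, and I would check it on small $D$ by brute-force enumeration.
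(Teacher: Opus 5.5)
Your assessment matches the paper's. The paper does not prove this statement; it defers to \cite{li2021c}. It then shows that the step ``$\ExptWR{\sigma,\pi}{\indicS{s}\indicS{s+\Delta}}$ does not depend on $\Delta$'' is false, so the theorem as stated is wrong.

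The two routes differ. The paper refutes the statement with an explicit instance: $D=4$, $K=3$, $\aSet{S}_1=\{1,2\}$, $\aSet{S}_2=\{1\}$. There lag $1$ gives $7/36$, which is exactly what $\widetilde{\mathcal{E}}$ evaluates to, but lag $2=D/2$ gives $1/6$. So the true variance is $13/324$, while the formula returns $5/108$.

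Your cycle decomposition explains why such an example sits at $\Delta=D/2$. Your relabelling $i\mapsto \Delta i \pmod D$ also gives something the paper never states: the formula holds whenever every lag $1\le\Delta\le K-1$ is coprime to $D$ (e.g.\ $D$ prime).

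What your argument cannot do on its own is establish failure. A change in the joint law of the nine counts need not change the expectation, and the law need not even change. The paper's exact identity $\ExptWR{\sigma,\pi}{\indicS{s}\indicS{s+\Delta}}=J^2-\frac{J(1-J)}{f-1}\ExptWR{\sigma}{\frac{X_\Delta}{2f-X_\Delta}}$ shows that $\Delta$ enters only through the law of $X_\Delta$. For $f=2$, the entire joint law of the counts is identical for every lag with $D/\gcd(\Delta,D)\ge 3$ (e.g.\ $D=6$, $\Delta=2$). So ``genuinely changes'' is an overstatement, and an explicit computation like the paper's is what actually settles the question.

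The same identity explains why the paper's approximate results distinguish at most the lag $\Delta=D/2$, through $\lambda_{D,K}$. The first two factorial moments of $X_\Delta$ agree for all lags with $D/\gcd(\Delta,D)\ge 3$.

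One boundary caveat applies to your restricted version. Your block-counting stage, like the stated double sum, needs $D>f$, because at $D=f$ the range $l\le s\le D-f-1$ is empty. For instance, $D=f=3$, $a=2$, $K=2$ has a coprime lag and true variance $1/18$, yet the formula returns $-1/9$. Your ``honest version'' should therefore assume $f<D$, or treat $D=f$ separately; there every lag gives $\widetilde{\mathcal{E}}=a(a-1)/(f(f-1))$.
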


The proof of this theorem can be found in \cite{li2021c}.

\begin{tcolorbox}
{\bf Remark:} During the rebuttal period of NeurIPS 2026, we wanted to further convince reviewers that our approach for the Minner estimator (Section~\ref{sec4}) was still simpler to understand, compared to the combinatorial expression we would get, following Li's approach. We never were able to replicate Li's approach for the Minner estimator, and we thought this was because we did not understand enough combinatorics. \\

Since ChatGPT 5.6 SOL was released, we asked it to apply Li's approach to the Minner estimator, in order to craft a better rebuttal for reviewers. Unfortunately, SOL found an error in Theorem 3.1, invalidating their approach. \\ 

The following italics is  output (slightly modified) from SOL, and the authors take full responsibility for it.\\

{\it In the proof of Theorem 3.1, it is stated that $\mathbb{E}_{\sigma,\pi}\left({\indicS{s}\indicS{s+\Delta}}\right)$ is the same for all $1 \leq \Delta \leq K -1$ because $\sigma$ is random. This does not hold in general. For example, take $D = 4, K =3, \mathcal{S}_1 = \{1,2\}, \mathcal{S}_2= \{1\}$. Then $a= 1, f=2$ and $J =1/2$. After fixing the position of $\locvecO$, the three possible location vectors are $(\locvecO,\locvecx, \locvecm,\locvecm)$, $(\locvecO, \locvecm, \locvecx, \locvecm)$ and $(\locvecO,\locvecm,\locvecm,\locvecx)$. We compute the expectation and variance by first principles. By Lemma 1,
\begin{align*}
\mathbb{E}_{\sigma,\pi}\left({\indicS{1}\indicS{2}}\right) & = \frac{1}{3}\left( \frac{1}{6} + \frac{1}{4} + \frac{1}{6} \right) = \frac{7}{36}, \\
\mathbb{E}_{\sigma,\pi}\left({\indicS{1}\indicS{3}}\right) & = \frac{1}{3}\left( \frac{1}{4} + 0 + \frac{1}{4} \right) = \frac{1}{6},
\end{align*}
The variance is hence
\begin{align*}
\mathrm{Var}_{\sigma,\pi}(\CMHEst) & = \frac{J}{3} + \frac{2}{9}\left( 2 \mathbb{E}_{\sigma,\pi}\left({\indicS{1}\indicS{2}}\right) + \mathbb{E}_{\sigma,\pi}\left({\indicS{1}\indicS{3}}\right)  \right) - J^2 \\
 & = \frac{13}{324}.
\end{align*}
On the other hand, applying Theorem 3.1 gives $\mathrm{Var}_{\sigma,\pi}(\CMHEst) = 5/108$. 
}
\end{tcolorbox}

In contrast, the approach in our paper considers every value of $\Delta$. To be precise, the derivation in the proof of Theorem 3.1 uses $\Delta = 1$, but the variance also involves the corresponding expectations for $2 \leq \Delta \leq K-1$. Even apart from this point, the variance is expressed as a combinatorial sum under several constraints, which makes it difficult to compare theoretically against the variance given by other MinHash algorithms. Moreover, the current approach taken to derive the result in Theorem 3.1 would need to be replicated for other MinHash type algorithms using circulant permutation, which is not straightforward.

In fact, in order for \cite{li2021c} to show that $\Var{\CMHEst} < \Var{\MHEst}$, they state and prove the following lemma and theorem.

\begin{lemma}
Lemma 3.3 (Strict Increment) in \cite{li2021c} \\
Let $f > a > 0$ and $K$ be arbitrary and fixed. Denote $\widetilde{\mathcal{E}}_D$ as in Theorem 3.1, with $D$ as a parameter. Then $\widetilde{\mathcal{E}}_{D+1} > \widetilde{\mathcal{E}}_D$ for all $D \geq f$.
\end{lemma}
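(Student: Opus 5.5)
I would prove the strict increment by a \emph{coupling in $D$} rather than by manipulating the combinatorial sum in Theorem~\ref{lithm31long} directly. Given the remark above, I would not rely on the claim that $\mathbb{E}_{\sigma,\pi}(\indicS{s}\indicS{s+\Delta})$ is the same for every $\Delta$. Instead I read $\widetilde{\mathcal{E}}_D$ through Lemma~\ref{lemma24} as the $\Delta=1$ quantity
\begin{align*}
\widetilde{\mathcal{E}}_D=\ExptWR{\sigma}{\frac{|\mathcal{L}_0|+(|\mathcal{G}_0|+|\mathcal{L}_2|)J}{f+|\mathcal{G}_0|+|\mathcal{G}_1|}},
\end{align*}
where the counts are taken over adjacent pairs of the circular location vector $\vec{x}$, uniformly randomized by $\sigma$. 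A uniformly random circular arrangement with $D+1$ entries is obtained from one with $D$ entries by inserting one extra $\locvecm$ into a uniformly chosen gap among the $D$ gaps. So I write $\widetilde{\mathcal{E}}_{D+1}-\widetilde{\mathcal{E}}_D$ as an expectation, over $\sigma$ and over the inserted gap, of the change in the ratio $N/M$. Here $N$ and $M$ denote the numerator and denominator above.

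The first step is the gap-by-gap case analysis. Inserting $\locvecm$ into a gap whose endpoints are $(x_i,x_{i+1})$ replaces that pair by $(x_i,\locvecm),(\locvecm,x_{i+1})$. If either endpoint is already $\locvecm$, then one checks that only $|\mathcal{G}_2|$ changes, so $N$ and $M$ are unchanged. The four remaining gap types each increase $M$ by exactly $1$, and they change $N$ as follows.
\begin{itemize}
\item An $(\locvecO,\locvecO)$ gap changes $N$ by $2J-1$, since $|\mathcal{L}_0|$ drops by one while $|\mathcal{L}_2|$ and $|\mathcal{G}_0|$ each rise by one.
\item An $(\locvecO,\locvecx)$ gap or an $(\locvecx,\locvecO)$ gap changes $N$ by $J$.
\item An $(\locvecx,\locvecx)$ gap leaves $N$ unchanged.
\end{itemize}
Hence, for a fixed $\sigma$, the change equals $D^{-1}\sum_{\text{gaps}}\bigl(\delta N_{\mathrm{gap}}-N/M\bigr)/(M+1)$, where the sum runs only over gaps between two non-$\locvecm$ entries.

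The second step is to show that this is positive \emph{in expectation over $\sigma$}. I expect this to be the main obstacle. It fails pointwise: for instance, a configuration whose non-$\locvecm$ neighbours are all $(\locvecx,\locvecx)$ gives a negative change. To handle this, I would condition on the positions of the $f$ non-$\locvecm$ entries and on the gap structure, namely the number $r$ of adjacent non-$\locvecm$ pairs. Given these, the labels $\locvecO/\locvecx$ are exchangeable. I would therefore compute the conditional expectations of $|\mathcal{L}_0|$, of the mixed counts, and of $|\mathcal{H}_1|$ among these $r$ pairs using hypergeometric moments, namely $a(a-1)/(f(f-1))$ per pair and so on.

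The target inequality is then that the mean of $\delta N$ over non-$\locvecm$ gaps, which is $(a/f)^2(2J-1)+2J\,a(f-a)/(f(f-1))$ up to the without-replacement correction, exceeds $N/M$ in an appropriate averaged sense. Here $N/M$ is close to $J^2$, which is below $J$, and I would use this to compare. The fact that $N/M$ and the gap counts are correlated is where the real work lies. What I would try first is to symmetrise over the $\locvecO/\locvecx$ labelling with the denominators fixed, giving a convexity/Jensen comparison in $1/M$. Failing that, I would run an induction on $r$ that removes one adjacent non-$\locvecm$ pair at a time.

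The third step is strictness. This uses $0<a<f$, so that $0<J<1$, together with $D\ge f$. These ensure that with positive probability some gap of type $(\locvecO,\locvecx)$ is present, and that gap type contributes a strictly positive term. Finally, I would sanity-check the conclusion against the asymptotics of Lemma~\ref{taylorLemma}, where $J^2-J(1-J)/B_D$ is visibly increasing in $D$.
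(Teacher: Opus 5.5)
The paper only quotes this lemma from \cite{li2021c} and gives no proof of it. Your setup is sound: you read $\widetilde{\mathcal{E}}_D$ as the $\Delta=1$ quantity, you use the insertion coupling (valid after a uniform rotation, which the $\Delta=1$ statistic ignores), and your gap-by-gap table is correct. The genuine gap is your second step. You leave it open, and the heuristic you give for it is misleading.

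Once you condition on the set $P$ of non-$\locvecm$ positions, both $r$ and $M=2f-r$ are constants. The quantity to average, $(2J-1)\Card{\mathcal{L}_0}+J(\Card{\mathcal{L}_1}+\Card{\mathcal{H}_0})-rN/M$, is then linear in the label counts. So there is no correlation between $N/M$ and the gap counts to handle, and you need neither Jensen in $1/M$ nor an induction on $r$.

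What you do need is to keep the without-replacement terms you propose to drop. Computed exactly, the mean of $\delta N$ over the $r$ gaps is $J^2+\frac{J(1-J)}{f-1}$, and $\ExptWR{\sigma}{N\mid P}/M=J^2-\frac{J(1-J)}{f-1}\frac{r}{M}$. The comparison with $J$ is therefore a red herring. Both sides equal $J^2$ to leading order, and the sign is decided entirely by the $1/(f-1)$ corrections. Keeping them gives a conditional expected change of $\frac{J(1-J)}{f-1}\cdot\frac{2fr}{DM(M+1)}\geq 0$.

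The identity \eqref{exactCMHcovariance}, from the proof of Lemma~\ref{taylorLemma}, packages exactly this computation. At $\Delta=1$ it gives $\widetilde{\mathcal{E}}_D=J^2-\frac{J(1-J)}{f-1}\ExptWR{\sigma}{h(X)}$. Here $h(x)=x/(2f-x)$ is strictly increasing, and $X$ counts circularly adjacent non-$\locvecm$ pairs. Your insertion lowers $X$ by one when the chosen gap lies between two non-$\locvecm$ entries and leaves it unchanged otherwise. So $X_{D+1}\leq X_D$ pointwise under the coupling, and the lemma follows at once.

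This also repairs your third step. Strictness cannot come from the presence of an $(\locvecO,\locvecx)$ gap. Such a gap contributes $J-N/M\geq 0$, but $(\locvecx,\locvecx)$ and $(\locvecO,\locvecO)$ gaps in the same configuration can contribute negatively. Strictness comes instead from three facts: $J(1-J)>0$; $\Prob{X_D\geq 1}>0$, since $f\geq 2$ and $D\geq f$; and $h$ being strictly increasing.
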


\begin{theorem}
Theorem 3.4 (Uniform Superiority) in \cite{li2021c} (rephrased slightly)\\
For any two sets $\aSet{S}_1, \aSet{S}_2$, with $J \neq 0$ or $J \neq 1$, and $K > 1$, it holds that $\Var{\CMHEst} < \Var{\MHEst}$.
\end{theorem}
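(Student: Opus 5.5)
The plan is to split the variance into diagonal and off-diagonal parts and show that every off-diagonal covariance is \emph{exactly} (not just asymptotically) strictly negative. Since $\Expt{\indicS{t}}=J$ for every $t$,
\begin{align*}
\VarWR{\sigma,\pi}{\CMHEst}=\frac{J(1-J)}{K}+\frac{2}{K^2}\sum_{1\le s<t\le K}\CovWR{\sigma,\pi}{\indicS{s}}{\indicS{t}},
\end{align*}
and $\Var{\MHEst}=J(1-J)/K$ by Theorem~\ref{OGMH}. So it suffices to show $\ExptWR{\sigma,\pi}{\indicS{s}\indicS{s+\Delta}}<J^2$ for each $1\le\Delta\le K-1$. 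I work with the circulant location vector, with indices taken mod $D$ and $K\le D$, so that every $\Delta<D$.

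\emph{Step 1: rewrite the conditional expectation from Lemma~\ref{lemma24}.} Abbreviate $l_i=\Card{\mathcal{L}_i(\Delta)}$, $g_i=\Card{\mathcal{G}_i(\Delta)}$, $h_i=\Card{\mathcal{H}_i(\Delta)}$. The denominator $f+g_0+g_1$ counts the tuples that are not $(\locvecm,\locvecm)$, so it equals $N:=2f-m$, where
\begin{align*}
m=l_0+l_1+h_0+h_1
\end{align*}
is the number of tuples with both entries in $\{\locvecO,\locvecx\}$. The constraints of Definition~\ref{defn22} give $l_0+l_1+l_2=a$ and $l_0+g_0+h_0=a$, hence $g_0+l_2=2a-2l_0-l_1-h_0$. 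Substituting this and using $a=Jf$ (so that $2aJ=2fJ^2$), the numerator of $\ExptWR{\pi}{\indicS{s}\indicS{s+\Delta}}-J^2$ collapses to
\begin{align*}
(1-J)^2\,l_0-J(1-J)(l_1+h_0)+J^2h_1 .
\end{align*}
Consequently $\ExptWR{\pi}{\indicS{s}\indicS{s+\Delta}}-J^2$ equals this expression divided by $2f-m$. This is a routine algebraic check.

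\emph{Step 2: average over $\sigma$ by conditioning.} The difficulty is that the denominator $2f-m$ is random and correlated with the numerator, so I cannot simply take expectations of numerator and denominator separately. To get around this, I condition on the set $P$ of the $f$ positions carrying $\locvecO$ or $\locvecx$. Given $P$, the quantity $m$, and hence the denominator, is fixed. Under $\sigma$, the $a$ labels $\locvecO$ are a uniformly random $a$-subset of $P$. Therefore, for each of the $m$ pairs with both entries in $P$:
\begin{itemize}
\item the pair is $(\locvecO,\locvecO)$ with probability $\frac{a(a-1)}{f(f-1)}$;
\item it is a mixed pair (counted in $l_1+h_0$) with probability $\frac{2a(f-a)}{f(f-1)}$;
\item it is $(\locvecx,\locvecx)$ with probability $\frac{(f-a)(f-a-1)}{f(f-1)}$.
\end{itemize}
Plugging these into the Step 1 numerator and setting $a=Jf$, the $f^2$ terms cancel exactly, leaving
\begin{align*}
\frac{m}{f(f-1)}\bigl(-J(1-J)f\bigr)=-\frac{J(1-J)\,m}{f-1}.
\end{align*}
Hence the exact identity
\begin{align*}
\CovWR{\sigma,\pi}{\indicS{s}}{\indicS{s+\Delta}}=-\frac{J(1-J)}{f-1}\,\mathbb{E}_\sigma\!\left(\frac{m}{2f-m}\right)\le 0 .
\end{align*}

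\emph{Step 3: strictness, which I expect to be the main obstacle.} I need $\Prob{m>0}>0$ for every $\Delta$. The condition $0<a<f$ forces $f\ge2$, and $\sigma$ is uniform, so with positive probability two of the $P$-positions sit at circular distance $\Delta$. That gives $m\ge1$ with positive probability. Since $0<J<1$ and $f-1\ge1$, each covariance is then strictly negative. Summing over the $\binom{K}{2}\ge1$ pairs (because $K>1$) yields $\Var{\CMHEst}<J(1-J)/K=\Var{\MHEst}$.

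The care needed in Step 3 is the boundary bookkeeping for the circulant indexing: the case $D$ even with $\Delta=D/2$, where pairs are double-counted, and making sure the tuple counts in Definition~\ref{defn22} are the circulant ones. This does not affect the sign of the covariance, only its constant.
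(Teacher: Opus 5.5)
Your proposal is correct and is essentially the paper's own argument. The paper likewise starts from Lemma~\ref{lemma24} and conditions on the $f$ positions carrying $\locvecO$ or $\locvecx$, so that the denominator $2f-X_\Delta$ (your $2f-m$) is fixed. This yields the exact identity \eqref{exactCMHcovariance2}, $\CovWR{\sigma,\pi}{\indicS{s}}{\indicS{s+\Delta}}=-\frac{J(1-J)}{f-1}\ExptWR{\sigma}{X_\Delta/(2f-X_\Delta)}$, from which the paper reads off Theorem 3.4 by summing covariances; your Step~3 merely makes explicit what the paper leaves implicit, namely strictness via $\Prob{X_\Delta>0}>0$ (since $f\ge 2$ and $\Delta\le D-1$) and the standing assumption $K\le D$.
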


Finally, the CMinHash$(\sigma,\pi)$ estimator has some nice properties, which \cite{li2021c} gives in the following two propositions.

\begin{proposition}
Proposition 3.2 (Symmetry) in \cite{li2021c} \\
$\Var{\CMHEst}$ is the same for the $(D,f,a)$ data pair and the $(D,f,f-a)$ data pair, for $0 \leq a \leq f \leq D$.
\end{proposition}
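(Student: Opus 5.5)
The plan is to prove the symmetry by a direct coupling argument on the location vector $\vec{x}$, rather than through the combinatorial sum of Theorem~\ref{lithm31long}. That sum is awkward here, and its derivation relies on the equality flagged in the remark above.

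The key observation is the one used in Lemma~\ref{lemma24}. Under $\pi_{\rightarrow t}$, we have $\indicS{t}=1$ exactly when, reading the shifted draw order, the first entry of $\vec{x}$ that is not a $\locvecm$ is a $\locvecO$. This requires $f\geq 1$. If $f=0$, both configurations are empty and there is nothing to prove. Otherwise, that first non-$\locvecm$ entry is either a $\locvecO$ or a $\locvecx$, so $\indicS{t}$ is a function only of the pattern of $\locvecO,\locvecx,\locvecm$ symbols in $\vec{x}$ and of $\pi$.

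Next, I would describe the law of $\vec{x}$ after pre-processing by $\sigma$. It is uniform over all sequences of length $D$ containing $a$ copies of $\locvecO$, $f-a$ copies of $\locvecx$, and $D-f$ copies of $\locvecm$. Define the relabelling map $\phi$ that swaps $\locvecO\leftrightarrow\locvecx$ and fixes $\locvecm$. Then $\phi$ is a bijection from sequences with counts $(a,f-a,D-f)$ onto sequences with counts $(f-a,a,D-f)$. Hence $\phi(\vec{x})$ has exactly the $\sigma$-induced law of the $(D,f,f-a)$ data pair.

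I would then couple the two configurations by using the same $\sigma$ (through $\phi$) and the same $\pi$. Under this coupling, for every shift $t$ the first non-$\locvecm$ entry sits in the same position in both vectors, and its type is swapped. Therefore $\indicS{t}'=1-\indicS{t}$ for all $1\le t\le K$ simultaneously, where the prime denotes the indicator for the $(D,f,f-a)$ pair. Averaging over $t$ gives $\theta'^{(\mathrm{CMH})}=1-\CMHEst$ almost surely. Since $\mathrm{Var}(1-X)=\mathrm{Var}(X)$, the two variances coincide, and the same argument gives equality of the full joint laws.

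The only step needing care is the claim that $\indicS{t}$ depends on $\vec{x}$ only through the first non-$\locvecm$ symbol in the shifted order. This is exactly the sampling-without-replacement view of $\min\{\pi_{\rightarrow t}(\aSet{S}_1)\}=\min\{\pi_{\rightarrow t}(\aSet{S}_2)\}$ described before Lemma~\ref{lemma24}. The boundary cases $a=0$ and $a=f$ give zero variance on both sides and are consistent with the argument. As a sanity check, the leading terms of Theorem~\ref{ourownMAINtheorem} depend on $a$ only through $J(1-J)$. That quantity is not itself symmetric in $a\mapsto f-a$ (it becomes $a(f-a)/f^2$ only after the $J$-scaling), so I would not use it as the proof. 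The coupling is exact and avoids this issue entirely.
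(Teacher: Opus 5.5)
Your coupling proof is correct, and it takes a genuinely different route from the paper. The paper gets the proposition as an immediate consequence of the exact identity \eqref{exactCMHcovariance2}. There, every covariance equals $-\frac{J(1-J)}{f-1}\ExptWR{\sigma}{\frac{X_\Delta}{2f-X_\Delta}}$, and $X_\Delta$ depends only on where the $f$ non-$\locvecm$ entries sit. Hence $\VarWR{\sigma,\pi}{\CMHEst}=J(1-J)\,g(D,f,K)$ with $g$ free of $a$. Since $a\mapsto f-a$ sends $J=a/f$ to $1-J$, the variance is unchanged.

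Your argument instead relabels $\locvecO\leftrightarrow\locvecx$ under a common $(\sigma,\pi)$. This gives the pathwise identity that every indicator for the $(D,f,f-a)$ pair equals $1-\indicS{t}$, so that estimator is $1-\CMHEst$ almost surely. What your route buys:
\begin{itemize}
\item It needs neither Lemma~\ref{lemma24} nor any covariance computation.
\item It handles the degenerate cases $f\le 1$ and $a\in\{0,f\}$ with no separate argument, whereas \eqref{exactCMHcovariance2} divides by $f-1$.
\item It proves more than the variance statement: the whole law of the estimator is reflected about $1/2$. Your phrase ``equality of the full joint laws'' should read that the law of one estimator is the law of one minus the other.
\end{itemize}
What the paper's route buys is that the same factorisation also gives Proposition 3.6 (the ratio to $J(1-J)/K$ does not depend on $a$) and Theorem 3.4 (all covariances are negative). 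The coupling alone does not give these.

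One correction to your closing sanity check: $J(1-J)$ \emph{is} symmetric under $a\mapsto f-a$. With $f$ held fixed, $J=a/f$, so $J(1-J)=a(f-a)/f^2$ exactly. The only reason Theorem~\ref{ourownMAINtheorem} cannot serve as a proof is that it is approximate. Replacing it with the exact identity \eqref{exactCMHcovariance2} turns your sanity check into precisely the paper's proof.
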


\begin{proposition}
Proposition 3.6 (Consistent Improvement) in \cite{li2021c} \\
Suppose $f$ is fixed. In terms of $a$, the variance ratio $\frac{\Var{\MHEst}}{\Var{\CMHEst}}$ is a constant for any $0 < a < f$.
\end{proposition}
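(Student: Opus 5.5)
The plan is to prove the statement exactly, not just to first order. We show that every covariance $\CovWR{\sigma,\pi}{\indicS{s}}{\indicS{s+\Delta}}$ factors as $-J(1-J)\,c_{D,f}(\Delta)$, where $c_{D,f}(\Delta)$ does not depend on $a$. Then
\begin{align*}
\VarWR{\sigma,\pi}{\CMHEst}=\frac{J(1-J)}{K}\Big[1+\frac{2}{K}\sum_{1\le s<t\le K}\big(-c_{D,f}(t-s)\big)\Big],
\end{align*}
and dividing $\Var{\MHEst}=J(1-J)/K$ (Theorem~\ref{OGMH}) by this gives a ratio that depends only on $(D,f,K)$. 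It is therefore constant in $a$ for $0<a<f$. Since $0<a<f$ forces $f\ge 2$, all denominators below are nonzero.

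The starting point is the exact formula of Lemma~\ref{lemma24} for $\ExptWR{\pi}{\indicS{s}\indicS{s+\Delta}}$. We then average it over $\sigma$ in two stages, which is the conditioning idea already used for Lemma~\ref{taylorLemma}.

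\emph{Stage 1.} Condition on the set $P$ of the $f$ circulant positions that are not $\locvecm$. Given $P$, define:
\begin{itemize}
\item $p$, the number of indices $i$ with both $i$ and $i+\Delta$ in $P$;
\item $g=\Card{\mathcal{G}_0}+\Card{\mathcal{G}_1}$, the number of $i\notin P$ with $i+\Delta\in P$;
\item $q=\Card{\mathcal{L}_1}+\Card{\mathcal{L}_2}+\Card{\mathcal{H}_1}+\Card{\mathcal{H}_2}-p$, more simply the number of $i\in P$ with $i+\Delta\notin P$.
\end{itemize}
The key combinatorial observation is that, because the shift is circulant, $g=q=f-p$. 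Indeed, each element of $P$ has exactly one successor $i+\Delta$ and exactly one predecessor $i-\Delta$, so the number of elements of $P$ whose successor lies outside $P$ and the number whose predecessor lies outside $P$ both equal $f-p$. This holds for every $\Delta$, including $\Delta=D/2$ with $D$ even: the counts are of indices $i$, not of unordered pairs, so coincident pairs cause no issue. In particular, the denominator $f+g=2f-p$ in Lemma~\ref{lemma24} is a function of $P$ alone.

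\emph{Stage 2.} Given $P$, the labels $\locvecO$ and $\locvecx$ are a uniformly random assignment of $a$ $\locvecO$'s among the $f$ positions of $P$. Hence
\begin{align*}
\mathbb{E}\big[\Card{\mathcal{L}_0}\mid P\big]=p\,\frac{a(a-1)}{f(f-1)},\qquad \mathbb{E}\big[\Card{\mathcal{G}_0}\mid P\big]=g\,\frac{a}{f},\qquad \mathbb{E}\big[\Card{\mathcal{L}_2}\mid P\big]=q\,\frac{a}{f}.
\end{align*}
Substitute these into Lemma~\ref{lemma24}, using $J=a/f$ and $\frac{a-1}{f-1}=J-\frac{1-J}{f-1}$. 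The conditional numerator becomes
\begin{align*}
pJ^2-\frac{pJ(1-J)}{f-1}+2(f-p)J^2=J^2(2f-p)-\frac{pJ(1-J)}{f-1}.
\end{align*}
Dividing by $2f-p$ yields
\begin{align*}
\mathbb{E}\big[\indicS{s}\indicS{s+\Delta}\mid P\big]=J^2-J(1-J)\,\frac{p}{(f-1)(2f-p)}.
\end{align*}
Averaging over the uniformly random $P$ gives
\begin{align*}
\CovWR{\sigma,\pi}{\indicS{s}}{\indicS{s+\Delta}}=-J(1-J)\,c_{D,f}(\Delta),\qquad c_{D,f}(\Delta)=\mathbb{E}_P\!\left[\frac{p}{(f-1)(2f-p)}\right],
\end{align*}
and $c_{D,f}(\Delta)$ depends only on $D$, $f$ and $\Delta$.

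Summing the covariances over $1\le s<t\le K$ completes the argument. This computation also shows every covariance is nonpositive (indeed negative whenever $p>0$ has positive probability). That fact is what the discussion after Theorem~\ref{ourownMAINtheorem} uses.

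The main obstacle is Stage 1: establishing cleanly that $\Card{\mathcal{G}_0}+\Card{\mathcal{G}_1}$ equals $\Card{\mathcal{L}_2}+\Card{\mathcal{H}_2}$ and that both equal $f-p$, including the $\Delta=D/2$ case. Once that identity holds, the $a$-dependence collapses into the single factor $J(1-J)$ by routine algebra.

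We also need the ratio to be well defined, that is, $\VarWR{\sigma,\pi}{\CMHEst}>0$. Both the numerator and the denominator carry the common factor $J(1-J)\neq 0$, so the bracket is a quantity depending only on $(D,f,K)$. I would argue it is strictly positive by exhibiting a single $\sigma$ configuration on which $\CMHEst$ is nondegenerate. If that step proves awkward, I would simply restrict the statement to the case where the bracket is nonzero.
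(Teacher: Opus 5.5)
Your proposal is correct and is essentially the paper's own argument. Conditioning on the $f$ non-$\locvecm$ positions, your identity $g=q=f-p$ is the paper's \eqref{CrossingCount}--\eqref{MfromX} with $p=X_\Delta$. Your $c_{D,f}(\Delta)=\frac{1}{f-1}\ExptWR{\sigma}{X_\Delta/(2f-X_\Delta)}$ reproduces the exact covariance identity \eqref{exactCMHcovariance2}, which the paper says makes Proposition~3.6 immediate.

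Two small points. First, your first displayed formula for $q$ should read $\Card{\mathcal{L}_2}+\Card{\mathcal{H}_2}$, as your verbal definition says. Second, the bracket is not always positive: for $f=K=D$ every position is $\locvecO$ or $\locvecx$, so $\CMHEst\equiv a/D=J$ and the variance vanishes. Your fallback restriction is therefore genuinely needed, though it costs nothing since the bracket depends only on $(D,f,K)$.
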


{\bf Contribution Two} of our paper presents an alternative approach to Theorem 3.1, giving a very accurate approximation to $\Var{\CMHEst}$, which can be replicated across different MinHash type algorithms which we discuss in Section~\ref{sec4} and Section~\ref{sec5} (our {\bf Contribution 3}).

Moreover, the exact covariance identity in \eqref{exactCMHcovariance2} makes Theorem 3.4, Proposition 3.2, and Proposition 3.6 immediate, while the  approximations avoid the constrained combinatorial sum in Theorem 3.1. We discuss this in Section~\ref{sec2}.

\subsection{A Simpler Approach to Lemma 2.4} \label{appendixa3_SIMPLE24}

We give a simpler approach for the following lemma.

\setcounter{lemma}{0}
\begin{lemma}
\lilemma
\end{lemma}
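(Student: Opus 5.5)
The plan is to work entirely in the draw-order (sampling without replacement) picture described before the lemma. First I would make precise the claim that applying $\pi$ and then shifting by $s$ and by $s+\Delta$ amounts to drawing the $D$ tuples $(x_i, x_{i+\Delta})$, indices taken cyclically mod $D$, in a single uniformly random order. The point is that at each rank of $\pi$ the shift $s$ reveals some entry $x_i$ and the shift $s+\Delta$ reveals $x_{i+\Delta}$. So $\indicS{s}$ is the event that, reading first coordinates in draw order, the first non-$\locvecm$ symbol is $\locvecO$. Likewise $\indicS{s+\Delta}$ is the same event for second coordinates. The multiset of tuples is described by Definition~\ref{defn22}. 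The multiset of first coordinates, and separately of second coordinates, is the location vector itself: $a$ copies of $\locvecO$, $f-a$ of $\locvecx$ and $D-f$ of $\locvecm$.

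Next, I would discard the tuples of type $(\locvecm,\locvecm)$, i.e.\ $\mathcal{G}_2(\Delta)$, since they affect neither event (the ``green marble'' analogy). The first tuple drawn that is not $(\locvecm,\locvecm)$ is uniform among the remaining $D-\Card{\mathcal{G}_2}$ tuples. The row constraint $\Card{\mathcal{G}_0}+\Card{\mathcal{G}_1}+\Card{\mathcal{G}_2}=D-f$ turns this count into $f+\Card{\mathcal{G}_0}+\Card{\mathcal{G}_1}$, which is the denominator in the lemma. I would then split according to the type of this first relevant tuple. If it is $(\locvecO,\locvecO)$, both events hold, contributing $\Card{\mathcal{L}_0}/(f+\Card{\mathcal{G}_0}+\Card{\mathcal{G}_1})$. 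If it carries a $\locvecx$ in either coordinate (types $\mathcal{L}_1,\mathcal{H}_0,\mathcal{H}_1,\mathcal{H}_2,\mathcal{G}_1$), at least one event fails, contributing $0$. The remaining cases are $(\locvecO,\locvecm)\in\mathcal{L}_2$ and $(\locvecm,\locvecO)\in\mathcal{G}_0$.

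In the $(\locvecO,\locvecm)$ case, $\indicS{s}$ already holds, and I need the conditional probability that the first non-$\locvecm$ second coordinate among the undrawn tuples is $\locvecO$. Every tuple drawn so far has second coordinate $\locvecm$: the earlier ones were $(\locvecm,\locvecm)$ and the current one is $(\locvecO,\locvecm)$. Hence all $a$ copies of $\locvecO$ and all $f-a$ copies of $\locvecx$ in the second coordinate are still undrawn. Conditional on the draws so far, the remaining order is uniform, so this probability is $a/f=J$. The $(\locvecm,\locvecO)$ case is symmetric in the first coordinate and also gives $J$. Summing the cases gives
\[
\frac{\Card{\mathcal{L}_0(\Delta)}+(\Card{\mathcal{G}_0(\Delta)}+\Card{\mathcal{L}_2(\Delta)})J}{f+\Card{\mathcal{G}_0(\Delta)}+\Card{\mathcal{G}_1(\Delta)}}.
\]

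The step I expect to need the most care is the conditional step. The two coordinates are not independent, since the second coordinate of tuple $i$ is the first coordinate of tuple $i+\Delta$. The argument avoids this because it only uses the marginal multiset of second coordinates among undrawn tuples together with exchangeability of the remaining draw order. I would state that explicitly, and also note that the cyclic indexing (including $\Delta=D/2$) changes nothing, because the tuple multiset is always exactly the one counted in Definition~\ref{defn22}.
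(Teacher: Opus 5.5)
Your proof is correct, and it takes a cleaner route than the paper's. The paper follows Li's case split and conditions on the tuple type at which the \emph{first} coordinate first shows an $\locvecO$, namely $(\locvecO,\locvecx)$, $(\locvecO,\locvecO)$ or $(\locvecO,\locvecm)$. Tuples of type $(\locvecm,\locvecO)$ and $(\locvecm,\locvecx)$ can occur before that moment. The paper therefore has to split further into five subcases (1, 2a, 2b, 3a, 3b), each computed as a product of sequential ``draw this type before that invalid set'' probabilities. The formula emerges only after summing them and using $(f+\Card{\mathcal{G}_0})(f+\Card{\mathcal{G}_1}) = Mf+\Card{\mathcal{G}_0}\Card{\mathcal{G}_1}$ with $M=f+\Card{\mathcal{G}_0}+\Card{\mathcal{G}_1}$.

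You instead condition on the first tuple that is not $(\locvecm,\locvecm)$, which treats both coordinates symmetrically. This settles at least one coordinate immediately. In the two mixed cases, the other coordinate has not yet revealed any of its $a$ copies of $\locvecO$ or $f-a$ copies of $\locvecx$, so one exchangeability step gives $J$. The paper uses exactly this argument, but only in its Case 3b; it spreads your $(\locvecm,\locvecO)$ case over Cases 1, 2a, 2b and 3a.

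Your route buys several things. The case analysis has three nonzero terms that match the numerator one-to-one, and the denominator is explained directly as $D-\Card{\mathcal{G}_2}$. There is no recombination algebra. The argument also transfers verbatim to the general $\aSet{F}_{+}/\aSet{F}_{-}$ corollary, with $J$ replaced by $C_s/(C_s+C_f)$.

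The paper's route buys a direct correspondence with Li's original proof, since it conditions on how $\indicS{s}$ succeeds. It also gives a finer breakdown of the joint success event by the tuple that realizes the first-coordinate collision, though the lemma does not need that extra information.
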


Before we embark on a proof, we give intuition on how we prove this lemma, which helps if we want to generalize this approach beyond MinHash.

Following the sampling without replacement paradigm where a random permutation can be thought of as the ``draw order", then for a fixed $\Delta$, we can imagine drawing the indices $(x_s, x_{s+\Delta})$ in some random order. The first entry in the tuple $(A,B) = (x_s, x_{s+\Delta})$ then corresponds to what is seen sequentially when $\pi$ is shifted $s$ times to the right, and the second entry corresponds to what is seen when $\pi$ is shifted $s + \Delta$ times to the right. 

For example, take $D=K=5$, $s=1$ and $\Delta=4$. After applying $\sigma$, suppose that the location vector is $\vec{x}=(\locvecm,\locvecm,\locvecO,\locvecx,\locvecO)$. Suppose one random permutation $\pi$ draws the indices in the order $1,2,\hdots,5$ under $\pi_{\rightarrow s}$. Shifting this same permutation rightwards by another $4$ units gives $\pi_{\rightarrow s+\Delta}$, under which the indices are drawn in the order $5,1,2,3,4$. We take the first entry at the index drawn under $\pi_{\rightarrow s}$ and the second entry at the index $4$ units to its right, returning to index $1$ after index $5$. Thus Draw 1 gives $(x_1,x_5)$, Draw 2 gives $(x_2,x_1)$, and the remaining draws follow the same order. Table~\ref{toy_example_draws} gives all five tuples. We bold the first $\locvecO$ in each entry.

\begin{table}[h]
\begin{center}
\begin{tabular}{|c|c| c| c|} \hline
{\bf Draw} & {\bf First entry} & {\bf Second entry} & {\bf Tuple} \\
 & under $\pi_{\rightarrow s}$ & under $\pi_{\rightarrow s+\Delta}$ & \\ \hline
{\bf Draw 1} & $\locvecm$ & ${\bf \locvecO}$ & $(\locvecm,{\bf \locvecO})$ \\
& & & \\
{\bf Draw 2} & $\locvecm$ & $\locvecm$ & $(\locvecm,\locvecm)$ \\
& & & \\
{\bf Draw 3} & ${\bf \locvecO}$ & $\locvecm$ & $({\bf \locvecO},\locvecm)$ \\
& & & \\
{\bf Draw 4} & $\locvecx$ & $\locvecO$ & $(\locvecx,\locvecO)$ \\
& & & \\
{\bf Draw 5} & $\locvecO$ & $\locvecx$ & $(\locvecO,\locvecx)$ \\ \hline
\end{tabular}
\end{center}
\caption{Five successive tuples from one random permutation, for $D=5$, $s=1$ and $\Delta=4$. The first entry corresponds to $\pi_{\rightarrow s}$ and the second entry to $\pi_{\rightarrow s+\Delta}$.\label{toy_example_draws}}
\end{table}

The first $\locvecO$ in the first entry appears at Draw 3, before the first $\locvecx$ at Draw 4. The first $\locvecO$ in the second entry appears at Draw 1, before the first $\locvecx$ at Draw 5. Hence $\indicS{s}=\indicS{s+\Delta}=1$, so $\indicS{s}\indicS{s+\Delta}=1$ for this $\pi$. The first $\locvecO$ occurs at different draws for the two entries.

Each draw gives the next tuple from this same random permutation. The subscripts $s$ and $s+\Delta$ in the two indicators denote the two circulant shifts. Each Example column in Tables~\ref{examplethingya} and~\ref{examplethingyb} shows a sequence of tuple draws from one random permutation.

Since a $\locvecO$ in the location vector means that the hashed values collide, then $\ExptWR{\pi}{\indicS{s}\indicS{s+\Delta}} > 0$ means that both $\min\{\pi_{\rightarrow s}(\aSet{S}_1)\} = \min\{\pi_{\rightarrow s}(\aSet{S}_2)\}$ and $\min\{\pi_{\rightarrow s+\Delta}(\aSet{S}_1)\} = \min\{\pi_{\rightarrow s +\Delta}(\aSet{S}_2)\}$ must hold.

Hence, $\ExptWR{\pi}{\indicS{s}\indicS{s+\Delta}} > 0$ can be thought of as the {\bf probability of a sequence of draws} of the nine different types of $D$ tuples where the first $\locvecO$ appears before the first $\locvecx$ in both entries of the tuple, as the random permutation $\pi$ is a sequence of draws without replacement. We call such a sequence of draws {\bf a success}, otherwise a failure. Five examples of successes and failures are in Table~\ref{examplethingya}. We bold the first $\locvecO$ which occurs.

\begin{table}[h]
\begin{center}
\begin{tabular}{|c|c| c| c| c| c| c|} \hline
& {\bf Example 1} & {\bf Example 2} & {\bf Example 3} & {\bf Example 4} & {\bf Example 5}  \\ \hline
{\bf Draw 1} & $(\locvecm,\locvecm)$ & $(\locvecm,\locvecm)$ & $({\bf \locvecO},\locvecm)$ & $({\bf \locvecO},\locvecm)$ & $({\bf \locvecO},\locvecm)$\\
& & & & & \\
{\bf Draw 2} & $(\locvecm,{\bf \locvecO})$ & $(\locvecm,{\bf \locvecO})$ & $(\locvecx, \locvecx)$ & $({\locvecO},\locvecm)$ & $(\locvecm,\locvecm)$\\
& & & & &\\
{\bf Draw 3} & $({\bf \locvecO}, \locvecm)$ & $(\locvecm, \locvecx)$ & & $({\locvecO},\locvecm)$ & $(\locvecm,\locvecm)$  \\
& & & & & \\
{\bf Draw 4} & & $({\bf \locvecO}, \locvecm)$  & & $(\locvecx,{\bf \locvecO})$ & $(\locvecm,{\bf \locvecO})$ \\ \hline
\end{tabular}
\end{center}
\caption{Five examples of sequences of draws. Each example is a vertical column. \label{examplethingya}}
\end{table}

{\bf Example 1} and {\bf Example 2} are functionally the same (and are both successes). While {\bf Example 2} has a $\locvecx$ in the second entry at Draw 3, it does not matter, because the first $\locvecO$ in the second entry was already drawn in Draw 2.

{\bf Example 3} is an example of a failure, as a $\locvecx$ appears before an $\locvecO$ in the second entry at Draw 2. 

{\bf Example 4} and {\bf Example 5} are also functionally the same. We also do not worry if an $\locvecx$ appears next to an $\locvecO$ (as in Draw 4) for {\bf Example 4}.

In fact, since it does not matter if a $\locvecx$ appears after an $\locvecO$, we do not need to worry about elements appearing after the first $\locvecO$, and hence can view our examples as in Table~\ref{examplethingyb}.

By treating the tuples $(A,B)$ as sampling without replacement, and computing the appropriate probability of success, we can now prove Lemma~\ref{lemma24} easily.

\begin{table}[h]
\begin{center}
\begin{tabular}{|c|c| c| c| c| c| c|} \hline
& {\bf Example 1} & {\bf Example 2} & {\bf Example 3} & {\bf Example 4} & {\bf Example 5}  \\ \hline
{\bf Draw 1} & $(\locvecm,\locvecm)$ & $(\locvecm,\locvecm)$ & $({\bf \locvecO},\locvecm)$ & $({\bf \locvecO},\locvecm)$ & $({\bf \locvecO},\locvecm)$\\
& & & & & \\
{\bf Draw 2} & $(\locvecm,{\bf \locvecO})$ & $(\locvecm,{\bf \locvecO})$ & $(\phantom{\locvecO}, \locvecx)$ & $(\phantom{{\bf \locvecO}},\locvecm)$ & $(\phantom{\locvecm},\locvecm)$\\
& & & & &\\
{\bf Draw 3} & $({\bf \locvecO}, \phantom{\locvecm})$ & $(\locvecm, \phantom{\locvecx})$ & & $(\phantom{\bf \locvecO},\locvecm)$ & $(\phantom{\locvecm},\locvecm)$  \\
& & & & & \\
{\bf Draw 4} & & $({\bf \locvecO}, \phantom{\locvecm})$  & & $(\phantom{\locvecO},{\bf \locvecO})$ & $(\phantom{\locvecm},{\bf \locvecO})$ \\ \hline
\end{tabular}
\end{center}
\caption{Five examples of sequences of draws, where we do not record the type of entry after the first $\locvecO$ is drawn. Each example is a vertical column. \label{examplethingyb}}
\end{table}

\begin{proof}

Similar to Li's approach, we consider three cases, and condition on $\indicS{s}$. More specifically, we distinguish cases where the success for the {\bf first entry of the tuple} happens when we draw  $\{(\locvecO,\locvecx), (\locvecO,\locvecO),(\locvecO,\locvecm)\}$ (before any $\locvecx$ for the {\bf first entry}), and consider the probability for a success where where the first $\locvecO$ is drawn before a $\locvecx$ for the {\bf second entry of the tuple}. Table~\ref{proof_by_picture} gives a pictorial intuition of the five cases.

\begin{table}[h]
\begin{center}
\begin{tabular}{|c|c| c| c| c| c| c|} \hline
& {\bf Case 1} & {\bf Case 2a} & {\bf Case 2b} & {\bf Case 3a} & {\bf Case 3b}  \\ \hline
 & $\vdots$ & $\vdots$ & $\vdots$ & $\vdots$ & $\vdots$\\
& & & & & \\
 & $(\locvecm,{\bf \locvecO})$ & $(\locvecm,\locvecm)$ & $(\locvecm,{\bf \locvecO})$ & $(\locvecm,{\bf \locvecO})$ & $\vdots$\\
& & & & &\\
 & $\vdots$ & $\vdots$ & $\vdots$ & $\vdots$ & $\vdots$\\
& & & & &\\
 & $(\locvecm,\phantom{\locvecO})$ & $\vdots$ & $(\locvecm,\phantom{\locvecO})$ & $\vdots$ & $\vdots$\\
& & & & &\\
 & $\vdots$ & $\vdots$ & $\vdots$ & $\vdots$ & $\vdots$\\
& & & & &\\
{\bf Draw $t$} & $({\bf \locvecO}, \locvecx)$ & $({\bf \locvecO}, {\bf \locvecO})$ & $({\bf \locvecO}, \locvecO)$ & $({\bf \locvecO},\locvecm)$ & $({\bf \locvecO},\locvecm)$  \\
& & & & & \\
 &  & &  &  & $\vdots$\\
& & & & & \\
 & &   & &  & $(\phantom{\locvecO},{\bf \locvecm})$ \\ \hline
\end{tabular}
\end{center}
\caption{Pictorial intuition of the five (sub)cases of {\bf successes} we consider, as we condition on the first $\locvecO$ seen in the first entry of the tuple. A bold ${\bf \locvecO}$ denotes the first $\locvecO$ seen for that entry. A blank entry in a tuple means it does not matter what that entry is since an $\locvecO$ was drawn before.\label{proof_by_picture}}
\end{table}

Case One and Case Two follow along similar lines in their proof, but Case Three is vastly simplified. For each case, we show the product of probabilities ``sequentially" in our reasoning.

We remind the reader that tuples which do not affect the hash colliding are not involved in the below probability calculations. As an analogy, consider a bag with one blue marble, two red marbles, and $N$ green marbles. The probability of drawing a blue marble before a red marble is always $\frac{1}{3}$, regardless of $N$. 

{\bf Case One:} The first $\locvecO$ for the {\bf first entry of the tuple} is seen at $(\locvecO,\locvecx)$.  A success for the {\bf second entry of the tuple} can only happen before $(\locvecO, \locvecx)$, hence we need the probability of seeing the first $\locvecO$ appearing before any $\locvecx$ for the {\bf second entry of the tuple}. This is the product of the probabilities of i) drawing a $(\locvecm,\locvecO)$ before any invalid tuples in $\{(\locvecm,\locvecx), (\locvecx, \locvecO), (\locvecx, \locvecm), (\locvecx, \locvecx),(\locvecO, \locvecO), (\locvecO, \locvecm), (\locvecO,\locvecx)\}$, and ii) drawing a $(\locvecO,\locvecx)$ before any invalid tuples in $\{(\locvecx, \locvecO), (\locvecx, \locvecm), (\locvecx, \locvecx),(\locvecO, \locvecO), (\locvecO, \locvecm)\}$, which is given by
\begin{align}
p_1 & = \frac{|\mathcal{G}_0|}{f + |\mathcal{G}_0| + |\mathcal{G}_1|}  \frac{|\mathcal{L}_1|}{f} = \frac{ |\mathcal{G}_0||\mathcal{L}_1|}{(f + |\mathcal{G}_0| + |\mathcal{G}_1|)f}.  \label{p3_prob}
\end{align}

{\bf Case Two:} The first $\locvecO$ for the {\bf first entry of the tuple} is seen at $(\locvecO,\locvecO)$.  A success is always guaranteed once $(\locvecO,\locvecO)$ is drawn,
provided there are no invalid $\locvecx$ before an $\locvecO$ for the {\bf second entry of the tuple}. Hence there are two possible disjoint sequences ending with $(\locvecO,\locvecO)$ that are successes. 
\begin{itemize}
\item [a)] A sequence where no $(\locvecm,\locvecx)$ is seen before $(\locvecO,\locvecO)$, where we draw $(\locvecO,\locvecO)$ before invalid tuples $\{(\locvecm,\locvecx)$, $(\locvecx, \locvecO), (\locvecx, \locvecm), (\locvecx, \locvecx), (\locvecO, \locvecm), (\locvecO, \locvecx)\}$. The probability is given by
\begin{align}
p_2 & = \frac{|\mathcal{L}_0|}{f + |\mathcal{G}_1|}. \label{p1_prob}
\end{align}
\item [b)] A sequence where at least one $(\locvecm,\locvecx)$ is seen before $(\locvecO,\locvecO)$. This means a $(\locvecm,\locvecO)$ must be seen before it. This is the product of the probabilities of i) first drawing $(\locvecm,\locvecO)$ before invalid tuples in $\{(\locvecm,\locvecx), (\locvecx,\locvecO), (\locvecx,\locvecm), (\locvecx,\locvecx), (\locvecO,\locvecO), (\locvecO,\locvecm), (\locvecO,\locvecm)  \}$, ii) next drawing $(\locvecm,\locvecx)$ before invalid tuples in $\{(\locvecx,\locvecO), (\locvecx,\locvecm), (\locvecx,\locvecx), (\locvecO,\locvecO), (\locvecO,\locvecm), (\locvecO,\locvecx)  \}$, and iii) finally drawing $(\locvecO,\locvecO)$ before invalid tuples in $\{(\locvecx,\locvecO), (\locvecx,\locvecm), (\locvecx,\locvecx), (\locvecO,\locvecm), (\locvecO,\locvecx)  \}$, which is given by
\begin{align}
p_3 & = \frac{|\mathcal{G}_0|}{f + |\mathcal{G}_0| + |\mathcal{G}_1|}  \frac{|\mathcal{G}_1|}{f + |\mathcal{G}_1|}  \frac{|\mathcal{L}_0|}{f} = \frac{|\mathcal{G}_0||\mathcal{G}_1||\mathcal{L}_0|}{(f + |\mathcal{G}_0| + |\mathcal{G}_1|)(f + |\mathcal{G}_1|)f}. \label{p2_prob}
\end{align}
\end{itemize}

{\bf Case Three:} The first $\locvecO$ for the {\bf first entry of the tuple} is seen at $(\locvecO,\locvecm)$. Then a success occurs when either the first $\locvecO$ for the {\bf second entry of the tuple} appears before the first $(\locvecO,\locvecm)$, or it appears after, together with no $\locvecx$ appearing before the first $\locvecO$ for the {\bf first entry of the tuple}. Hence there are two possible disjoint sequences which are successes. 
\begin{itemize}
\item [a)] A sequence which ends at $(\locvecO,\locvecm)$, which means an $(\locvecm,\locvecO)$ must be seen before it. This is the product of probabilities of i) first drawing $(\locvecm,\locvecO)$ before any invalid tuples $\{(\locvecm,\locvecx), (\locvecx, \locvecO), (\locvecx, \locvecm), (\locvecx, \locvecx),(\locvecO, \locvecO),(\locvecO,\locvecm), (\locvecO,\locvecx)\}$, and ii) drawing $(\locvecO,\locvecm)$ before invalid tuples in $\{(\locvecx, \locvecm), (\locvecx, \locvecO), (\locvecx, \locvecx), (\locvecO,\locvecO),(\locvecO, \locvecx)\}$, given  by
\begin{align}
p_4 & = \frac{|\mathcal{G}_0|}{f + |\mathcal{G}_0| + |\mathcal{G}_1|}  \frac{|\mathcal{L}_2|}{f} = \frac{|\mathcal{G}_0||\mathcal{L}_2|}{(f + |\mathcal{G}_0| + |\mathcal{G}_1|)f}  \label{p4_prob}
\end{align}
\item [b)] A sequence which ends after $(\locvecO,\locvecm)$, which means the first $\locvecO$ for the {\bf second entry of the tuple} appears after it. This is the product of probabilities of i) first drawing $(\locvecO,\locvecm)$ before invalid tuples $\{(\locvecm,\locvecO),(\locvecm,\locvecx), (\locvecx, \locvecO), (\locvecx, \locvecm), (\locvecx, \locvecx),(\locvecO, \locvecO), (\locvecO,\locvecx)\}$, and ii) drawing an element in $\{(\locvecO,\locvecO), (\locvecm,\locvecO), (\locvecx,\locvecO)\}$ before invalid tuples in $\{(\locvecO,\locvecx), (\locvecm,\locvecx), (\locvecx,\locvecx)\}$, given by
\begin{align}
p_5 & = \frac{|\mathcal{L}_2|}{f + |\mathcal{G}_0| + |\mathcal{G}_1|} \frac{a}{f} = \frac{|\mathcal{L}_2|a}{(f + |\mathcal{G}_0|+|\mathcal{G}_1|)f}.  \label{p5_prob}
\end{align}
\end{itemize}
Adding \eqref{p3_prob}, \eqref{p1_prob}, \eqref{p2_prob}, \eqref{p4_prob}, \eqref{p5_prob} gives $\frac{|\mathcal{L}_0| + (|\mathcal{G}_0| + |\mathcal{L}_2|)J}{f + |\mathcal{G}_0| + |\mathcal{G}_1|}$.

\end{proof}
\subsection{An Alternative Approach to Theorem 3.1} \label{appendix_a4_LI}

\setcounter{lemma}{1}

\begin{lemma}
Suppose we pre-process the location vector with the random permutation $\sigma$. Let $B_D=2D-f-1$, and suppose $0<J<1$. For all $1\leq\Delta\leq K-1$, if $f/D\rightarrow0$, then
\begin{align*}
\ExptWR{\sigma,\pi}{\indicS{s}\indicS{s+\Delta}}
&=\left\{\begin{array}{r l}
J^2-\dfrac{J(1-J)f}{2(f-1)(D-1)}+O\left(\dfrac{f}{D^2}\right), & D\text{ even and }\Delta=D/2,\\
J^2-\dfrac{J(1-J)f}{(2f-1)(D-1)}+O\left(\dfrac{f}{D^2}\right), & \text{otherwise},
\end{array}\right.\\
\CovWR{\sigma,\pi}{\indicS{s}}{\indicS{s+\Delta}}
&=\left\{\begin{array}{r l}
-\dfrac{J(1-J)f}{2(f-1)(D-1)}+O\left(\dfrac{f}{D^2}\right), & D\text{ even and }\Delta=D/2,\\
-\dfrac{J(1-J)f}{(2f-1)(D-1)}+O\left(\dfrac{f}{D^2}\right), & \text{otherwise}.
\end{array}\right.
\end{align*}
If $f/D$ is bounded away from zero, then
\begin{align*}
\ExptWR{\sigma,\pi}{\indicS{s}\indicS{s+\Delta}}
&=J^2-\frac{J(1-J)}{B_D}+O(D^{-2}),\\
\CovWR{\sigma,\pi}{\indicS{s}}{\indicS{s+\Delta}}
&=-\frac{J(1-J)}{B_D}+O(D^{-2}).
\end{align*}
\end{lemma}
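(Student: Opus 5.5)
The plan is to start from the exact conditional formula of Lemma~\ref{lemma24} and average it over $\sigma$ in two stages: first over which non-$\locvecm$ entries are $\locvecO$ versus $\locvecx$, then over where the non-$\locvecm$ entries sit.

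\textbf{Step 1: rewrite the denominator.} Write $G_2=\Card{\mathcal{G}_2(\Delta)}$. The constraint in Definition~\ref{defn22} gives $\Card{\mathcal{G}_0}+\Card{\mathcal{G}_1}=D-f-G_2$, so the denominator of Lemma~\ref{lemma24} is $N:=D-G_2$. This is the number of circular pairs $(i,i+\Delta)$ with at least one non-$\locvecm$ entry.

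\textbf{Step 2: count pairs in terms of $P$.} Let $P$ be the number of indices $i$ with both $x_i$ and $x_{i+\Delta}$ non-$\locvecm$. Each non-$\locvecm$ position is a first entry exactly once and a second entry exactly once. Hence $N=2f-P$, and the number of pairs with exactly one non-$\locvecm$ entry is $2(f-P)$.

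\textbf{Step 3: average over the labels.} Condition on the set of $f$ non-$\locvecm$ positions. Under $\sigma$, the $a$ labels $\locvecO$ are then a uniformly random $a$-subset of these positions, so $N$ and $P$ are fixed.
\begin{itemize}
\item $\mathbb{E}\Card{\mathcal{L}_0}=P\,\frac{a(a-1)}{f(f-1)}$.
\item $\mathbb{E}(\Card{\mathcal{G}_0}+\Card{\mathcal{L}_2})=2(f-P)\frac{a}{f}$.
\end{itemize}
Substituting, and using $J\frac{a-1}{f-1}=J^2-\frac{J(1-J)}{f-1}$ (with $J=a/f$), should give the exact identity
\begin{align}
\ExptWR{\sigma,\pi}{\indicS{s}\indicS{s+\Delta}} = J^2-\frac{J(1-J)}{f-1}\,\mathbb{E}_\sigma\!\left(\frac{P}{2f-P}\right). \label{planexact}
\end{align}
This already shows every covariance is negative. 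The rest of the lemma reduces to the asymptotics of $\mathbb{E}(P/(2f-P))$, where $P$ depends only on a uniformly random $f$-subset of $\mathbb{Z}_D$. For every $\Delta$,
\begin{align*}
\mathbb{E}P=D\cdot\frac{f(f-1)}{D(D-1)}=\frac{f(f-1)}{D-1}.
\end{align*}

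\textbf{Step 4: the sparse regime $f/D\to0$.} Here $P$ is typically $0$, and the leading contribution comes from configurations with a single adjacent pair.
\begin{itemize}
\item For $\Delta\neq D/2$, one adjacent pair gives $P=1$ and contributes $1/(2f-1)$. So $\mathbb{E}(P/(2f-1))=f(f-1)/((2f-1)(D-1))$, and multiplying by $J(1-J)/(f-1)$ gives $\frac{J(1-J)f}{(2f-1)(D-1)}$.
\item For $D$ even and $\Delta=D/2$, pairs come in twos ($i\to i+\Delta\to i$), so $P=2$ on that event, whose probability is about $\mathbb{E}P/2$. This gives $\frac{\mathbb{E}P}{2}\cdot\frac{2}{2f-2}$ and leads to $\frac{J(1-J)f}{2(f-1)(D-1)}$.
\end{itemize}
The remainder is controlled by the exact difference between $P/(2f-P)$ and its linearisation, which is nonzero only when $P$ exceeds $1$ (respectively $2$). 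It is bounded via a second factorial moment $\mathbb{E}[P(P-1)]$, which I would compute exactly as a sum over pairs of indices, separating overlapping pairs $i,i+\Delta,i+2\Delta$ from disjoint ones.

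\textbf{Step 5: the non-sparse regime.} Here $P$ concentrates with $\mathrm{Var}(P)=O(D)$. I apply the delta method to $g(p)=p/(2f-p)$ about $\mu=\mathbb{E}P$. Since $2f-\mu=f(2D-f-1)/(D-1)$, we get $g(\mu)=(f-1)/B_D$, and hence $\frac{J(1-J)}{B_D}$. The second-order term is $O(\mathrm{Var}(P)/f^2)=O(D^{-2})$ because $g''=O(f^{-2})$. The $\Delta=D/2$ case changes only $\mathrm{Var}(P)$, not $\mu$, so it is absorbed into the error term.

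Finally, the covariance statements follow by subtracting $\mathbb{E}\indicS{s}\,\mathbb{E}\indicS{s+\Delta}=J^2$.

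\textbf{Main obstacle.} I expect the main obstacle to be making the sparse-case remainder genuinely $O(f/D^2)$ uniformly in $f$. A crude bound using $\Pr(P\ge2)\le\mathbb{E}[P(P-1)]/2\sim f^4/D^2$ loses powers of $f$. I would instead bound $\left|\frac{P}{2f-P}-\frac{P}{2f-1}\right|\le \frac{P(P-1)}{f(2f-1)}$, so that, after the factor $1/(f-1)$, the error is of order $\mathbb{E}[P(P-1)]/f^3$. Then I would check via the exact factorial-moment computation that this is $O(f/D^2)$.
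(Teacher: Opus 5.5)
Your proposal is correct and follows essentially the same route as the paper. Like the paper, you condition on the non-$\locvecm$ positions to get the exact identity $J^2-\frac{J(1-J)}{f-1}\mathbb{E}_\sigma\bigl(\frac{P}{2f-P}\bigr)$ (your $P$ is the paper's $X_\Delta$), then use the linearisation $\frac{x}{2f-x}=\frac{x}{2f-1}+\frac{x(x-1)}{(2f-1)(2f-x)}$ with $2f-x\geq f$ and the exact second factorial moment in the sparse case, and a second-order delta method with $h''\leq 4/f^2$ in the dense case.

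Two small bookkeeping points. In Step 5, $\mathrm{Var}(P)/f^2$ is $O(D^{-1})$, not $O(D^{-2})$; it only becomes $O(D^{-2})$ after multiplying by the prefactor $J(1-J)/(f-1)$. For $\Delta=D/2$ in the sparse case, you must bound the remainder with $P(P-2)=4Q(Q-1)$, which is the paper's $\mathbb{E}[X_\Delta(X_\Delta-2)]$, rather than with $P(P-1)$. The reason is that $\mathbb{E}[P(P-1)]=4\mathbb{E}[Q(Q-1)]+\mathbb{E}P$, and the term $\mathbb{E}P$ is not $O(f/D^2)$ after scaling.
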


\label{friendly_notation_reminder}
\begin{proof}
Lemma~\ref{lemma24} gives $\ExptWR{\pi}{\indicS{s}\indicS{s+\Delta}}$. Define
\begin{align}
N_{\Delta} & = \Card{\mathcal{L}_0(\Delta)} +J \left( \Card{\mathcal{G}_0(\Delta)} + \Card{\mathcal{L}_2(\Delta)}\right), \label{NMnumerator}\\
M_{\Delta} & = f + \Card{\mathcal{G}_0(\Delta)} + \Card{\mathcal{G}_1(\Delta)}. \label{NMdenominator}
\end{align}
Thus
\begin{align*}
\ExptWR{\pi}{\indicS{s}\indicS{s+\Delta}} = \frac{N_\Delta}{M_\Delta}. 
\end{align*}

We first condition on the $f$ positions of the location vector whose entries are $\locvecO$ or $\locvecx$. Let
\begin{align}
X_{\Delta} = \sum_{i=1}^D \indic{x_i \neq \locvecm,~x_{i+\Delta} \neq \locvecm}.
\label{Xdefinition}
\end{align}
The quantity $X_{\Delta}$ counts the tuples $(x_i,x_{i+\Delta})$ for which both entries are $\locvecO$ or $\locvecx$. There are $f-X_\Delta$ tuples whose first entry is $\locvecm$ and whose second entry is $\locvecO$ or $\locvecx$. To see this, each of the $f$ entries equal to $\locvecO$ or $\locvecx$ has one entry at distance $\Delta$ before it, and $X_{\Delta}$ of those preceding entries are also $\locvecO$ or $\locvecx$. Therefore
\begin{align}
\Card{\mathcal{G}_0(\Delta)} + \Card{\mathcal{G}_1(\Delta)} =f - X_{\Delta}. \label{CrossingCount}
\end{align}
Substitution of \eqref{CrossingCount} into \eqref{NMdenominator} gives
\begin{align}
M_\Delta = 2f - X_{\Delta}. \label{MfromX}
\end{align}

Conditional on the positions counted by \eqref{Xdefinition}, the $a$ entries equal to $\locvecO$ are placed among the $f$ positions equal to $\locvecO$ or $\locvecx$ by sampling without replacement. For a tuple counted by $X_{\Delta}$, the probability that both entries are $\locvecO$ is $a(a-1)/(f(f-1))$. Also, $f - X_{\Delta}$ tuples go from $\locvecm$ to either $\locvecO$ or $\locvecx$, and another $f-X_{\Delta}$ tuples go from either $\locvecO$ or $\locvecx$ to $\locvecm$. In each case, the entry which is not $\locvecm$ equals $\locvecO$ with probability $a/f$. Hence
\begin{align}
\ExptWR{\sigma}{\Card{\mathcal{ L}_0(\Delta)}\mid X_{\Delta}} & = X_{\Delta}\frac{a(a-1)}{f(f-1)}, \label{ConditionalLzero} \\
\ExptWR{\sigma}{\Card{\mathcal{G}_0(\Delta)} + \Card{\mathcal{L}_2(\Delta)}\mid X_{\Delta}} & = 2(f-X_{\Delta})\frac{a}{f}.
\label{ConditionalCrossingO}
\end{align}
Using $J=a/f$ in \eqref{NMnumerator},  \eqref{ConditionalLzero} and \eqref{ConditionalCrossingO} give
\begin{align}
\ExptWR{\sigma}{N_{\Delta} \mid X_{\Delta}} = J^2(2f - X_{\Delta}) - \frac{J(1-J)}{f-1}X_{\Delta}.
\label{ConditionalNfromX}
\end{align}
The denominator in \eqref{MfromX} is fixed once $X_\Delta$ is fixed. Dividing \eqref{ConditionalNfromX} by \eqref{MfromX} and then averaging over the positions gives
\begin{align}
\ExptWR{\sigma,\pi}{\indicS{s} \indicS{s+\Delta}} & = J^2 - \frac{J(1-J)}{f-1} \ExptWR{\sigma}{\frac{X_{\Delta}}{2f-X_{\Delta}}}, \label{exactCMHcovariance} \\
\CovWR{\sigma,\pi}{\indicS{s}}{\indicS{s+\Delta}} & = -\frac{J(1-J)}{f-1} \ExptWR{\sigma}{\frac{X_{\Delta}}{2f -X_{\Delta}}}.
\label{exactCMHcovariance2}
\end{align}

We next approximate the expectation in \eqref{exactCMHcovariance2}. Write $Y_i=\indic{x_i\neq \locvecm,\ x_{i+\Delta}\neq \locvecm}$, so that $X_{\Delta} = \sum_iY_i$ by \eqref{Xdefinition}. For each tuple $(x_i,x_{i+\Delta})$, the probability that both entries are $\locvecO$ or $\locvecx$ is $f(f-1)/(D(D-1))$. Summing this probability over the $D$ possible values of $i$ gives
\begin{align}
\ExptWR{\sigma}{X_{\Delta}} = D\frac{f(f-1)}{D(D-1)} = \frac{f(f-1)}{D-1}. \label{expectedAdjacent}
\end{align}

Suppose first that $D$ is odd or $\Delta\neq D/2$. In the sum $X_\Delta (X_\Delta-1) = \sum_{i\neq j} Y_iY_j$, there are $2D$ choices of $(i,j)$ for which the tuples $(x_i,x_{i+\Delta})$ and $(x_j,x_{j+\Delta})$ have one common position. The remaining $D(D-3)$ choices use four distinct positions. If $u^{\underline r} = u(u-1)\cdots(u-r+1)$, sampling without replacement therefore gives
\begin{align}
\ExptWR{\sigma}{X_{\Delta}(X_{\Delta}-1)} & = 2D\frac{f^{\underline 3}}{D^{\underline 3}} + D(D-3)\frac{f^{\underline 4}}{D^{\underline 4}} =\frac{f(f-1)^2(f-2)}{(D-1)(D-2)}. \label{expectedTwoAdjacent}
\end{align}
For every integer $0\leq x\leq f$,
\begin{align}
\frac{x}{2f-x} =\frac{x}{2f-1} + \frac{x(x-1)}{(2f-1)(2f-x)}. \label{GenericFractionIdentity}
\end{align}
Apply \eqref{GenericFractionIdentity} to $X_\Delta$, use \eqref{expectedAdjacent} for the first term, and use $2f-X_\Delta\geq f$ together with \eqref{expectedTwoAdjacent} for the second term. This gives
\begin{align}
\frac{1}{f-1}\ExptWR{\sigma}{\frac{X_{\Delta}}{2f-X_{\Delta}}} & = \frac{f}{(2f-1)(D-1)}+R_{\Delta}, \label{GenericSparseExpansion} \\
0 \leq R_{\Delta} & \leq \frac{(f-1)(f-2)}{(2f-1)(D-1)(D-2)} = O\left(\frac{f}{D^2}\right). \label{GenericSparseRemainder}
\end{align}

Now suppose that $D$ is even and $\Delta = D/2$. The tuple $(x_i,x_{i+D/2})$ is accompanied by the tuple $(x_{i+D/2}, x_i)$, so $X_{\Delta}$ is even. Let $Q_{\Delta} = X_{\Delta}/2$. Two different pairs of these tuples have four different positions. There are $(D/2)(D/2-1)$ choices of the first and second pair, and hence
\begin{align}
\ExptWR{\sigma}{X_{\Delta}(X_{\Delta}-2)} & = 4\ExptWR{\sigma}{Q_{\Delta}(Q_{\Delta}-1)} \notag \\
 & = 4\frac{D}{2}\left(\frac{D}{2}-1\right)
\frac{f^{\underline 4}}{D^{\underline 4}} = \frac{f(f-1)(f-2)(f-3)}{(D-1)(D-3)}. \label{expectedTwoDoverTwo}
\end{align}
For every even integer $0\leq x\leq f$,
\begin{align}
\frac{x}{2f-x} = \frac{x}{2(f-1)} + \frac{x(x-2)}{2(f-1)(2f-x)}. \label{DoverTwoFractionIdentity}
\end{align}
Equations \eqref{expectedAdjacent}, \eqref{expectedTwoDoverTwo}, and \eqref{DoverTwoFractionIdentity} give
\begin{align}
\frac{1}{f-1}\ExptWR{\sigma}{\frac{X_{\Delta}}{2f-X_{\Delta}}} & = \frac{f}{2(f-1)(D-1)} + R_{D/2}, \label{DoverTwoSparseExpansion} \\
0 \leq R_{D/2} & \leq \frac{(f-2)(f-3)}{2(f-1)(D-1)(D-3)} = O\left(\frac{f}{D^2}\right). \label{DoverTwoSparseRemainder}
\end{align}
Substituting \eqref{GenericSparseExpansion} and \eqref{DoverTwoSparseExpansion}, with their respective bounds \eqref{GenericSparseRemainder} and \eqref{DoverTwoSparseRemainder}, into \eqref{exactCMHcovariance} and \eqref{exactCMHcovariance2} proves the two expressions when $f/D \rightarrow 0$.
The two bounds hold for every $1\leq\Delta\leq K-1$ because the pair count only distinguishes the case in \eqref{expectedTwoAdjacent} from the case in \eqref{expectedTwoDoverTwo}.

Now suppose that $f/D$ is bounded away from zero. Define
\begin{align*}
h(x) = \frac{x}{2f-x}, ~~~~\mu_{\Delta} = \ExptWR{\sigma}{X_{\Delta}} =\frac{f(f-1)}{D-1}, 
\end{align*}
where the value of $\mu_\Delta$ is from \eqref{expectedAdjacent}. Direct algebra gives
\begin{align}
\frac{h(\mu_{\Delta})}{f-1} = \frac{1}{2D-f-1} = \frac{1}{B_D}. \label{densePlugIn}
\end{align}
The same pair count used in \eqref{expectedTwoAdjacent} and \eqref{expectedTwoDoverTwo} gives
\begin{align}
\VarWR{\sigma}{X_\Delta} = \left\{\begin{array}{r l}
\dfrac{2f(D-f)(f-1)(D-f-1)}{(D-1)^2(D-3)}, & D\text{ even and }\Delta=D/2, \\
\dfrac{f(D-f)(f-1)(D-f-1)}{(D-1)^2(D-2)}, & \text{otherwise}.
\end{array}\right.
\label{DenseXvariance}
\end{align}
For $0\leq x\leq f$,
\begin{align}
h''(x) = \frac{4f}{(2f-x)^3},~~~~ 0 < h''(x) \leq \frac{4}{f^2}.
\label{DenseSecondDerivative}
\end{align}
We Taylor expansion about $\mu_{\Delta}$, take expectations, and using the fact $\ExptWR{\sigma}{X_{\Delta} - \mu_{\Delta}}=0$ and \eqref{DenseSecondDerivative} gives
\begin{align}
0 \leq \ExptWR{\sigma}{h(X_{\Delta})} -h(\mu_{\Delta}) \leq \frac{2}{f^2} \VarWR{\sigma}{X_\Delta}. \label{DenseTaylorBound}
\end{align}
If $f/D$ is bounded below by a positive constant, the two expressions for the variance in \eqref{DenseXvariance} are $O(D)$. Dividing \eqref{DenseTaylorBound} by $f-1$ therefore gives
\begin{align}
0 \leq \frac{\ExptWR{\sigma}{h(X_{\Delta})} -h(\mu_{\Delta})}{f-1} \leq \frac{2\VarWR{\sigma}{X_{\Delta}}}{f^2(f-1)} = O(D^{-2}).
\label{DenseScaledTaylorBound}
\end{align}
Combining \eqref{densePlugIn} and \eqref{DenseScaledTaylorBound} gives
\begin{align}
\frac{1}{f-1}\ExptWR{\sigma}{\frac{X_{\Delta}}{2f - X_{\Delta}}} = \frac{1}{B_D}+O(D^{-2}). \label{DenseRatioExpansion}
\end{align}
Finally, substituting \eqref{DenseRatioExpansion} into \eqref{exactCMHcovariance} and \eqref{exactCMHcovariance2} gives the two expressions when $f/D$ is bounded away from zero.
\end{proof}

\setcounter{theorem}{1}

\begin{theorem}\label{ourownMAINtheorem}
Let $0 <a \leq f \leq D$. Suppose we preprocess the location vector with the random permutation $\sigma$, and generate an independent permutation $\pi$. Let $B_D=2D-f-1$. If $f/D\rightarrow0$, 
\begin{align*}
\VarWR{\sigma,\pi}{\CMHEst}
=\frac{J(1-J)}{K}\left[1-\frac{f}{D-1}\left\{\frac{K-1-\lambda_{D,K}}{2f-1}+\frac{\lambda_{D,K}}{2(f-1)}\right\}\right]
+O\left(\frac{f}{D^2}\right).
\end{align*}
If $f/D$ is bounded away from zero, then
\begin{align*}
\VarWR{\sigma,\pi}{\CMHEst}
=\frac{J(1-J)}{K}\left(1-\frac{K-1}{B_D}\right)+O(D^{-2}).
\end{align*}
\end{theorem}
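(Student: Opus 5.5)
We expand the variance of the average of the $K$ indicators and substitute the covariances from Lemma~\ref{taylorLemma}. Since $\ExptWR{\sigma,\pi}{\indicS{t}}=J$ and $\indicS{t}^2=\indicS{t}$, each $\VarWR{\sigma,\pi}{\indicS{t}}=J(1-J)$. Hence
\begin{align*}
\VarWR{\sigma,\pi}{\CMHEst}=\frac{J(1-J)}{K}+\frac{2}{K^2}\sum_{1\le s<t\le K}\CovWR{\sigma,\pi}{\indicS{s}}{\indicS{t}}.
\end{align*}
The covariance for the pair $(s,t)$ depends only on $\Delta=t-s$. There are $K-\Delta$ pairs with gap $\Delta$, which gives $K(K-1)/2$ ordered-by-index pairs in total.

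In the dense case, where $f/D$ is bounded away from zero, Lemma~\ref{taylorLemma} gives the same covariance $-J(1-J)/B_D+O(D^{-2})$ for every $\Delta$. The double sum is therefore $\frac{K(K-1)}{2}\left(-\frac{J(1-J)}{B_D}\right)+\frac{K(K-1)}{2}O(D^{-2})$. Multiplying by $2/K^2$ gives
\begin{align*}
\frac{J(1-J)}{K}\left(1-\frac{K-1}{B_D}\right)+O\!\left(\frac{K-1}{K}D^{-2}\right),
\end{align*}
and the error term is $O(D^{-2})$ uniformly in $K\le D$.

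In the sparse case we split the pairs by whether $\Delta=D/2$ with $D$ even. Each such pair contributes $-J(1-J)f/(2(f-1)(D-1))$. Every other pair contributes $-J(1-J)f/((2f-1)(D-1))$. In each case the error is $O(f/D^2)$.

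The main bookkeeping step is to count the pairs with $t-s=D/2$ inside $1\le s<t\le K$. This count is $K-D/2$ when $K>D/2$ and zero otherwise, that is, $(K-D/2)_+=K\lambda_{D,K}/2$ by Definition~\ref{lambda_defn}. Two consistency checks are needed here.

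First, the circulant structure should be taken with $\Delta$ modulo $D$. Since $K\le D$, every $\Delta\le K-1<D$, so no other gap coincides with $D/2$ and no gap degenerates to $0$.

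Second, I should verify that the pair count matches the factor in the theorem. The total number of pairs is $K(K-1)/2$, of which $K\lambda_{D,K}/2$ are special, leaving $K(K-1-\lambda_{D,K})/2$ ordinary pairs. Multiplying by $2/K^2$ then gives
\begin{align*}
-\frac{J(1-J)}{K}\cdot\frac{f}{D-1}\left\{\frac{K-1-\lambda_{D,K}}{2f-1}+\frac{\lambda_{D,K}}{2(f-1)}\right\},
\end{align*}
which combines with $J(1-J)/K$ to give the stated leading term.

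The hard step is controlling the accumulated remainder. Summing $K(K-1)/2$ errors of size $O(f/D^2)$ and multiplying by $2/K^2$ gives $O(f/D^2)$ uniformly in $K$, because the normalisation cancels the quadratic number of pairs. To be safe I would use the explicit nonnegative bounds on $R_\Delta$ and $R_{D/2}$ from the proof of Lemma~\ref{taylorLemma}. These are uniform in $\Delta$, so the remainder is a genuine uniform $O(f/D^2)$ rather than one with $\Delta$-dependent constants.

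Finally, the theorem allows $a=f$, where $J=1$. In that case both sides vanish identically, so the formula holds trivially even though Lemma~\ref{taylorLemma} assumed $0<J<1$.
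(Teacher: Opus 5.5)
Your proposal is correct and follows the same route as the paper, which simply writes the variance as $\frac{1}{K^2}\sum_{s,t}$ of the pairwise covariances and substitutes the covariance expressions from Lemma~\ref{taylorLemma}. Your explicit count of $(K-D/2)_+ = K\lambda_{D,K}/2$ pairs with $\Delta = D/2$, the check that the remainder bounds are uniform in $\Delta$, and the separate treatment of $J=1$ fill in bookkeeping that the paper's one-line proof leaves implicit.
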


\begin{proof}
This follows from expanding $\VarWR{\sigma,\pi}{\sum_{t=1}^K\indicS{t}/K} = \frac{1}{K^2} \sum_{s=1}^K\sum_{t=1}^K \CovWR{\sigma,\pi}{\indicS{s}}{\indicS{t}},$ using covariance terms in Lemma~\ref{taylorLemma}. The term $\lambda_{D,K}$ accounts for the proportion of covariance terms where $D$ is even and $\Delta=D/2$.
\end{proof}

\subsection{Additional Experiments to Verify Variance Approximation}\label{appendix_a5_EXPT}

While Figure~\ref{fig1CMH} shows that our variance approximation for $\frac{f}{D}\rightarrow 0$ does match the empirical MSE even at small $D = 128$, we also need to understand how it behaves at edge cases (i.e. when $f_1$, $f_2$ approaches zero). While this usually isn't the case in practical settings, we want to investigate how well we can rely on the given approximation.

Hence, we plot $(f,a)$ pairs of $(2,1), (4,2), (6,3), (8,4)$ at $D = 128$ and $D= 1024$ respectively. We see that at $f = 2, 4$, the approximate theoretical variance lines are slightly off, but still extremely close to the empirical MSE, but at $f = 6, 8$, both lines overlap each other almost perfectly. Hence, we believe our approximation of the theoretical variance is practical enough to do comparisons.

\begin{figure*}[h]
\includegraphics[width=1.0\linewidth]{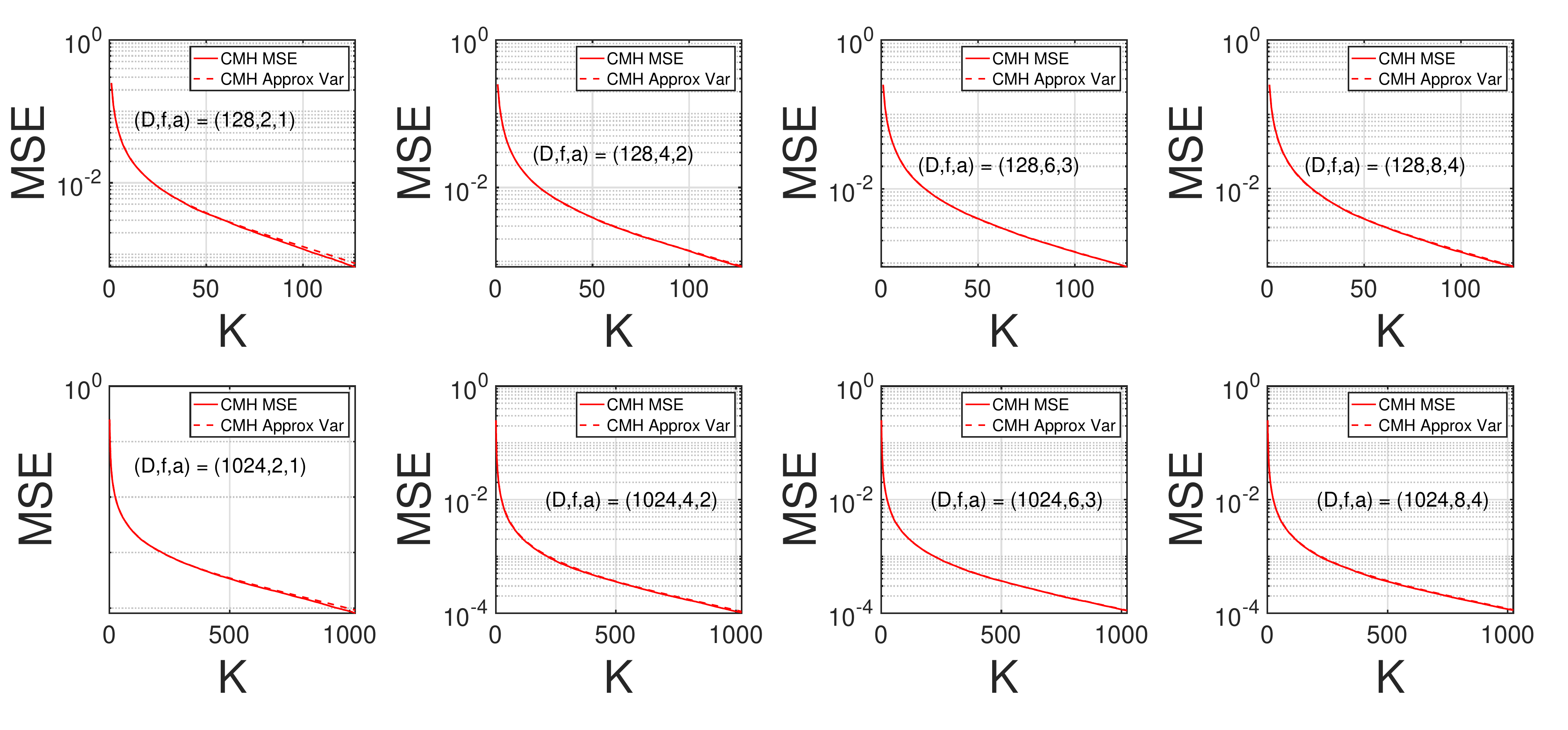}
\caption{Empirical MSE vs theoretical approximate variance in Theorem~\ref{ourownMAINtheorem} of MinHash and CMinHash $(\sigma,\pi)$, at edge cases when $f$ is small. At $f=2,4$, the dotted lines do not overlap perfectly, but they do at $f = 6, 8$. \label{fig2CMH}}
\end{figure*}

Finally, we plot both theoretical approximations for the variance in Thereom~\ref{ourownMAINtheorem} and the empirical MSE for select word pairs from the words dataset \citep{li2005using} in Figure~\ref{words_data_CMH}. Here, $D = 65536$, while the word pairs have $\frac{f_i}{D} \in [0.0143,0.3226]$, i.e. where $\frac{f}{D} \rightarrow 0$ and where $\frac{f}{D}$ does not tend to zero.

We average across 10,000 iterations as well. We can see that both theoretical approximations as well as the empirical MSE lines overlap perfectly.

\begin{figure*}[h]
\begin{center}
\includegraphics[width=0.8\linewidth]{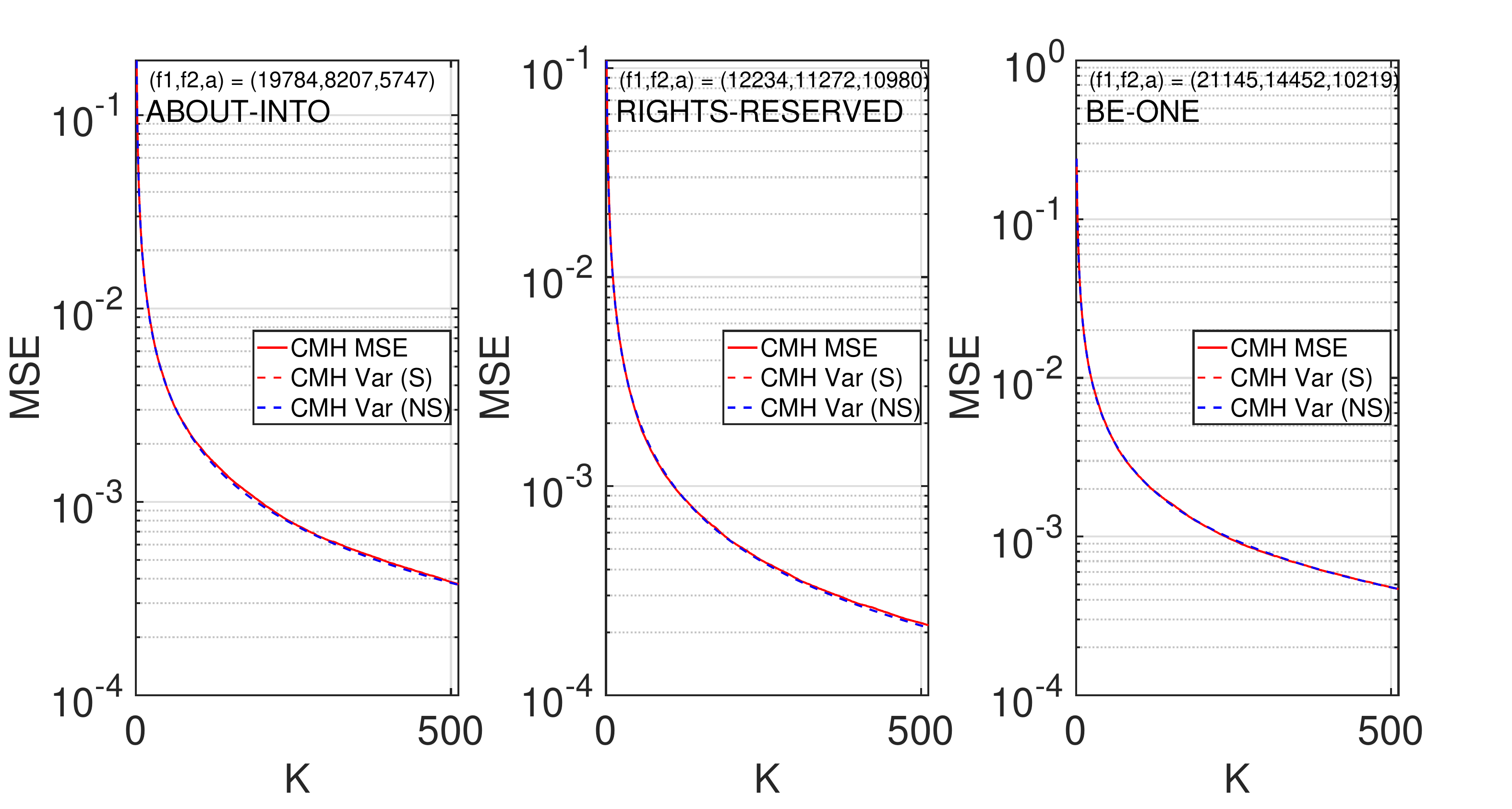}

\includegraphics[width=0.8\linewidth]{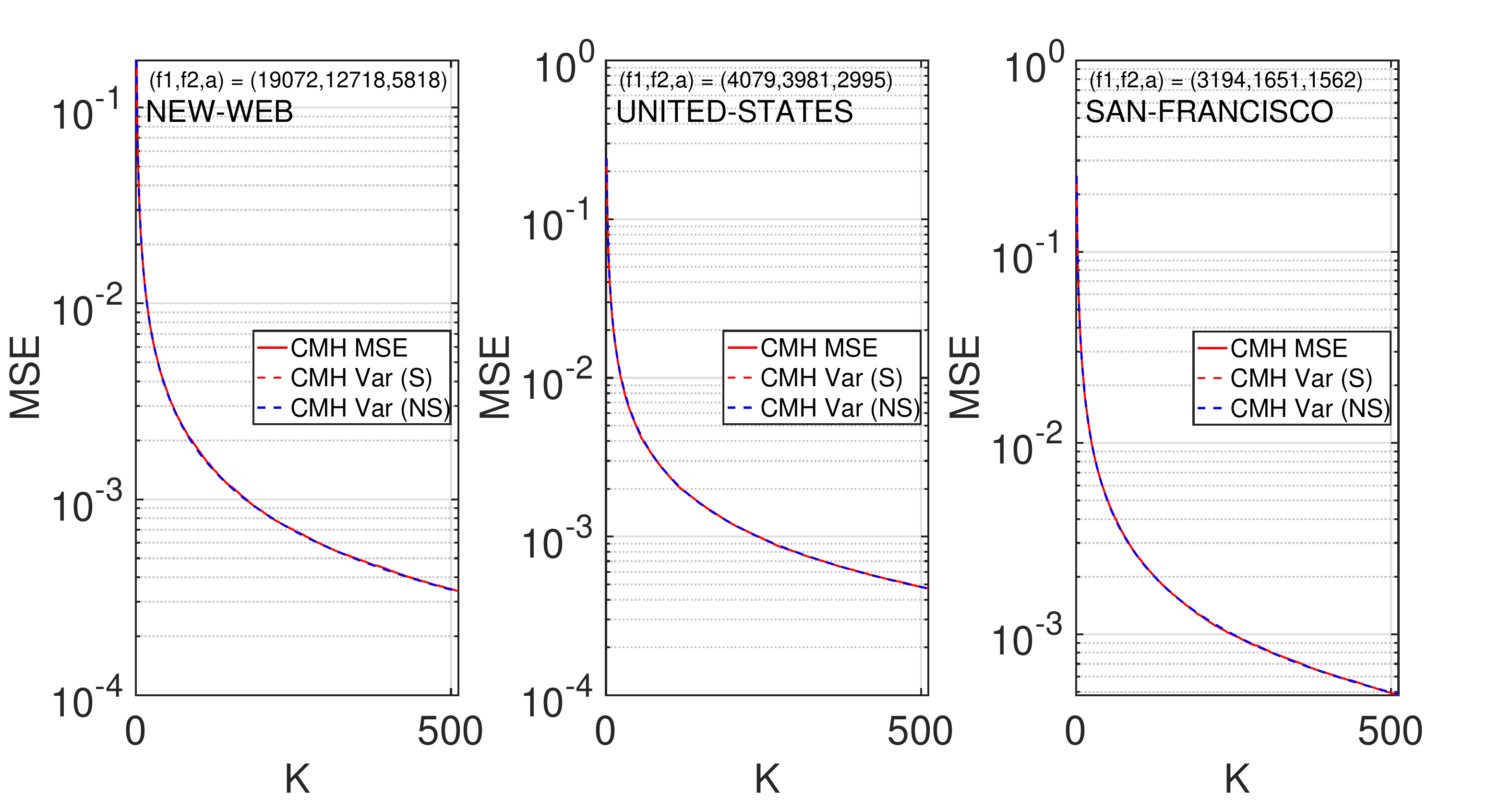}

\includegraphics[width=0.8\linewidth]{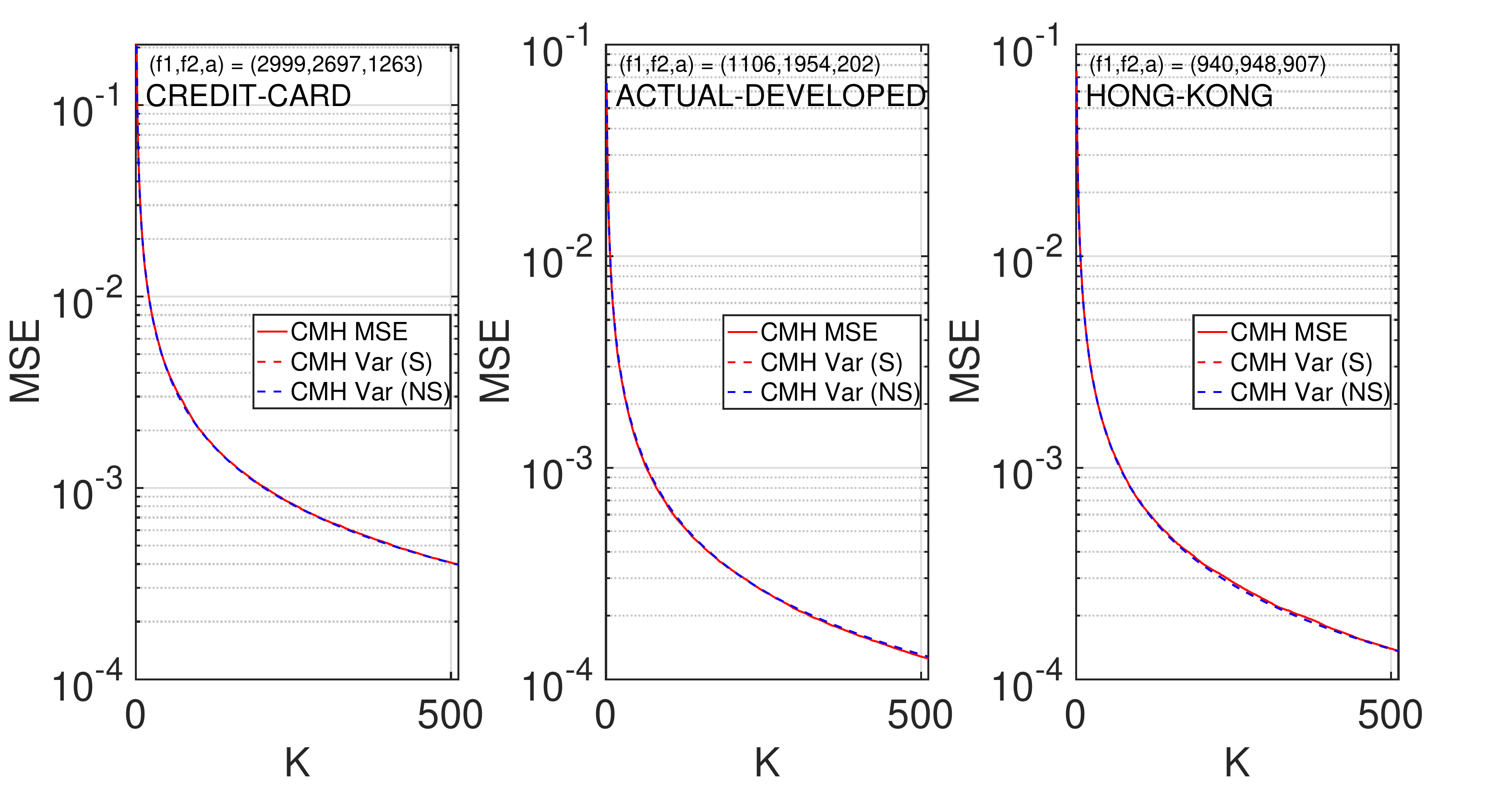}
\end{center}
\caption{Empirical MSE versus both theoretical approximate variances in Theorem~\ref{ourownMAINtheorem} of MinHash and CMinHash $(\sigma,\pi)$ for selected word pairs. CMH Var (S) is the the {\it sparse} case where $\frac{f_1}{D} \rightarrow 0$. CMH Var (NS) is the the {\it non-sparse} case where $\frac{f_1}{D}$ does not tend to zero.  Both theoretical approximations and empirical MSE lines overlap perfectly. \label{words_data_CMH}}
\end{figure*}

\clearpage

\section{Appendix: Circulant Permutation for The Minner Estimators}
\label{appendixB}
This appendix covers Section~\ref{sec4} in more detail.

\subsection{Understanding the Minner Estimators} \label{appendixB_understandingMinner}

To better understand the Minner estimators, we give the following lemma before explaining in detail.

\setcounter{lemma}{4}
\begin{lemma}\label{emptylemma}
Suppose we have a random permutation $\pi$ on $[D]$. Let $y = \indic{ \min(\pi(\aSet{S}_2)) \in\pi(\aSet{S}_1) }$, and $m$ to be the count of elements in $\pi(\aSet{S}_1)$ strictly lesser than $\min(\pi(\aSet{S}_2))$. Then $\Expt{y} = \frac{a}{f_2}$, $\Expt{m} = \frac{f_1-a}{f_2+1}$, $\Var{m} = \frac{(f_1-a)(f_1+f_2-a+1)f_2}{(f_2+1)^2(f_2+2)}$.
\end{lemma}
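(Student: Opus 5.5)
We will work entirely in the draw-order (sampling without replacement) paradigm and reduce everything to three classes of elements: the $a$ elements of type $\bulletbullet$, the $f_2-a$ elements of type $\circbullet$ (together these are the $f_2$ elements of $\aSet{S}_2$, i.e.\ the ``stopping'' elements $\aSet{F}_+$), the $f_1-a$ elements of type $\bulletcirc$ (the counted elements $\aSet{F}_-$), and the $D-f_1-f_2+a$ elements of type $\circcirc$. The $\circcirc$ elements are irrelevant: by the marble analogy used in Lemma~\ref{lemma24}, the relative order of the remaining $f_1+f_2-a$ elements is a uniform random permutation regardless of how many $\circcirc$ are interleaved. So we may assume $D=f_1+f_2-a$ throughout.

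For $\Expt{y}$: the first element of $\aSet{S}_2$ drawn is uniform among its $f_2$ elements by symmetry. It lies in $\aSet{S}_1$ exactly when it is a $\bulletbullet$, so $\Expt{y}=a/f_2$.

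For $m$, write $m=\sum_{j=1}^{f_1-a} Z_j$, where $Z_j$ indicates that the $j$-th $\bulletcirc$ element precedes every element of $\aSet{S}_2$. Elements of $\aSet{S}_1\cap\aSet{S}_2$ cannot be counted, since they belong to $\aSet{S}_2$ and so cannot come strictly before $\min\pi(\aSet{S}_2)$. Considering only the $f_2+1$ elements consisting of that $\bulletcirc$ and $\aSet{S}_2$, the $\bulletcirc$ is first with probability $1/(f_2+1)$, which gives $\Expt{m}=(f_1-a)/(f_2+1)$.

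For the second moment, take $j\neq k$ and consider the $f_2+2$ elements consisting of both $\bulletcirc$ elements and $\aSet{S}_2$. The event $Z_jZ_k=1$ means the two $\bulletcirc$ occupy the first two slots, which has probability $2/((f_2+2)(f_2+1))$. Hence
\[
\Expt{m^2}=\frac{f_1-a}{f_2+1}+\frac{2(f_1-a)(f_1-a-1)}{(f_2+1)(f_2+2)} .
\]
Subtracting $\Expt{m}^2$ and putting everything over $(f_2+1)^2(f_2+2)$, the numerator should simplify to $(f_1-a)f_2(f_1+f_2-a+1)$. Writing $c=f_1-a$ and $n=f_2$, this amounts to checking
\[
(n+1)(n+2)+2(c-1)(n+1)-c(n+2)=n(c+n+1),
\]
a short polynomial verification.

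I do not expect any real obstacle. The only places needing care are justifying that the $\circcirc$ elements can be discarded, and making sure the $\bulletbullet$ elements are correctly excluded from the count in $m$. Everything else is routine algebra.
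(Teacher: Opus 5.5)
Your proposal is correct and takes essentially the same route as the paper. The paper also writes $m$ as a sum of indicators over the $f_1-a$ elements of type $\bulletcirc$ and uses exchangeability to get $1/(f_2+1)$ for each single indicator and $2/((f_2+1)(f_2+2))$ for each pair, and your closing identity $(n+1)(n+2)+2(c-1)(n+1)-c(n+2)=n(c+n+1)$ is correct (the paper leaves this algebra and the discarding of the $\circcirc$ elements implicit).
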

\setcounter{figure}{3}
\begin{figure}[h]
\begin{center}
\begin{align*}
    \begin{array}{c | c c c c c c c c c c c c c} 
    \text{\bf Draw Order} & 1 & 2 & 3 & 4 & 5 & 6 & 7 & 8  \\ \hline
      \text{Set 1} & \circ & \bullet & \circ & \bullet & \bullet & \bullet & \circ & \bullet  \\
      \text{Set 2} &  \circ & \circ & \circ & \circ & \circ & \circ & \circ & \boxed{\bullet} \\
    \end{array}
\end{align*}
\end{center}
\caption{We have $\min\{\pi_t(\aSet{S}_2)\} = 8$, with $y_t = 1$, which can be thought of as the presence (or absence) of a $\bullet$ at draw 8 for $\aSet{S}_1$, and $m_t = 4$, which can be thought of as counting how many $\bullet$ were seen in previous draws 1-7 for $\aSet{S}_1$.}
\end{figure}

We also replicate Figure~\ref{fig4eg} here, as viewing the random variables under the ``drawing objects" from the sampling without replacement paradigm makes the proof straightforward.
\setcounter{figure}{8}
\begin{proof}
Set $\aSet{F}_{+} = \left\{\bulletbullet\right\}$, and $\aSet{F}_{-} = \left\{\circbullet\right\}$, with $\Card{\aSet{F}_{+}} = a$ and $\Card{\aSet{F}_{-}} = f_2-a$. Then $y$ is simply the probability of seeing an element from $\aSet{F}_{+}$ drawn before an element in $\aSet{F}_{-}$, which is $\frac{a}{f_2-a+a} = \frac{a}{f_2}$.

Now, set $\aSet{F}_{+} = \left\{ \bulletbullet, \circbullet \right\}$, and $\aSet{F}_{-} = \left\{\bulletcirc \right\}$, with $\Card{\aSet{F}_{+}} = f_2$ and $\Card{\aSet{F}_{-}} = f_1-a$. Then $m$ is the count of the number of elements in $\aSet{F}_{-}$ seen before the first element in $\aSet{F}_{+}$.

Suppose we index each  element in $\aSet{F}_{-}$ by $i$, $1 \leq i \leq f_1-a$. We can write the random variable $m$ as
\begin{align*}
m = \sum_{i=1}^{f_1-a} \indic{\text{element}~i~\text{is drawn before any element in $\mathcal{F}_+$}}.
\end{align*}
There are $f_2$ elements in $\mathcal{F}_+$, and we require the single element $i$ in $\mathcal{F}_{-}$ to be drawn first, hence 
\begin{align*}
\Expt{\indic{\text{element}~i~\text{is drawn before any element in $\mathcal{F}_+$}}} = \frac{1}{\Card{\aSet{F}_{+}} + 1} = \frac{1}{f_2+1},
\end{align*}
by exchangeability. Now by linearity of expectations, $\Expt{m} = \sum_{i=1}^{f_1-a}\frac{1}{f_2+1} = \frac{f_1-a}{f_2+1}$.

To find $\Var{m}$, we need to find $\Expt{m^2}$. Certainly,
\begin{align*}
\Expt{m^2} & = \sum_{i=1}^{f_1-a} \Expt{\indic{\text{element}~i~\text{is drawn before any element in $\mathcal{F}_+$}}} \\
 & \phantom{=...} + \sum_{i \neq j}^{f_1-a} \Expt{\indic{\text{element}~i~\text{is drawn before any element in $\mathcal{F}_+$}}\indic{\text{element}~j~\text{is drawn before any element in $\mathcal{F}_+$}}}.
\end{align*}
Now $\Expt{\indic{\text{element}~i~\text{is drawn before any element in $\mathcal{F}_+$}}\indic{\text{element}~j~\text{is drawn before any element in $\mathcal{F}_+$}}} = \frac{2}{(f_2+1)(f_2+2)}$, as this can be thought of as out of the $f_2$ elements in $\mathcal{F}_{+}$, and the two elements $i,j$ the $i,j$ elements must be drawn first. Therefore,
\begin{align*}
\Expt{m^2} & = \frac{f_1-a}{f_2 + 1} + \frac{2(f_1-a)(f_1-a-1)}{(f_2+1)(f_2+2)},
\end{align*}
and
\begin{align*}
\Var{m} & = \Expt{m^2} - \Expt{m}^2 = \frac{(f_1-a)f_2(f_1+f_2-a+1)}{(f_2+1)^2(f_2+2)}.
\end{align*}
\end{proof}

We remark that both these estimators have different performance based on the sizes of the sets $\aSet{S}_1, \aSet{S}_2$. However, it is not in the scope of this paper to analyze which estimator is better given $a,f_1,f_2,D$, or propose better Minner estimators. The paper focuses on {\bf extending the circulant permutation framework when counts are involved}.

\subsection{Proof of Lemma 3} \label{appendixB_proofLemma4}

We recall the definition of the nine sets again, but this time using our notation $C_s, C_f$ and $C_n$ (success, failure, neutral) for generalizability.

For $A,B \in \{\locvecO, \locvecx, \locvecm\}$, let $\{(A,B)~|~\Delta\}$ be the set $\{(i,j): (x_i,x_j) = (A,B), j-i = \Delta\}$. For $1 \leq \Delta \leq K-1$, define
\begin{align*}
\begin{array}{c c c}
\mathcal{L}_0(\Delta) = \{(\locvecO,\locvecO)|\Delta\},~~ & \mathcal{L}_1(\Delta) = \{(\locvecO,\locvecx)|\Delta\}, ~~& \mathcal{L}_2(\Delta) = \{(\locvecO,\locvecm)|\Delta\}, \\
\mathcal{G}_0(\Delta) = \{(\locvecm,\locvecO)|\Delta\},~~ & \mathcal{G}_1(\Delta) = \{(\locvecm,\locvecx)|\Delta\}, ~~& \mathcal{G}_2(\Delta) = \{(\locvecm,\locvecm)|\Delta\}, \\
\mathcal{H}_0(\Delta) = \{(\locvecx,\locvecO)|\Delta\},~~ & \mathcal{H}_1(\Delta) = \{(\locvecx,\locvecx)|\Delta\}, ~~& \mathcal{H}_2(\Delta) = \{(\locvecx,\locvecm)|\Delta\},
\end{array}
\end{align*}
with the following constraints for all $1 \leq \Delta \leq K-1$
\begin{align*}
|\mathcal{L}_0| + |\mathcal{L}_1| + |\mathcal{L}_2| = |\mathcal{L}_0| + |\mathcal{G}_0| + |\mathcal{H}_0| &= C_s,  \\
|\mathcal{G}_0| + |\mathcal{G}_1| + |\mathcal{G}_2| = |\mathcal{L}_2| + |\mathcal{G}_2| + |\mathcal{H}_2| &= C_n,  \\
|\mathcal{H}_0| + |\mathcal{H}_1| + |\mathcal{H}_2| = |\mathcal{L}_1| + |\mathcal{G}_1| + |\mathcal{H}_1| &= C_f.
\end{align*}

\setcounter{lemma}{2}
\begin{lemma}
Suppose we pre-process the location vector with a random permutation $\sigma$. Let the random variable $\theta_s \equiv $ number of $\locvecx$ seen before the first $\locvecO$ under the random permutation $\pi_{\rightarrow s}$. Let $B_{\Delta} = 2C_s - \Card{\mathcal{L}_0} + 1$. Then for any $1 \leq s < t \leq K$, with $t - s = \Delta$,
\begin{align*}
\ExptWR{\pi}{\theta_s \theta_{s+\Delta}}
&=
\frac{\Card{\mathcal{H}_1}}{B_{\Delta}}
+
\frac{2C_f^2-C_f(\Card{\mathcal{L}_1}+\Card{\mathcal{H}_0})-2\Card{\mathcal{H}_1}}
{(C_s+1)(B_{\Delta}+1)} +
\frac{C_f(\Card{\mathcal{L}_1}+\Card{\mathcal{H}_0})-2\Card{\mathcal{L}_1}\Card{\mathcal{H}_0}}
{(C_s+1)B_{\Delta}(B_{\Delta}+1)}.
\end{align*}
If $C_s$ and $C_f$ are fixed as $D\rightarrow\infty$, then, when $D$ is even and $\Delta=D/2$,
\begin{align*}
\CovWR{\sigma,\pi}{\theta_s}{\theta_{s+\Delta}}
&=-V_{\mathrm{ind}}
\frac{(C_s+1)(C_s+2)}{2C_s(2C_s+1)}
\frac{2C_s+C_f}{(C_s+C_f+1)(D-1)}
+O(D^{-2}),
\end{align*}
and otherwise,
\begin{align*}
\CovWR{\sigma,\pi}{\theta_s}{\theta_{s+\Delta}}
&=-V_{\mathrm{ind}}
\frac{(C_s+2)(C_s+C_f)}
{(2C_s+1)(C_s+C_f+1)(D-1)}
+O(D^{-2}).
\end{align*}
If both $C_s/D$ and $C_f/D$ are bounded away from zero, then
\begin{align*}
\CovWR{\sigma,\pi}{\theta_s}{\theta_{s+\Delta}}
=
\left\{
\begin{array}{r l}
-V_{\mathrm{ind}}\left(
\frac{C_s(2D-C_s)+C_f(D-C_s)}
{(C_s+C_f)(2D-C_s)^2}\right)
+
O\left(\frac{1}{D^2}\right),
& D\text{ even and }\Delta = D/2, \\
-V_{\mathrm{ind}}
\hspace*{1.15cm}\left(\frac{1}{2D-C_s}\right)
\hspace*{1.0cm}+
O\left(\frac{1}{D^2}\right),
& \text{otherwise.}
\end{array}
\right.
\end{align*}
\end{lemma}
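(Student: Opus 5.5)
The plan has two parts: an exact formula for $\ExptWR{\pi}{\theta_s\theta_{s+\Delta}}$ with the location vector fixed, and then an average over $\sigma$.

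\textbf{Exact formula under $\pi$.} Following Lemma~\ref{emptylemma}, I write $\theta_s=\sum_{i}\indic{x_i=\locvecx,\ x_i \text{ drawn before every }\locvecO\text{ under }\pi_{\rightarrow s}}$, and similarly for $\theta_{s+\Delta}$ using second entries of the tuples. The product becomes a double sum over $\locvecx$ positions $i$ (first entry) and $j$ (second entry), and I evaluate each term in the draw-order paradigm of Lemma~\ref{lemma24}. Relevant tuples are those containing an $\locvecO$ in some entry; their number is $C_s+C_s-\Card{\mathcal{L}_0}=B_\Delta-1$, since $(\locvecO,\locvecO)$ tuples are counted once. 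The $\locvecm$ tuples are irrelevant, by the marble analogy.

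I split the pairs $(i,j)$ by which tuple(s) carry them:
\begin{itemize}
\item If both $\locvecx$'s sit in the same tuple, this is an $\mathcal{H}_1$ tuple that must precede all $B_\Delta-1$ relevant tuples. The probability is $1/B_\Delta$, giving $\Card{\mathcal{H}_1}/B_\Delta$.
\item If they sit in two distinct tuples, there are two sub-cases according to whether each tuple is itself relevant. A tuple of type $(\locvecx,\locvecO)$ or $(\locvecO,\locvecx)$ contains an $\locvecO$ in the other entry; such pairs are counted by $\mathcal{H}_0$ and $\mathcal{L}_1$.
\item For two distinct tuples, I compute the probability that both precede the relevant set. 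If neither is relevant this is $2/((B_\Delta)(B_\Delta+1))$-type. If the tuple carrying $i$ is, say, $(\locvecx,\locvecO)$, it must come before $j$'s tuple while both precede the others, which halves the count.
\end{itemize}
Summing the counts $C_f^2-\Card{\mathcal{H}_1}$ minus the corrections, weighted appropriately, should reproduce the three displayed terms. The $(C_s+1)$ factor arises because I condition first on $i$ beating the $C_s$ $\locvecO$'s in the first entry, which has probability $1/(C_s+1)$, and then handle $j$. I would check the final algebra against the stated formula rather than derive it blind.

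\textbf{Averaging over $\sigma$.} The formula depends on $\Card{\mathcal{L}_0},\Card{\mathcal{L}_1},\Card{\mathcal{H}_0},\Card{\mathcal{H}_1}$. The covariance is $\ExptWR{\sigma}{\cdot}-\left(C_f/(C_s+1)\right)^2$.

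In the sparse case ($C_s,C_f$ fixed), I condition on the positions of the $f=C_s+C_f$ non-$\locvecm$ entries exactly as in Lemma~\ref{taylorLemma}, with $X_\Delta$ the number of tuples whose entries are both non-$\locvecm$.
\begin{itemize}
\item Since $\Prob{X_\Delta\ge1}=O(D^{-1})$, I expand to first order. With $X_\Delta=0$, all four counts vanish and the expression reduces to $2C_f^2/((C_s+1)(2C_s+2))=C_f^2/(C_s+1)^2$. Hence that event contributes zero covariance.
\item The $D^{-1}$ term comes from $X_\Delta=1$ (generic $\Delta$) or $X_\Delta=2$ with a mirrored pair ($\Delta=D/2$). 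Their probabilities are $f(f-1)/(D-1)+O(D^{-2})$ and $f(f-1)/(2(D-1))+O(D^{-2})$ respectively, from \eqref{expectedAdjacent}–\eqref{expectedTwoDoverTwo}.
\item Conditional on that configuration, the single adjacent tuple is $\locvecO\locvecO$, $\locvecO\locvecx$, $\locvecx\locvecO$ or $\locvecx\locvecx$ with hypergeometric probabilities. I plug each into the exact formula, subtract $C_f^2/(C_s+1)^2$, and average.
\item Factoring out $V_{\mathrm{ind}}=C_sC_f(C_s+C_f+1)/((C_s+1)^2(C_s+2))$ should give the stated rational functions. In the mirrored case the pair of tuples is $(A,B),(B,A)$.
\end{itemize}

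In the dense case, I use the delta method as in Lemma~\ref{taylorLemma}. The counts concentrate at their means, e.g. $\Expt{\Card{\mathcal{L}_0}}=C_s(C_s-1)/(D-1)$, with $O(D)$ variances; the $D/2$ case has different means for the paired counts. I then expand the smooth function of the counts to second order. The covariance terms between counts enter at $O(D^{-2})$ after the division structure, because $B_\Delta=\Theta(D)$.

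\textbf{Main obstacle.} The hardest step is the exact $\pi$-computation: correctly bookkeeping pairs where the two $\locvecx$'s lie in tuples that themselves contain an $\locvecO$, which produces the $\Card{\mathcal{L}_1}\Card{\mathcal{H}_0}$ term. I would first verify it on small cases (e.g.\ $D=4$) by enumeration. The second-hardest step is the algebra showing the first-order sparse terms collapse to the clean factored form.
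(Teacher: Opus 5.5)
Your architecture matches the paper's: indicator sums over $\locvecx$ tuples, the diagonal term $\Card{\mathcal{H}_1}/B_\Delta$, and in the sparse regime a first-order expansion over one adjacent tuple or one mirrored pair. The sparse part is fine. However, two steps would fail as written.

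\textbf{First gap: the off-diagonal probabilities in the exact $\pi$-formula are wrong.} The indicators do not require both tuples to precede the whole relevant set. The tuple $i$ must precede only the $C_s$ tuples with $\locvecO$ in the first entry, and $j$ only the $C_s$ tuples with $\locvecO$ in the second entry. The right computation is: whichever of $i,j$ comes first among the $B_\Delta+1$ tuples (the $B_\Delta-1$ relevant ones plus $i,j$), the other must then beat its own $C_s$ stopping tuples.
\begin{itemize}
\item For $i\notin\mathcal{H}_0$, $j\notin\mathcal{L}_1$, $i\neq j$, this gives $\frac{2}{(C_s+1)(B_\Delta+1)}$, not $\frac{2}{B_\Delta(B_\Delta+1)}$.
\item If $i\in\mathcal{H}_0$, its tuple carries an $\locvecO$ in the second entry. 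So $j$ must come \emph{before} $i$ (you have the order reversed), and the probability is $\frac{1}{(C_s+1)B_\Delta}$. The case $j\in\mathcal{L}_1$ is symmetric.
\item The case $i\in\mathcal{H}_0$, $j\in\mathcal{L}_1$ is impossible and contributes zero.
\end{itemize}
The corresponding counts are $(C_f-\Card{\mathcal{H}_0})(C_f-\Card{\mathcal{L}_1})-\Card{\mathcal{H}_1}$, $\Card{\mathcal{H}_0}(C_f-\Card{\mathcal{L}_1})$ and $(C_f-\Card{\mathcal{H}_0})\Card{\mathcal{L}_1}$. Combined with $\frac{1}{B_\Delta}=\frac{1}{B_\Delta+1}+\frac{1}{B_\Delta(B_\Delta+1)}$, this gives exactly the three stated terms; your version would not.

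\textbf{Second, more serious gap: the dense case.} Write $Z=(\Card{\mathcal{L}_0},\Card{\mathcal{L}_1},\Card{\mathcal{H}_0},\Card{\mathcal{H}_1})$, let $F(Z)$ be the exact formula, and let $\mu$ be its $\sigma$-mean.
\begin{itemize}
\item Your claim that covariances between the counts enter only at $O(D^{-2})$ is false. The second derivatives $F_{rq}$ are $O(D^{-2})$, but $\CovWR{\sigma}{Z_r}{Z_q}$ is of order $D$. So $\frac12\sum_{r,q}F_{rq}\CovWR{\sigma}{Z_r}{Z_q}$ is of order $D^{-1}$, the same order as the covariance you want.
\item The analogy with Lemma~\ref{taylorLemma} breaks down. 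There, a prefactor $J(1-J)/(f-1)=O(D^{-1})$ multiplies the expectation of a bounded function. Here $F=\Theta(1)$, and you subtract the $\Theta(1)$ constant $(C_f/(C_s+1))^2$.
\item A quick diagnostic: $\ExptWR{\sigma}{Z}$ is the same for $\Delta=D/2$ as for other $\Delta$. Contrary to your remark, the means do not change; only the second moments do. So $F(\mu)$ alone cannot produce the two different dense formulas in the statement.
\item Concretely, the only covariances that meet $\Theta(D^{-2})$ second derivatives are those of $\Card{\mathcal{L}_0}$ with the four counts. At leading order these double when $\Delta=D/2$. The second-order term contributes $\frac{C_f^2D}{C_s^2(2D-C_s)^2}$ for generic $\Delta$, and twice that at $\Delta=D/2$. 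Up to $O(D^{-2})$, this is exactly what $F(\mu)-(C_f/(C_s+1))^2$ is missing.
\end{itemize}
To close the gap, follow the paper's route:
\begin{itemize}
\item Compute all second moments of the four counts, separating shared-position terms (which differ when $\Delta=D/2$).
\item Keep the full second-order Taylor term.
\item Show the third-order term is $O(D^{-2})$: third derivatives are $O(D^{-3})$ against $O(D)$ third central moments.
\item Show the remainder is $O(D^{-2})$: fourth derivatives are $O(D^{-4})$ against $O(D^2)$ fourth moments.
\end{itemize}
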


\begin{proof}
Our proof will follow a similar approach to the proof of Lemma~\ref{lemma24} with respect to sequence of draws.

For ease of notation, we define the sets
\begin{align*}
\aSet{O}_1 & = \mathcal{L}_0 \cup \mathcal{L}_1 \cup \mathcal{L}_2 ~\text{(tuples where first entry is $\locvecO$),} \\
\aSet{O}_2 & = \mathcal{L}_0 \cup \mathcal{G}_0 \cup \mathcal{H}_0 ~\text{(tuples where second entry is $\locvecO$),} \\
\aSet{X}_1 & = \mathcal{H}_0 \cup \mathcal{H}_1 \cup \mathcal{H}_2 ~\text{(tuples where first entry is $\locvecx$),} \\
\aSet{X}_2 & = \mathcal{L}_1 \cup \mathcal{G}_1 \cup \mathcal{H}_1 ~\text{(tuples where second entry is $\locvecx$),} \\
\end{align*}
and define the variable
\begin{align*}
U=\Card{\aSet{O}_1\cup\aSet{O}_2}=2C_s-\Card{\aSet{L}_0}.
\end{align*}

For an element $i$ in $\aSet{X}_1$, we write the random variable $I_i^{(1)} \equiv \indic{i~\text{appears before elements in}~\aSet{O}_1}$. Similarly, for an element $j$ in $\aSet{X}_2$, we write $I_j^{(2)} \equiv \indic{j~\text{appears before elements in}~ \aSet{O}_2}$.
The two random variables can therefore be written as
\begin{align}
\theta_s = \sum_{i\in\aSet{X}_1}I_i^{(1)}, ~~~~ \theta_{s+\Delta} = \sum_{j\in\aSet{X}_2}I_j^{(2)}.
\label{CountThetaSums}
\end{align}
Multiplying the two sums in \eqref{CountThetaSums} gives
\begin{align}
\ExptWR{\pi}{\theta_s \theta_{s+\Delta}} & = \sum_{i\in\aSet{X}_1\cap\aSet{X}_2} \ExptWR{\pi}{I_i^{(1)} I_i^{(2)}} + \sum_{\substack{i\in\aSet{X}_1,\ j \in \aSet{X}_2\\i\neq j}} \ExptWR{\pi}{I_i^{(1)}I_j^{(2)}}. \label{countDecomp}
\end{align}

For the first sum in \eqref{countDecomp}, the equality $i = j$ requires the tuple $(\locvecx,\locvecx)$, so there are $\Card{\mathcal H_1}$ choices. For any one of them, $I_i^{(1)}I_i^{(2)} = 1$ when $i$ is the first tuple among $\aSet{O}_1 \cup \aSet{O}_2\cup\{i\}$. Exchangeability of these $U+1$ tuples gives probability $1/(U+1)$. Hence
\begin{align}
\sum_{i\in\aSet{X}_1\cap\aSet{X}_2} \ExptWR{\pi}{I_i^{(1)} I_i^{(2)}} = \frac{\Card{\mathcal H_1}}{U+1}. \label{lem3_cont1}
\end{align}

We now evaluate the second sum in \eqref{countDecomp}. As in the proof of Lemma~\ref{lemma24}, we consider four cases depending on whether $i$ is in $\aSet{O}_2$ and whether $j$ is in $\aSet{O}_1$.

{\bf Case One: Suppose $i\notin \aSet{O}_2$ and $j\notin \aSet{O}_1$.} Let $E$ be the earliest tuple seen out of $\aSet{O}_1\cup\aSet{O}_2\cup\{i\}\cup\{j\}$. Since both $i$ and $j$ are outside $\aSet{O}_1\cup\aSet{O}_2$, we have
\begin{align*}
\Prob{E=i}=\Prob{E=j}=\frac{1}{U+2}.
\end{align*}
A valid sequence occurs in two disjoint ways. First, $i$ is drawn before any tuple in $\aSet{O}_1\cup\aSet{O}_2\cup\{j\}$, and then $j$ is drawn before any tuple in $\aSet{O}_2$. Second, $j$ is drawn before any tuple in $\aSet{O}_1\cup\aSet{O}_2\cup\{i\}$, and then $i$ is drawn before any tuple in $\aSet{O}_1$. Thus the probability for each ordered pair in this case is
\begin{align*}
\frac{1}{U+2}\frac{1}{C_s+1}
+
\frac{1}{U+2}\frac{1}{C_s+1}
=
\frac{2}{(C_s+1)(U+2)}.
\end{align*}
There are
\begin{align*}
(\Card{\mathcal{H}_1}+\Card{\mathcal{H}_2})(\Card{\mathcal{G}_1}+\Card{\mathcal{H}_1})-\Card{\mathcal{H}_1}
=
(C_f-\Card{\mathcal{H}_0})(C_f-\Card{\mathcal{L}_1})-\Card{\mathcal{H}_1}
\end{align*}
ordered pairs with $i\notin \aSet{O}_2$, $j\notin \aSet{O}_1$, and $i\neq j$. Hence the contribution from Case One is
\begin{align}
\frac{2\left((C_f-\Card{\mathcal{H}_0})(C_f-\Card{\mathcal{L}_1})-\Card{\mathcal{H}_1}\right)}
{(C_s+1)(U+2)}. \label{lem3_cont2}
\end{align}

{\bf Case Two: Suppose $i\in\aSet{O}_2$ and $j\notin\aSet{O}_1$.} Since $i\in\aSet{X}_1\cap\aSet{O}_2$, this means $i\in\mathcal{H}_0$. A valid sequence must first draw $j$ before any tuple in $\aSet{O}_2$, and then draw $i$ before any tuple in $\aSet{O}_1$. Since only $j$ is added to $\aSet{O}_1\cup\aSet{O}_2$, the probability for each ordered pair in this case is
\begin{align*}
\frac{1}{U+1}\frac{1}{C_s+1}
=
\frac{1}{(C_s+1)(U+1)}.
\end{align*}
There are $\Card{\mathcal{H}_0}(C_f-\Card{\mathcal{L}_1})$ such ordered pairs. Hence the contribution from Case Two is
\begin{align}
\frac{\Card{\mathcal{H}_0}(C_f-\Card{\mathcal{L}_1})}{(C_s+1)(U+1)}. \label{lem3_cont3}
\end{align}

{\bf Case Three:} Suppose $i\notin\aSet{O}_2$ and $j\in\aSet{O}_1$. By the same reasoning as Case Two, $j\in\mathcal{L}_1$, and the contribution is
\begin{align}
\frac{(C_f-\Card{\mathcal{H}_0})\Card{\mathcal{L}_1}}{(C_s+1)(U+1)}. \label{lem3_cont4}
\end{align}

{\bf Case Four: Suppose $i\in\aSet{O}_2$ and $j\in\aSet{O}_1$.} This case contributes zero. Since $i\in\aSet{O}_2$, the event $\indic{j~\text{before}~\aSet{O}_2}=1$ requires $j$ to appear before $i$. However, since $j\in\aSet{O}_1$, the event $\indic{i~\text{before}~\aSet{O}_1}=1$ requires $i$ to appear before $j$.

Adding \eqref{lem3_cont1}, \eqref{lem3_cont2}, \eqref{lem3_cont3}, and \eqref{lem3_cont4}, and substituting $U=2C_s-\Card{\mathcal{L}_0}$, gives
\begin{align}
\ExptWR{\pi}{\theta_s \theta_{s+\Delta}}
&=
\frac{\Card{\mathcal{H}_1}}{2C_s-\Card{\mathcal{L}_0}+1}
+
\frac{2C_f^2-C_f(\Card{\mathcal{L}_1}+\Card{\mathcal{H}_0})-2\Card{\mathcal{H}_1}}
{(C_s+1)(2C_s-\Card{\mathcal{L}_0}+2)} \notag \\
&\phantom{=}
+
\frac{C_f(\Card{\mathcal{L}_1}+\Card{\mathcal{H}_0})-2\Card{\mathcal{L}_1}\Card{\mathcal{H}_0}}
{(C_s+1)(2C_s-\Card{\mathcal{L}_0}+1)(2C_s-\Card{\mathcal{L}_0}+2)}.
\label{epicount}
\end{align}

This proves the first expression in Lemma~\ref{lastlemA}.

Now suppose we pre-process the location vector with the random permutation $\sigma$. Then we want to find the expectation of \eqref{epicount} with respect to $\sigma$. For ease of notation, let
\begin{align}
Z = (Z_0,Z_1,Z_2,Z_3) = \left(\Card{\mathcal{L}_0},\Card{\mathcal{L}_1}, \Card{\mathcal {H}_0}, \Card{\mathcal{H}_1}\right),
\label{CountZdefinition}
\end{align}
and let $F(Z)$ denote the right hand side of \eqref{epicount}. When $Z=(0,0,0,0)$,
\begin{align}
F(0,0,0,0) = \frac{2C_f^2}{(C_s+1)(2C_s+2)} = \left(\frac{C_f}{C_s+1}\right)^2. \label{CountBaseline}
\end{align}

First suppose that $C_s$ and $C_f$ are fixed while $D \rightarrow \infty$, and that $D$ is odd or $\Delta\neq D/2$. Let
\begin{align*}
R_\Delta = Z_0 + Z_1 + Z_2 + Z_3.
\end{align*}
Thus $R_\Delta$ is the number of tuples $(x_i,x_{i+\Delta})$ whose two entries are $\locvecO$ or $\locvecx$. When $R_\Delta = 1$, the four possible values of $Z$ and their changes from \eqref{CountBaseline} are
\begin{align}
\begin{array}{c|c|c}
\text{pair} & Z & F(Z) -F(0,0,0,0) \\ \hline
(\locvecO,\locvecO) & (1,0,0,0) & \dfrac{C_f^2}{(C_s+1)^2(2C_s+1)} \\ 
(\locvecO,\locvecx) & (0,1,0,0) & -\dfrac{C_sC_f}{(C_s+1)^2(2C_s+1)}\\
(\locvecx,\locvecO) & (0,0,1,0) & -\dfrac{C_sC_f}{(C_s+1)^2(2C_s+1)}\\
(\locvecx,\locvecx) & (0,0,0,1) & \dfrac{C_s^2}{(C_s+1)^2(2C_s+1)}.
\end{array}
\label{CountGenericChangeTable}
\end{align}
The four rows in \eqref{CountGenericChangeTable} are obtained by substituting $Z=(1,0,0,0)$, $Z=(0,1,0,0)$, $Z=(0,0,1,0)$, and $Z=(0,0,0,1)$ into the right hand side of \eqref{epicount}, and then subtracting \eqref{CountBaseline}. For example, the $(\locvecO,\locvecx)$ row is
\begin{align*}
F(0,1,0,0) - F(0,0,0,0) & = \frac{2C_f^2-C_f}{2(C_s+1)^2}
+\frac{C_f}{2(C_s+1)^2(2C_s+1)}
-\frac{C_f^2}{(C_s+1)^2} \notag\\
& = -\frac{C_sC_f}{(C_s+1)^2(2C_s+1)}.
\end{align*}

For each $\Delta$, sampling without replacement gives
\begin{align}
\ExptWR{\sigma}{\Card{\mathcal{L}_0}} & =\frac{C_s(C_s-1)}{D-1}, & \ExptWR{\sigma}{\Card{\mathcal{L}_1}} & =\frac{C_sC_f}{D-1}, \notag \\
\ExptWR{\sigma}{\Card{\mathcal{H}_0}} & =\frac{C_sC_f}{D-1}, & \ExptWR{\sigma}{\Card{\mathcal{H}_1}} & =\frac{C_f(C_f-1)}{D-1}. \label{CountGenericExpectedPairs}
\end{align}
The same calculation as in \eqref{expectedTwoAdjacent}, with $f$ replaced by $C_s+C_f$ gives
\begin{align*}
\ExptWR{\sigma}{R_{\Delta}(R_{\Delta} - 1)} = O(D^{-2}).
\end{align*}
Therefore $\Prob{R_\Delta\geq2} = O(D^{-2})$. When $R_{\Delta} = 0$, the change in $F$ is zero by \eqref{CountBaseline}. When $R_{\Delta}=1$, it is the corresponding entry of \eqref{CountGenericChangeTable}. When $R_{\Delta} \geq 2$, $F$ is bounded because $C_s$ and $C_f$ are constant. Multiplying the four entries in \eqref{CountGenericChangeTable} by the four expectations in \eqref{CountGenericExpectedPairs} and adding them gives
\begin{align}
\CovWR{\sigma,\pi}{\theta_s}{\theta_{s+\Delta}} & =\frac{1}{(C_s+1)^2(2C_s+1)(D-1)}
\left\{C_f^2C_s(C_s-1) \right. \notag \\
& \hspace*{1.3cm}\left. -2C_s^2C_f^2 + C_s^2C_f(C_f-1)\right\}+O(D^{-2}) \notag \\
& = -\frac{C_sC_f(C_s+C_f)}
{(C_s+1)^2(2C_s+1)(D-1)} + O(D^{-2}). \label{countSparseGenericRaw}
\end{align}

Now suppose that $D$ is even and $\Delta = D/2$. The tuple $(x_i, x_{i+D/2})$ is accompanied by $(x_{i+D/2}, x_i)$. Let $Q_{\Delta}$ be the number of pairs of positions $\{i, i+D/2\}$ for which both entries are $\locvecO$ or $\locvecx$. When $Q_{\Delta}=1$, the possible values of $Z$ and their changes are
\begin{align}
\begin{array}{c|c|c}
\text{entries in the pair} & Z & F(Z)-F(0,0,0,0) \\ \hline
(\locvecO,\locvecO) &(2,0,0,0) & \dfrac{C_f^2}{C_s(C_s+1)^2} \\
(\locvecO,\locvecx) & (0,1,1,0) & -\dfrac{2C_sC_f+1}{(C_s+1)^2(2C_s+1)} \\
(\locvecx,\locvecx) & (0,0,0,2) & \dfrac{2C_s^2}{(C_s+1)^2(2C_s+1)}.
\end{array}
\label{CountDoverTwoChangeTable}
\end{align}
For example, a pair containing $\locvecO$ and $\locvecx$ gives the two tuples $(\locvecO,\locvecx)$ and $(\locvecx,\locvecO)$, so $Z=(0,1,1,0)$. Substitution into \eqref{epicount} gives
\begin{align*}
F(0,1,1,0) - F(0,0,0,0) & = \frac{2C_f^2-2C_f}{2(C_s+1)^2} + \frac{2C_f-2}{2(C_s+1)^2(2C_s+1)} -\frac{C_f^2}{(C_s+1)^2} \\
& = -\frac{2C_sC_f+1}{(C_s+1)^2(2C_s+1)}.
\end{align*}

There are $D/2$ pairs of positions of the form $\{i, i+D/2\}$. Sampling the two entries without replacement gives respective expectations
\begin{align}
\frac{C_s(C_s-1)}{2(D-1)},~~~~
\frac{C_sC_f}{D-1},~~~~
\frac{C_f(C_f-1)}{2(D-1)}
\label{CountDoverTwoExpectedPairs}
\end{align}
for $(\locvecO,\locvecO)$, $(\locvecO,\locvecx)$, and $(\locvecx,\locvecx)$, respectively. The calculation in \eqref{expectedTwoDoverTwo} with $f$ replaced by $C_s+C_f$ gives
\begin{align*}
\ExptWR{\sigma}{Q_{\Delta} (Q_{\Delta}-1)}=O(D^{-2}).
\end{align*}
Thus the cases with $Q_{\Delta} \geq 2$ affect the expectation only by $O(D^{-2})$. Multiplying the entries in \eqref{CountDoverTwoChangeTable} by the three expectations in \eqref{CountDoverTwoExpectedPairs} and adding them gives
\begin{align}
\CovWR{\sigma,\pi}{\theta_s}{\theta_{s+\Delta}} & = -\frac{C_f(2C_s+C_f)}{2(C_s+1)(2C_s+1)(D-1)}+O(D^{-2}).
\label{countSparseSpecialRaw}
\end{align}

Lemma~\ref{emptylemma} gives the one shift variance
\begin{align}
V_{\mathrm{ind}} = \frac{C_sC_f(C_s+C_f+1)}{(C_s+1)^2(C_s+2)}.
\label{CountIndividualVariance}
\end{align}
Dividing the leading term of \eqref{countSparseGenericRaw} by \eqref{CountIndividualVariance} gives
\begin{align}
\CovWR{\sigma,\pi}{\theta_s}{\theta_{s+\Delta}} = -V_{\mathrm{ind}}\frac{(C_s+2)(C_s+C_f)}{(2C_s+1)(C_s+C_f+1)(D-1)} + O(D^{-2}),
\label{CountSparseGenericNormalized}
\end{align}
while dividing \eqref{countSparseSpecialRaw} by \eqref{CountIndividualVariance} gives
\begin{align}
\CovWR{\sigma,\pi}{\theta_s}{\theta_{s+\Delta}} = -V_{\mathrm{ind}} \frac{(C_s+1)(C_s+2)}{2C_s(2C_s+1)}\frac{2C_s+C_f}{(C_s+C_f+1)(D-1)}+O(D^{-2}). \label{CountSparseDoverTwoNormalized}
\end{align}
Equations \eqref{CountSparseGenericNormalized} and \eqref{CountSparseDoverTwoNormalized} give the next two expressions in Lemma~\ref{lastlemA} when $C_s$ and $C_f$ are fixed.

Finally, we now suppose that both $C_s/D$ and $C_f/D$ are bounded away from zero. We continue to use $Z$ from \eqref{CountZdefinition} and $F$ from \eqref{epicount}. The expectation of $Z$ is
\begin{align}
\mu = \ExptWR{\sigma}{Z}
=\frac{1}{D-1} \left(C_s(C_s-1),~C_sC_f,~C_sC_f,~C_f(C_f-1)\right). \label{CountMeanVector}
\end{align}

We use $P(A_1,\hdots,A_r)$ to denote the probability that a sequence of $r$ draws without replacement from the location vector gives $A_1,\ldots,A_r$ in this order, where every $A_i$ is $\locvecO$, $\locvecx$, or $\locvecm$. For example,
\begin{align*}
P(\locvecO,\locvecx) & = \frac{C_sC_f}{D(D-1)} 
\end{align*}

For any two entries $A,B \in \{\locvecO,\locvecx,\locvecm\}$, let
\begin{align}
Z_{AB}=\sum_{i=1}^D\indic{(x_i,x_{i+\Delta})=(A,B)}.
\label{CountZABdefinition}
\end{align}
Thus $Z_{AB}$ is the number of times $(A,B)$ occurs among the tuples $(x_i,x_{i+\Delta})$.

Suppose first that $D$ is odd or $\Delta \neq D/2$. There are $D^2$ terms after expanding $Z_{AB} Z_{CE}$ in \eqref{CountZABdefinition}. If the two starting positions are equal, there are $D$ terms, and they are nonzero only if $A = C$ and $B = E$. If the second starting position is $\Delta$ after the first, there are $D$ terms, and the common entry requires $B = C$. If the first starting position is $\Delta$ after the second, there are $D$ terms, and the common entry requires $E = A$. The remaining $D(D-3)$ terms use four distinct positions. Adding these four cases gives
\begin{align}
\ExptWR{\sigma}{Z_{AB}Z_{CE}} & = D\indic{A=C,B=E} P(A,B) + D\indic{B=C}P(A,B,E) \notag\\
&~~+D\indic{E=A}P(C,A,B) +D(D-3)P(A,B,C,E).
\label{CountOtherSecondMoment}
\end{align}
Now suppose that $D$ is even and $\Delta=D/2$. There are $D$ terms with the same starting position. There are another $D$ terms in which $(x_j,x_{j+\Delta})=(x_{i+\Delta},x_i)$, which requires $A=E$ and $B=C$. The remaining $D(D-2)$ terms have four distinct positions. Therefore
\begin{align}
\ExptWR{\sigma}{Z_{AB}Z_{CE}} & =  D \indic{A=C,B=E}P(A,B) +D\indic{A=E,B=C}P(A,B) \notag\\
&~~~+D(D-2)P(A,B,C,E).
\label{CountDoverTwoSecondMoment}
\end{align}
Equations \eqref{CountOtherSecondMoment} and \eqref{CountDoverTwoSecondMoment}, together with their corresponding draw probabilities, give the sixteen second moments of the four entries in \eqref{CountZdefinition}. For example, when $D$ is odd or $\Delta\neq D/2$,
\begin{align*}
\ExptWR{\sigma}{\Card{\mathcal L_0} \Card{\mathcal{L}_1}} = D P(\locvecO,\locvecO,\locvecx)+D(D-3)P(\locvecO,\locvecO,\locvecO,\locvecx),
\end{align*}
because the second $\locvecO$ in $(\locvecO,\locvecO)$ can be the first $\locvecO$ in $(\locvecO,\locvecx)$. When $D$ is even and $\Delta=D/2$,
\begin{align*}
\ExptWR{\sigma}{\Card{\mathcal {L}_1}\Card{\mathcal H_0}} =D P(\locvecO,\locvecx) + D(D-2) P(\locvecO,\locvecx,\locvecx,\locvecO),
\end{align*}
because $(\locvecO,\locvecx)$ and $(\locvecx,\locvecO)$ can be the two tuples formed from the same two positions.

Let $W=Z-\mu$. Taylor expansion of $F(\mu+W)$ and taking expectations give
\begin{align}
\ExptWR{\sigma}{F(Z)} & = F(\mu) + \frac{1}{2} \sum_{r,q=0}^3 F_{rq}(\mu) \CovWR{\sigma}{Z_r}{Z_q} \notag \\
& ~~~+ \frac16\sum_{r,q,v=0}^3 F_{rqv}(\mu)\ExptWR{\sigma}{W_rW_qW_v} + R_D.
\label{CountTaylorExpansion}
\end{align}
Here $F_{rq}$ and $F_{rqv}$ denote the respective partial derivatives. Because $\Card{\mathcal{L}_0}\leq C_s$, every denominator in \eqref{epicount} is at least of order $C_s$. Since $C_s/D$ is bounded away from zero, the derivatives of orders two, three, and four are $O(D^{-2})$, $O(D^{-3})$, and $O(D^{-4})$, respectively. Expanding products of three and four sums of the form \eqref{CountZABdefinition}, and separating the common index cases in the same way as in \eqref{CountOtherSecondMoment} and \eqref{CountDoverTwoSecondMoment}, gives
\begin{align}
\ExptWR{\sigma}{W_rW_qW_v}=O(D), ~~~\ExptWR{\sigma}{\|W\|^4}=O(D^2). \label{CountHigherMomentOrders}
\end{align}
It follows from \eqref{CountHigherMomentOrders} that the third order term in \eqref{CountTaylorExpansion} is $O(D^{-2})$. The fourth order Taylor remainder satisfies $R_D = O(D^{-2})$ for the same reason. Thus, up to $O(D^{-2})$, only $F(\mu)$ and the covariance term remain in \eqref{CountTaylorExpansion}.

Now, we can use a computer to find \eqref{CountTaylorExpansion}. First, compute every covariance term as
\begin{align*}
\CovWR{\sigma}{Z_{AB}}{Z_{CE}} = \ExptWR{\sigma}{Z_{AB}Z_{CE}} -\ExptWR{\sigma}{Z_{AB}}\ExptWR{\sigma}{Z_{CE}}, 
\end{align*}
using \eqref{CountMeanVector} with \eqref{CountOtherSecondMoment} or \eqref{CountDoverTwoSecondMoment}. Substitute these covariances into the second term of \eqref{CountTaylorExpansion}, use $C_n=D-C_s-C_f$, and collect the terms of order $D^{-1}$. This gives
\begin{align}
\CovWR{\sigma,\pi}{\theta_s}{\theta_{s+\Delta}} = \left\{\begin{array}{l l}
 -\dfrac{C_f\{C_s(2D-C_s)+C_f(D-C_s)\}}{C_s^2(2D-C_s)^2} +O(D^{-2}) , &\begin{array}{l}D\text{ even and}\\
\Delta=D/2,
\end{array}\\
-\dfrac{C_f(C_s+C_f)}{C_s^2(2D-C_s)} +O(D^{-2}), &\text{otherwise.}
\end{array}\right.
\label{CountDenseRawExpansion}
\end{align}
Equation~\eqref{CountDenseRawExpansion} follows by combining \eqref{CountMeanVector}, the respective draw probabilities, the second moments \eqref{CountOtherSecondMoment} and \eqref{CountDoverTwoSecondMoment}, and the expansion \eqref{CountTaylorExpansion}, and noting that $\CovWR{\sigma,\pi}{\theta_s}{\theta_{s+\Delta}} = \ExptWR{\sigma}{F(Z)} -\left(\frac{C_f}{C_s+1}\right)^2$.

Next, the variance in \eqref{CountIndividualVariance} satisfies
\begin{align}
V_{\mathrm{ind}} & = \frac{C_f(C_s+C_f)}{C_s^2}+O(D^{-1}), \label{CountDenseVarianceApproximation} \\
\frac{V_{\mathrm{ind}}}{C_s+C_f} & =\frac{C_f}{C_s^2}+O(D^{-2}). \label{CountDenseScaledVarianceApproximation}
\end{align}
Substituting \eqref{CountDenseVarianceApproximation} and \eqref{CountDenseScaledVarianceApproximation} into \eqref{CountDenseRawExpansion} gives
\begin{align*}
\CovWR{\sigma,\pi}{\theta_s}{\theta_{s+\Delta}}
=\left\{\begin{array}{l l}
-V_{\mathrm{ind}}
\dfrac{C_s(2D-C_s)+C_f(D-C_s)}
{(C_s+C_f)(2D-C_s)^2} +O(D^{-2}), &\begin{array}{l} D\text{ even and}\\
\Delta=D/2,\end{array}\\ 
-V_{\mathrm{ind}}\dfrac{1}{2D-C_s}+O(D^{-2}), & \text{otherwise.}
\end{array}\right.
\end{align*}

\end{proof}

\subsection{Additional Experiments to Verify Variance Approximation}\label{appendixBExpts}

Since the Minner estimator depends on $C_s=f_2$ and $C_f=f_1-a$, we investigate edge cases when $f_1-a, f_2$ are small relative to $D$, as $D$ increases. All experiments are done over 1,000 iterations (an order of magnitude less than the plots in Appendix~\ref{appendix_a5_EXPT}, since dealing with large $K$ means more expensive permutations). We can see that the theoretical variance approximations match the empirical MSEs well, as in Figures~\ref{appendixBfig1}-\ref{appendixBfig6}.

In the legend of these plots, {\tt Counts MSE} denotes the empirical MSE of the Minner count estimator, {\tt Counts Var} denotes the theoretical variance of the Minner count estimator, {\tt CP Counts MSE} denotes the empirical MSE of the count estimator with circulant permutations, {\tt CP Counts Var (S)} denotes the theoretical variance approximation when $\frac{C_s+C_f}{D} \equiv \frac{f_1+f_2-a}{D} \rightarrow 0$, and {\tt CP Counts Var (NS)} denotes the theoretical variance approximation when $\frac{f_1+f_2-a}{D}$ does not tend to zero. 

Since we are looking at counts when $\frac{f_1+f_2-a}{D}$ is small, {\tt CP Counts Var (S)} should be a better approximation. However, we find that both {\tt CP Counts Var (S)} and {\tt CP Counts Var (NS)} are equally good estimators as they are indistinguishable from each other.

Unlike the experiments in Appendix~\ref{appendix_a5_EXPT}, where the theoretical variance curves are almost indistinguishable from the empirical MSEs, we do not expect the count approximations here to overlap perfectly for moderate $D$, especially when $\frac{f_1+f_2-a}{D}$ is small, given that we use a log scale for the $y$ axes in these plots. However, the approximation is still accurate for practical comparisons. In particular, {\tt Counts Var} gives a useful reference for the amount of Monte Carlo discrepancy expected from 1,000 iterations.

\begin{figure}[h]
\begin{center}
\includegraphics[width=0.8\linewidth]{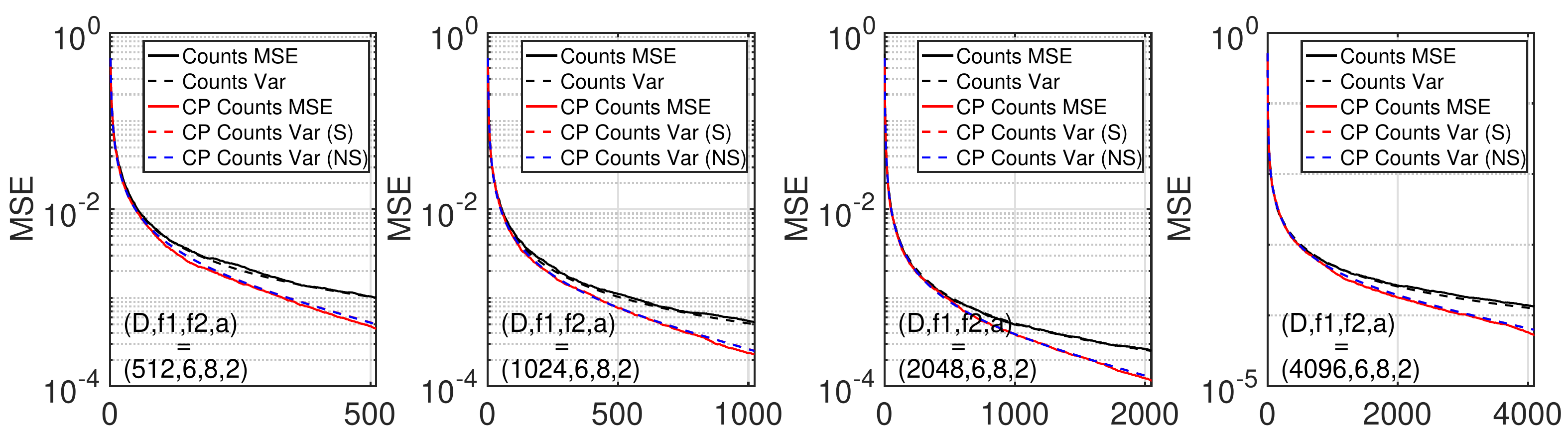}
\end{center}
\caption{Empirical MSE vs theoretical approximate variance in Theorem~\ref{countThm} at edge cases when $f_i$ are small respective to $D$. We use $(f_1,f_2,a) = (6,8,2)$. 
\label{appendixBfig1}}
\end{figure}

\begin{figure*}[h]
\begin{center}
\includegraphics[width=0.8\linewidth]{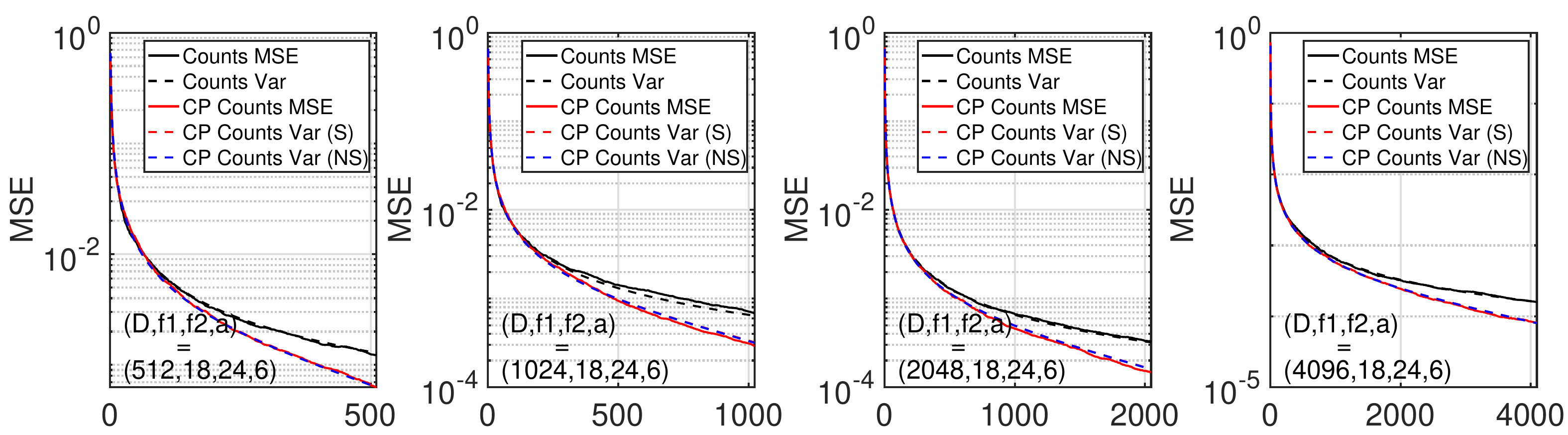}
\end{center}
\caption{Empirical MSE vs theoretical approximate variance in Theorem~\ref{countThm} at edge cases when $f_i$ are small respective to $D$. We use $(f_1,f_2,a) = (18,24,6)$. 
\label{appendixBfig2}}
\end{figure*}

\begin{figure*}[h]
\begin{center}
\includegraphics[width=0.8\linewidth]{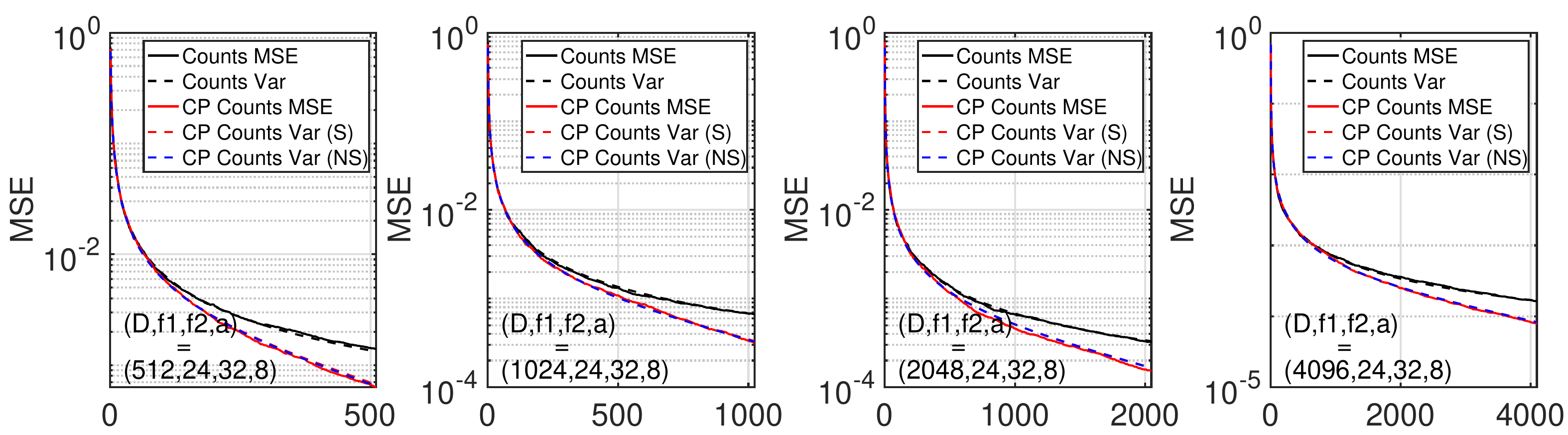}
\end{center}
\caption{Empirical MSE vs theoretical approximate variance in Theorem~\ref{countThm} at edge cases when $f_i$ are small respective to $D$. We use $(f_1,f_2,a) = (24,32,8)$. 
\label{appendixBfig4}}
\end{figure*}

\begin{figure*}[h]
\begin{center}
\includegraphics[width=0.8\linewidth]{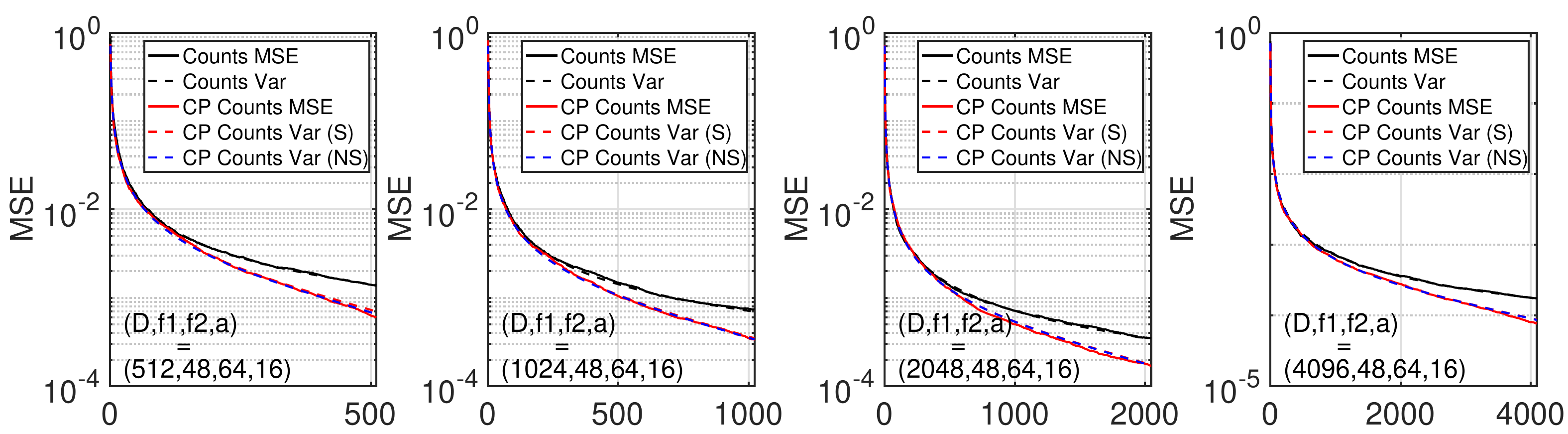}
\end{center}
\caption{Empirical MSE vs theoretical approximate variance in Theorem~\ref{countThm} at edge cases when $f_i$ are small respective to $D$. We use $(f_1,f_2,a) = (48,64,16)$. 
\label{appendixBfig5}}
\end{figure*}

\begin{figure*}[h]
\begin{center}
\includegraphics[width=0.8\linewidth]{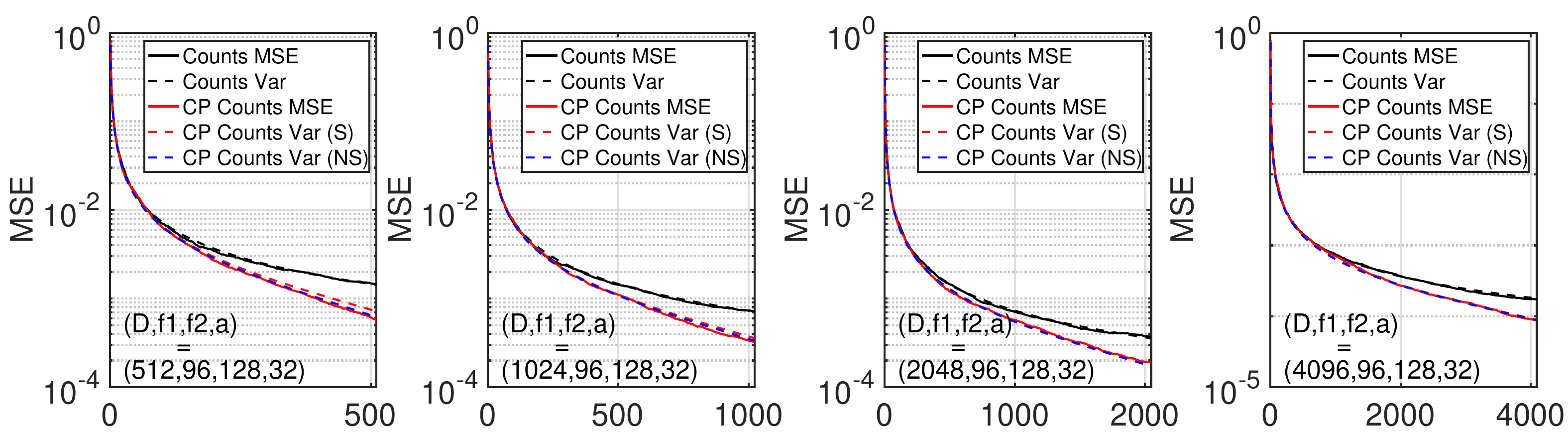}
\end{center}
\caption{Empirical MSE vs theoretical approximate variance in Theorem~\ref{countThm} at edge cases when $f_i$ are small respective to $D$. We use $(f_1,f_2,a) = (96,128,32)$. 
\label{appendixBfig6}}
\end{figure*}

\begin{figure*}[h]
\begin{center}
\includegraphics[width=0.8\linewidth]{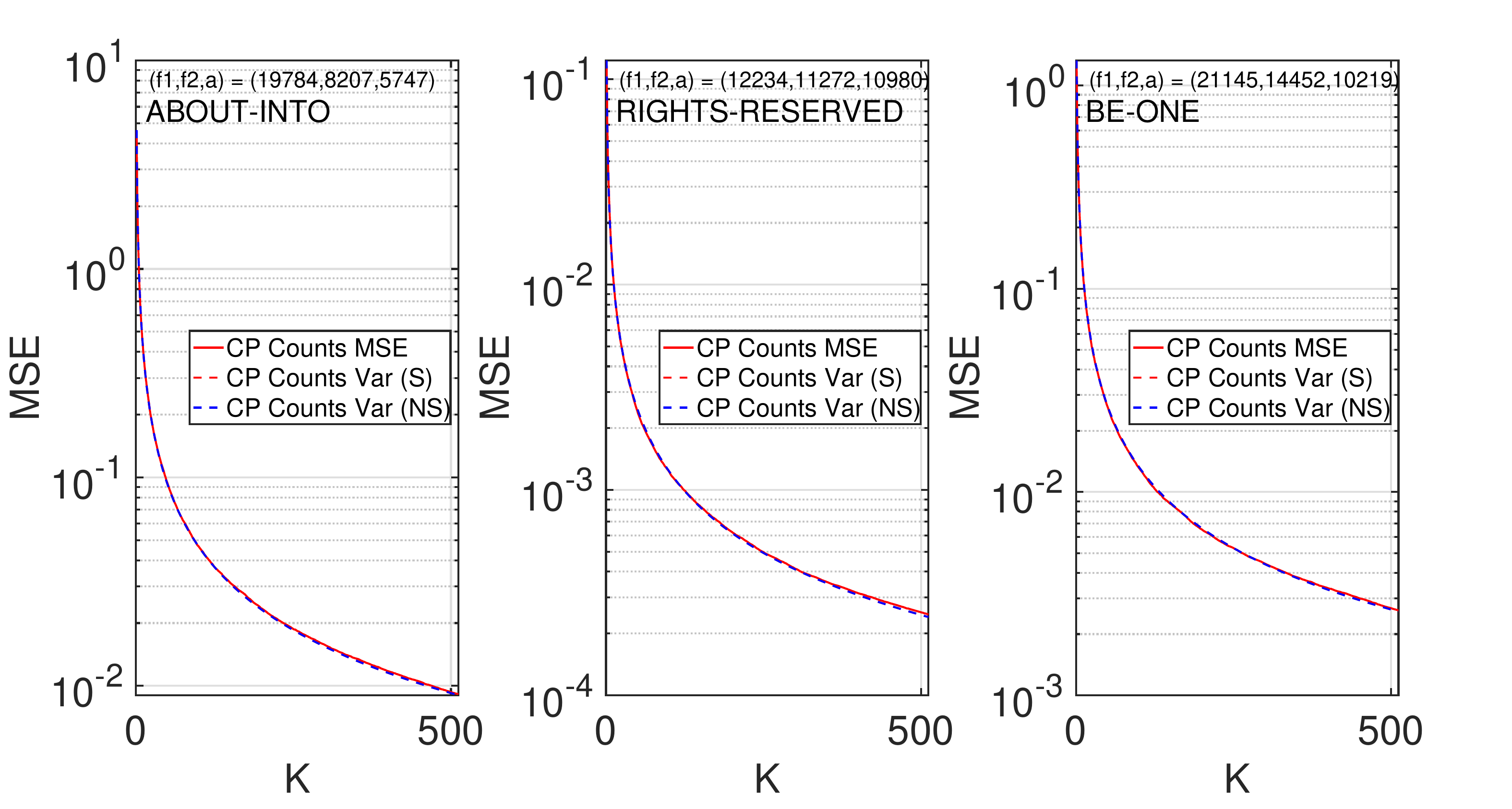}

\includegraphics[width=0.8\linewidth]{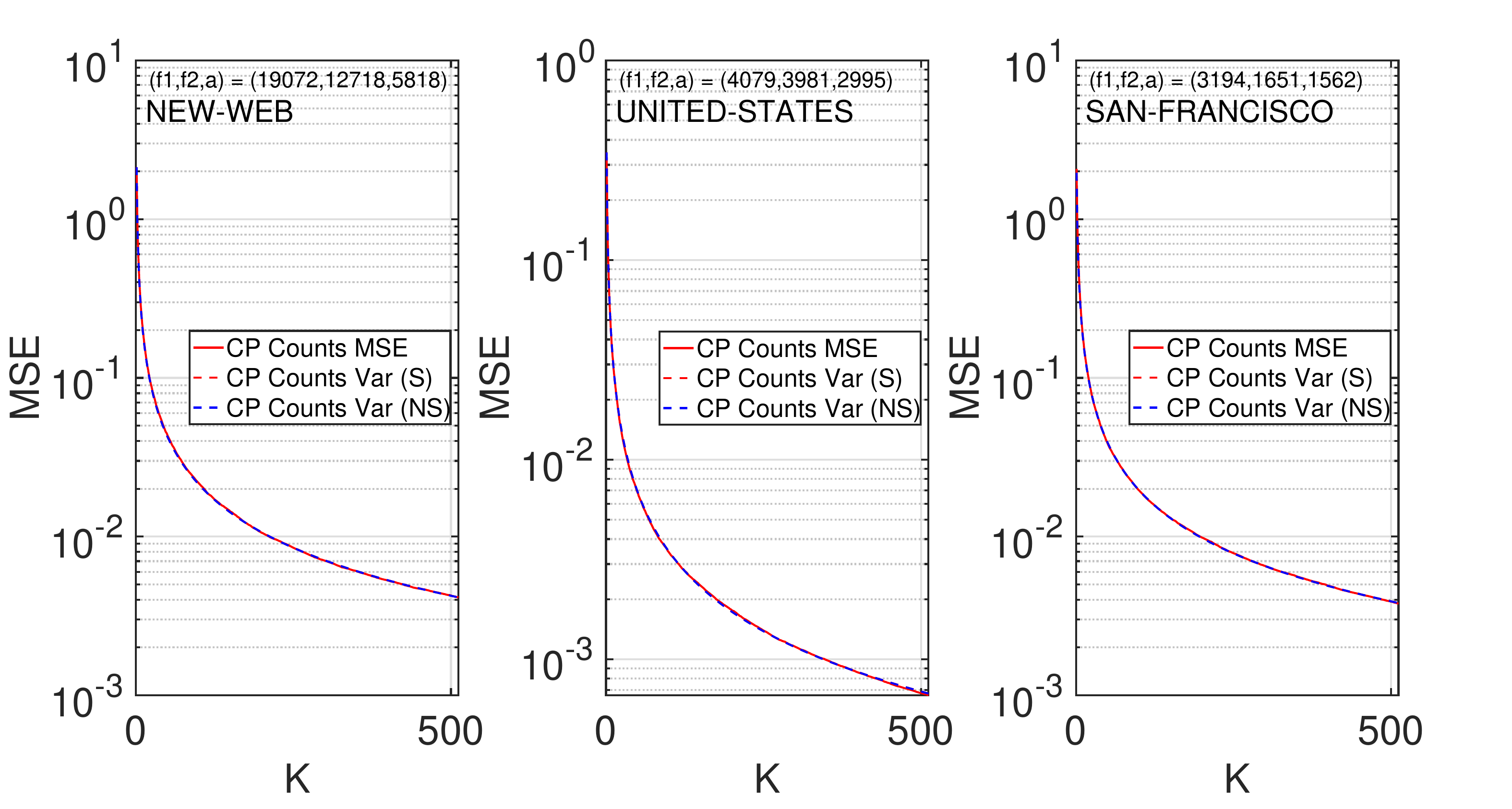}

\includegraphics[width=0.8\linewidth]{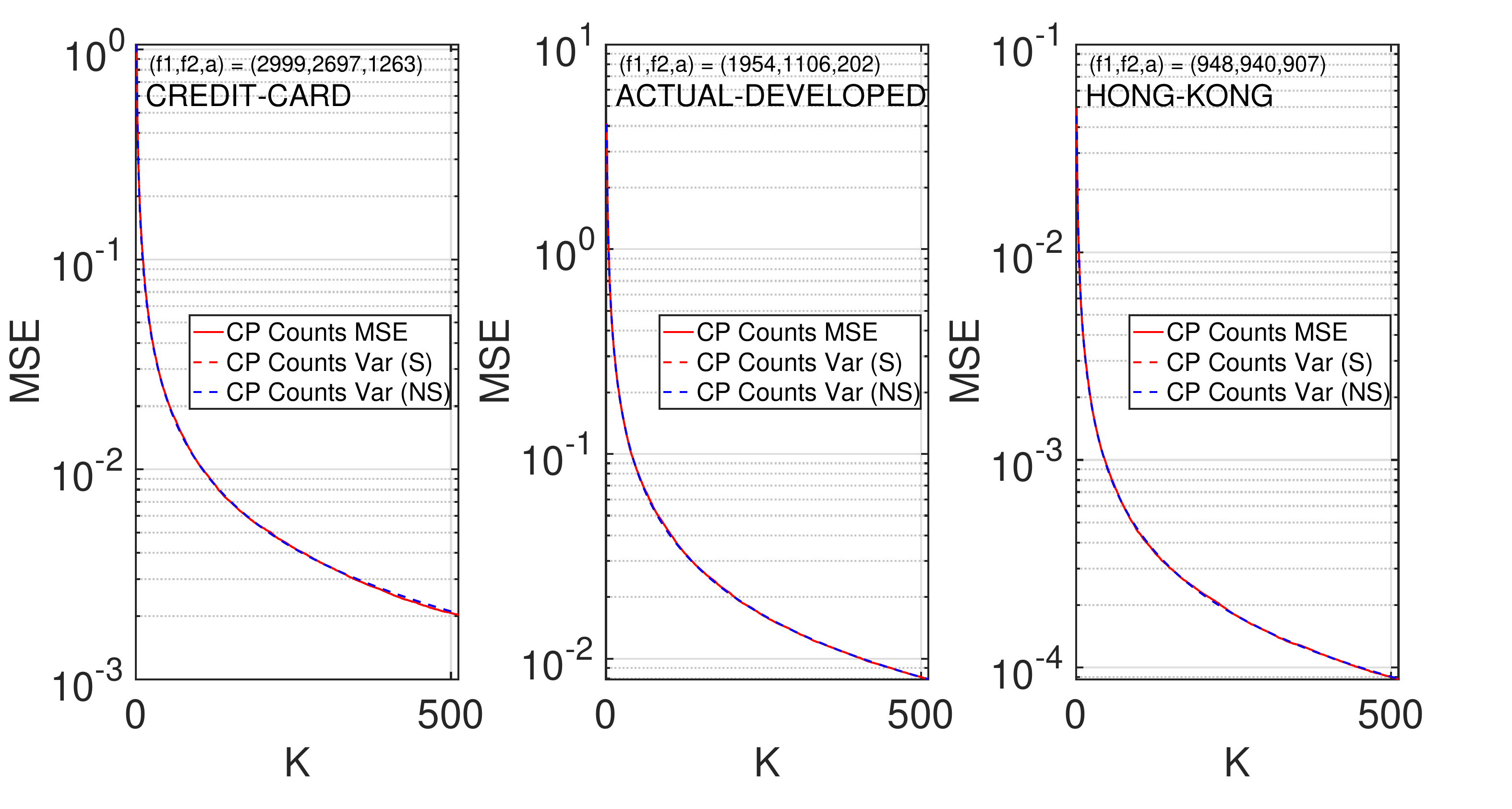}
\end{center}
\caption{Empirical MSE vs theoretical approximate variance in Theorem~\ref{countThm} for selected word pairs. Both theoretical approximations and empirical MSE lines overlap perfectly. \label{words_data_M_CMH}}
\end{figure*}

We also plot the theoretical approximation and  the empirical MSE for select word pairs from the words dataset \citep{li2010b} in Figure~\ref{words_data_M_CMH}. 

Both the theoretical approximate variance curves overlaps with the empirical MSEs for all word pairs, even the {\tt HONG}-{\tt KONG} word pair, where $\frac{f_1-a}{D} = \frac{41}{65536}\approx 6.3 \times 10^{-4}$ is small.

\clearpage

\newpage
\section*{NeurIPS Paper Checklist}

\begin{enumerate}

\item {\bf Claims}
    \item[] Question: Do the main claims made in the abstract and introduction accurately reflect the paper's contributions and scope?
    \item[] Answer: \answerYes{}
    \item[] Justification: Our abstract claims a new approach to approximate the variance of MinHash under circulant permutations. We demonstrate this in Section 2. We also claim that we describe how to apply our approach to different MinHash estimators. We describe how to do so in Section 4 for asymmetric hashing (Minner estimator). 
    \item[] Guidelines:
    \begin{itemize}
        \item The answer \answerNA{} means that the abstract and introduction do not include the claims made in the paper.
        \item The abstract and/or introduction should clearly state the claims made, including the contributions made in the paper and important assumptions and limitations. A \answerNo{} or \answerNA{} answer to this question will not be perceived well by the reviewers. 
        \item The claims made should match theoretical and experimental results, and reflect how much the results can be expected to generalize to other settings. 
        \item It is fine to include aspirational goals as motivation as long as it is clear that these goals are not attained by the paper. 
    \end{itemize}

\item {\bf Limitations}
    \item[] Question: Does the paper discuss the limitations of the work performed by the authors?
    \item[] Answer: \answerYes{} % Replace by \answerYes{}, \answerNo{}, or \answerNA{}.
    \item[] Justification: The limitations are in the last section of the paper. 
    \item[] Guidelines:
    \begin{itemize}
        \item The answer \answerNA{} means that the paper has no limitation while the answer \answerNo{} means that the paper has limitations, but those are not discussed in the paper. 
        \item The authors are encouraged to create a separate ``Limitations'' section in their paper.
        \item The paper should point out any strong assumptions and how robust the results are to violations of these assumptions (e.g., independence assumptions, noiseless settings, model well-specification, asymptotic approximations only holding locally). The authors should reflect on how these assumptions might be violated in practice and what the implications would be.
        \item The authors should reflect on the scope of the claims made, e.g., if the approach was only tested on a few datasets or with a few runs. In general, empirical results often depend on implicit assumptions, which should be articulated.
        \item The authors should reflect on the factors that influence the performance of the approach. For example, a facial recognition algorithm may perform poorly when image resolution is low or images are taken in low lighting. Or a speech-to-text system might not be used reliably to provide closed captions for online lectures because it fails to handle technical jargon.
        \item The authors should discuss the computational efficiency of the proposed algorithms and how they scale with dataset size.
        \item If applicable, the authors should discuss possible limitations of their approach to address problems of privacy and fairness.
        \item While the authors might fear that complete honesty about limitations might be used by reviewers as grounds for rejection, a worse outcome might be that reviewers discover limitations that aren't acknowledged in the paper. The authors should use their best judgment and recognize that individual actions in favor of transparency play an important role in developing norms that preserve the integrity of the community. Reviewers will be specifically instructed to not penalize honesty concerning limitations.
    \end{itemize}

\item {\bf Theory assumptions and proofs}
    \item[] Question: For each theoretical result, does the paper provide the full set of assumptions and a complete (and correct) proof?
    \item[] Answer: \answerYes{} % Replace by \answerYes{}, \answerNo{}, or \answerNA{}.
    \item[] Justification: A full and complete proof for all lemmas, theorems, and corollaries are in the appendix. The assumption that $D$ is large is clearly stated. 
    \item[] Guidelines:
    \begin{itemize}
        \item The answer \answerNA{} means that the paper does not include theoretical results. 
        \item All the theorems, formulas, and proofs in the paper should be numbered and cross-referenced.
        \item All assumptions should be clearly stated or referenced in the statement of any theorems.
        \item The proofs can either appear in the main paper or the supplemental material, but if they appear in the supplemental material, the authors are encouraged to provide a short proof sketch to provide intuition. 
        \item Inversely, any informal proof provided in the core of the paper should be complemented by formal proofs provided in appendix or supplemental material.
        \item Theorems and Lemmas that the proof relies upon should be properly referenced. 
    \end{itemize}

    \item {\bf Experimental result reproducibility}
    \item[] Question: Does the paper fully disclose all the information needed to reproduce the main experimental results of the paper to the extent that it affects the main claims and/or conclusions of the paper (regardless of whether the code and data are provided or not)?
    \item[] Answer: \answerYes{} % Replace by \answerYes{}, \answerNo{}, or \answerNA{}.
    \item[] Justification: Our experiments verify if our approximate variances are close to the empirical MSEs of existing estimators (CMinHash, and circulant permutation applied to the Minner estimator). We review how circulant permutations work in Appendix A1, and give the approximate variances in Theorems 2 and 4.
    \item[] Guidelines:
    \begin{itemize}
        \item The answer \answerNA{} means that the paper does not include experiments.
        \item If the paper includes experiments, a \answerNo{} answer to this question will not be perceived well by the reviewers: Making the paper reproducible is important, regardless of whether the code and data are provided or not.
        \item If the contribution is a dataset and\slash or model, the authors should describe the steps taken to make their results reproducible or verifiable. 
        \item Depending on the contribution, reproducibility can be accomplished in various ways. For example, if the contribution is a novel architecture, describing the architecture fully might suffice, or if the contribution is a specific model and empirical evaluation, it may be necessary to either make it possible for others to replicate the model with the same dataset, or provide access to the model. In general. releasing code and data is often one good way to accomplish this, but reproducibility can also be provided via detailed instructions for how to replicate the results, access to a hosted model (e.g., in the case of a large language model), releasing of a model checkpoint, or other means that are appropriate to the research performed.
        \item While NeurIPS does not require releasing code, the conference does require all submissions to provide some reasonable avenue for reproducibility, which may depend on the nature of the contribution. For example
        \begin{enumerate}
            \item If the contribution is primarily a new algorithm, the paper should make it clear how to reproduce that algorithm.
            \item If the contribution is primarily a new model architecture, the paper should describe the architecture clearly and fully.
            \item If the contribution is a new model (e.g., a large language model), then there should either be a way to access this model for reproducing the results or a way to reproduce the model (e.g., with an open-source dataset or instructions for how to construct the dataset).
            \item We recognize that reproducibility may be tricky in some cases, in which case authors are welcome to describe the particular way they provide for reproducibility. In the case of closed-source models, it may be that access to the model is limited in some way (e.g., to registered users), but it should be possible for other researchers to have some path to reproducing or verifying the results.
        \end{enumerate}
    \end{itemize}

\item {\bf Open access to data and code}
    \item[] Question: Does the paper provide open access to the data and code, with sufficient instructions to faithfully reproduce the main experimental results, as described in supplemental material?
    \item[] Answer: \answerYes{} % Replace by \answerYes{}, \answerNo{}, or \answerNA{}.
    \item[] Justification: Our data is mostly synthetic, and the words dataset is an existing dataset used by \cite{li2022minhash}. We describe our experiments clearly in Sections 2 and 3. Our experiments are mostly similar to \cite{li2022minhash}, hence code is not provided.
    \item[] Guidelines:
    \begin{itemize}
        \item The answer \answerNA{} means that paper does not include experiments requiring code.
        \item Please see the NeurIPS code and data submission guidelines (\url{https://neurips.cc/public/guides/CodeSubmissionPolicy}) for more details.
        \item While we encourage the release of code and data, we understand that this might not be possible, so \answerNo{} is an acceptable answer. Papers cannot be rejected simply for not including code, unless this is central to the contribution (e.g., for a new open-source benchmark).
        \item The instructions should contain the exact command and environment needed to run to reproduce the results. See the NeurIPS code and data submission guidelines (\url{https://neurips.cc/public/guides/CodeSubmissionPolicy}) for more details.
        \item The authors should provide instructions on data access and preparation, including how to access the raw data, preprocessed data, intermediate data, and generated data, etc.
        \item The authors should provide scripts to reproduce all experimental results for the new proposed method and baselines. If only a subset of experiments are reproducible, they should state which ones are omitted from the script and why.
        \item At submission time, to preserve anonymity, the authors should release anonymized versions (if applicable).
        \item Providing as much information as possible in supplemental material (appended to the paper) is recommended, but including URLs to data and code is permitted.
    \end{itemize}

\item {\bf Experimental setting/details}
    \item[] Question: Does the paper specify all the training and test details (e.g., data splits, hyperparameters, how they were chosen, type of optimizer) necessary to understand the results?
    \item[] Answer: \answerYes{} % Replace by \answerYes{}, \answerNo{}, or \answerNA{}.
    \item[] Justification: Our data is mostly synthetic, and the words dataset is an existing dataset used by \cite{li2022minhash}. We describe our experiments clearly in Sections 2 and 3.
    \item[] Guidelines:
    \begin{itemize}
        \item The answer \answerNA{} means that the paper does not include experiments.
        \item The experimental setting should be presented in the core of the paper to a level of detail that is necessary to appreciate the results and make sense of them.
        \item The full details can be provided either with the code, in appendix, or as supplemental material.
    \end{itemize}

\item {\bf Experiment statistical significance}
    \item[] Question: Does the paper report error bars suitably and correctly defined or other appropriate information about the statistical significance of the experiments?
    \item[] Answer: \answerNA{} % Replace by \answerYes{}, \answerNo{}, or \answerNA{}.
    \item[] Justification: Because we are showing an approximation (which is a fixed value given $f_1,f_2,a,D$) of the actual variance of estimators, no error bars are needed. We compute the respective MSEs for MinHash and Minner estimates averaged over 10,000 iterations. If our approximation was correct, the respective average of the empirical MSEs should overlap with our theoretical variance. Any confidence intervals (or error bars) around the average of the 10,000 empirical MSEs would be of the form $\overline{\mathrm{MSE}} \pm \epsilon$. But if our approximate variance overlap the empirical MSEs, then the value of $\epsilon$ doesn't matter in this case (in fact, with 10,000 iterations, any $\epsilon$ would be sufficiently small to be mathematically negligible for visual comparison).
    \item[] Guidelines:
    \begin{itemize}
        \item The answer \answerNA{} means that the paper does not include experiments.
        \item The authors should answer \answerYes{} if the results are accompanied by error bars, confidence intervals, or statistical significance tests, at least for the experiments that support the main claims of the paper.
        \item The factors of variability that the error bars are capturing should be clearly stated (for example, train/test split, initialization, random drawing of some parameter, or overall run with given experimental conditions).
        \item The method for calculating the error bars should be explained (closed form formula, call to a library function, bootstrap, etc.)
        \item The assumptions made should be given (e.g., Normally distributed errors).
        \item It should be clear whether the error bar is the standard deviation or the standard error of the mean.
        \item It is OK to report 1-sigma error bars, but one should state it. The authors should preferably report a 2-sigma error bar than state that they have a 96\% CI, if the hypothesis of Normality of errors is not verified.
        \item For asymmetric distributions, the authors should be careful not to show in tables or figures symmetric error bars that would yield results that are out of range (e.g., negative error rates).
        \item If error bars are reported in tables or plots, the authors should explain in the text how they were calculated and reference the corresponding figures or tables in the text.
    \end{itemize}

\item {\bf Experiments compute resources}
    \item[] Question: For each experiment, does the paper provide sufficient information on the computer resources (type of compute workers, memory, time of execution) needed to reproduce the experiments?
    \item[] Answer: \answerNo{} % Replace by \answerYes{}, \answerNo{}, or \answerNA{}.
    \item[] Justification: Our experiments mainly verify that our approximate variance matches empirical results, and no large scale compute resources are needed.  
    \item[] Guidelines:
    \begin{itemize}
        \item The answer \answerNA{} means that the paper does not include experiments.
        \item The paper should indicate the type of compute workers CPU or GPU, internal cluster, or cloud provider, including relevant memory and storage.
        \item The paper should provide the amount of compute required for each of the individual experimental runs as well as estimate the total compute. 
        \item The paper should disclose whether the full research project required more compute than the experiments reported in the paper (e.g., preliminary or failed experiments that didn't make it into the paper). 
    \end{itemize}
    
\item {\bf Code of ethics}
    \item[] Question: Does the research conducted in the paper conform, in every respect, with the NeurIPS Code of Ethics \url{https://neurips.cc/public/EthicsGuidelines}?
    \item[] Answer: \answerYes{} % Replace by \answerYes{}, \answerNo{}, or \answerNA{}.
    \item[] Justification: We confirm to the NeurIPS Code of Ethics.
    \item[] Guidelines:
    \begin{itemize}
        \item The answer \answerNA{} means that the authors have not reviewed the NeurIPS Code of Ethics.
        \item If the authors answer \answerNo, they should explain the special circumstances that require a deviation from the Code of Ethics.
        \item The authors should make sure to preserve anonymity (e.g., if there is a special consideration due to laws or regulations in their jurisdiction).
    \end{itemize}

\item {\bf Broader impacts}
    \item[] Question: Does the paper discuss both potential positive societal impacts and negative societal impacts of the work performed?
    \item[] Answer: \answerNA{} % Replace by \answerYes{}, \answerNo{}, or \answerNA{}.
    \item[] Justification: This work serves to make the analysis of the variance of randomized algorithms involving circulant permutations easier.
    \item[] Guidelines:
    \begin{itemize}
        \item The answer \answerNA{} means that there is no societal impact of the work performed.
        \item If the authors answer \answerNA{} or \answerNo, they should explain why their work has no societal impact or why the paper does not address societal impact.
        \item Examples of negative societal impacts include potential malicious or unintended uses (e.g., disinformation, generating fake profiles, surveillance), fairness considerations (e.g., deployment of technologies that could make decisions that unfairly impact specific groups), privacy considerations, and security considerations.
        \item The conference expects that many papers will be foundational research and not tied to particular applications, let alone deployments. However, if there is a direct path to any negative applications, the authors should point it out. For example, it is legitimate to point out that an improvement in the quality of generative models could be used to generate Deepfakes for disinformation. On the other hand, it is not needed to point out that a generic algorithm for optimizing neural networks could enable people to train models that generate Deepfakes faster.
        \item The authors should consider possible harms that could arise when the technology is being used as intended and functioning correctly, harms that could arise when the technology is being used as intended but gives incorrect results, and harms following from (intentional or unintentional) misuse of the technology.
        \item If there are negative societal impacts, the authors could also discuss possible mitigation strategies (e.g., gated release of models, providing defenses in addition to attacks, mechanisms for monitoring misuse, mechanisms to monitor how a system learns from feedback over time, improving the efficiency and accessibility of ML).
    \end{itemize}
    
\item {\bf Safeguards}
    \item[] Question: Does the paper describe safeguards that have been put in place for responsible release of data or models that have a high risk for misuse (e.g., pre-trained language models, image generators, or scraped datasets)?
    \item[] Answer: \answerNA{} % Replace by \answerYes{}, \answerNo{}, or \answerNA{}.
    \item[] Justification: This is a theoretical paper.
    \item[] Guidelines:
    \begin{itemize}
        \item The answer \answerNA{} means that the paper poses no such risks.
        \item Released models that have a high risk for misuse or dual-use should be released with necessary safeguards to allow for controlled use of the model, for example by requiring that users adhere to usage guidelines or restrictions to access the model or implementing safety filters. 
        \item Datasets that have been scraped from the Internet could pose safety risks. The authors should describe how they avoided releasing unsafe images.
        \item We recognize that providing effective safeguards is challenging, and many papers do not require this, but we encourage authors to take this into account and make a best faith effort.
    \end{itemize}

\item {\bf Licenses for existing assets}
    \item[] Question: Are the creators or original owners of assets (e.g., code, data, models), used in the paper, properly credited and are the license and terms of use explicitly mentioned and properly respected?
    \item[] Answer: \answerYes{} % Replace by \answerYes{}, \answerNo{}, or \answerNA{}.
    \item[] Justification: We cite the algorithm given and use the Words dataset in \cite{li2022minhash}. To the best of our knowledge, the words dataset first appears in \cite{li2005using}.
    \item[] Guidelines:
    \begin{itemize}
        \item The answer \answerNA{} means that the paper does not use existing assets.
        \item The authors should cite the original paper that produced the code package or dataset.
        \item The authors should state which version of the asset is used and, if possible, include a URL.
        \item The name of the license (e.g., CC-BY 4.0) should be included for each asset.
        \item For scraped data from a particular source (e.g., website), the copyright and terms of service of that source should be provided.
        \item If assets are released, the license, copyright information, and terms of use in the package should be provided. For popular datasets, \url{paperswithcode.com/datasets} has curated licenses for some datasets. Their licensing guide can help determine the license of a dataset.
        \item For existing datasets that are re-packaged, both the original license and the license of the derived asset (if it has changed) should be provided.
        \item If this information is not available online, the authors are encouraged to reach out to the asset's creators.
    \end{itemize}

\item {\bf New assets}
    \item[] Question: Are new assets introduced in the paper well documented and is the documentation provided alongside the assets?
    \item[] Answer: \answerNA{} % Replace by \answerYes{}, \answerNo{}, or \answerNA{}.
    \item[] Justification: Our approach is theoretical.
    \item[] Guidelines:
    \begin{itemize}
        \item The answer \answerNA{} means that the paper does not release new assets.
        \item Researchers should communicate the details of the dataset\slash code\slash model as part of their submissions via structured templates. This includes details about training, license, limitations, etc. 
        \item The paper should discuss whether and how consent was obtained from people whose asset is used.
        \item At submission time, remember to anonymize your assets (if applicable). You can either create an anonymized URL or include an anonymized zip file.
    \end{itemize}

\item {\bf Crowdsourcing and research with human subjects}
    \item[] Question: For crowdsourcing experiments and research with human subjects, does the paper include the full text of instructions given to participants and screenshots, if applicable, as well as details about compensation (if any)? 
    \item[] Answer: \answerNA{} % Replace by \answerYes{}, \answerNo{}, or \answerNA{}.
    \item[] Justification: We do not use human subjects.
    \item[] Guidelines:
    \begin{itemize}
        \item The answer \answerNA{} means that the paper does not involve crowdsourcing nor research with human subjects.
        \item Including this information in the supplemental material is fine, but if the main contribution of the paper involves human subjects, then as much detail as possible should be included in the main paper. 
        \item According to the NeurIPS Code of Ethics, workers involved in data collection, curation, or other labor should be paid at least the minimum wage in the country of the data collector. 
    \end{itemize}

\item {\bf Institutional review board (IRB) approvals or equivalent for research with human subjects}
    \item[] Question: Does the paper describe potential risks incurred by study participants, whether such risks were disclosed to the subjects, and whether Institutional Review Board (IRB) approvals (or an equivalent approval/review based on the requirements of your country or institution) were obtained?
    \item[] Answer: \answerNA{} % Replace by \answerYes{}, \answerNo{}, or \answerNA{}.
    \item[] Justification: We do not use human subjects.
    \item[] Guidelines:
    \begin{itemize}
        \item The answer \answerNA{} means that the paper does not involve crowdsourcing nor research with human subjects.
        \item Depending on the country in which research is conducted, IRB approval (or equivalent) may be required for any human subjects research. If you obtained IRB approval, you should clearly state this in the paper. 
        \item We recognize that the procedures for this may vary significantly between institutions and locations, and we expect authors to adhere to the NeurIPS Code of Ethics and the guidelines for their institution. 
        \item For initial submissions, do not include any information that would break anonymity (if applicable), such as the institution conducting the review.
    \end{itemize}

\item {\bf Declaration of LLM usage}
    \item[] Question: Does the paper describe the usage of LLMs if it is an important, original, or non-standard component of the core methods in this research? Note that if the LLM is used only for writing, editing, or formatting purposes and does \emph{not} impact the core methodology, scientific rigor, or originality of the research, declaration is not required.
    %this research? 
    \item[] Answer: \answerYes{} % Replace by \answerYes{}, \answerNo{}, or \answerNA{}.
    \item[] Justification: Originally, we did not use an LLM for our research (apart from formatting, phrasing, etc). However, during our rebuttal period (also mentioned on Page 13), we asked 5.6 SOL for a combinatorial implementation of Minner, in order to better craft a rebuttal for our reviewers. It turned out that Li and Li's variance proof for CMinHash had a slight error. We included the counter example in the paper. 
    \item[] Guidelines:
    \begin{itemize}
        \item The answer \answerNA{} means that the core method development in this research does not involve LLMs as any important, original, or non-standard components.
        \item Please refer to our LLM policy in the NeurIPS handbook for what should or should not be described.
    \end{itemize}

\end{enumerate}

\end{document}